\newcommand{\inp}{\leftarrow}
\newcommand{\out}{\rightarrow}
\newcommand\ours{Directed Graph Neural Network}
\newcommand\oursacro{Dir-GNN}
\newcommand{\R}{\mathbb{R}}
\newcommand\m{m}
\newcommand\nnodes{n}
\newcommand\nedges{m}
\newcommand\nfeatures{d}
\newcommand\nclass{C}
\newcommand\ilayer{k}
\newcommand\nlayers{K}
\newcommand\effhom{h^{(\text{eff})}}
\newcommand*{\ldblbrace}{\{\mskip-6mu\{}
\newcommand*{\rdblbrace}{\}\mskip-6mu\}}
\theoremstyle{plain}
\newtheorem{theorem}{Theorem}[section]
\newtheorem{lemma}[theorem]{Lemma}
\theoremstyle{definition}
\newtheorem{definition}[theorem]{Definition}
\theoremstyle{remark}
\title{Edge Directionality Improves Learning on Heterophilic Graphs}
\author[E. Rossi et al.]{%
  Emanuele Rossi \\
  Imperial College London \\
  \And
  Bertrand Charpentier \\
  Technical University of Munich \\
  \And
  Francesco Di Giovanni \\
  University of Cambridge \\
  \And
  Fabrizio Frasca \\
  Imperial College London \\
  \And
  Stephan Günnemann \\
  Technical University of Munich \\
  \And
  Michael Bronstein \\
  University of Oxford \\
}
\begin{document}

\maketitle

\begin{abstract}

Graph Neural Networks (GNNs) have become the de-facto standard tool for modeling relational data. However, while many real-world graphs are \emph{directed}, the majority of today's GNN models discard this information altogether by simply making the graph undirected.
The reasons for this are historical: 1) many early variants of spectral GNNs explicitly required undirected graphs, and 2) the first benchmarks on homophilic graphs did not find significant gain from using direction. 
In this paper, we show that in \emph{heterophilic} settings, treating the graph as directed increases the {\em effective homophily} of the graph, suggesting a potential gain from the correct use of directionality information.  
To this end, we introduce \ours{} (\oursacro{}), a novel general framework for deep learning on directed graphs. \oursacro{} can be used to extend \emph{any} Message Passing Neural Network (MPNN) to account for edge directionality information by performing separate aggregations of the \emph{incoming} and \emph{outgoing} edges. We prove that \oursacro{} matches the expressivity of the Directed Weisfeiler-Lehman test, exceeding that of conventional MPNNs. In extensive experiments, we validate that while our framework leaves performance unchanged on homophilic datasets, it leads to large gains over base models such as GCN, GAT and GraphSage on heterophilic benchmarks, outperforming much more complex methods and achieving new state-of-the-art results. 
The code for the paper can be found at \url{https://github.com/emalgorithm/directed-graph-neural-network}.
\end{abstract}

\section{Introduction}
\label{sec:introduction}

Graph Neural Networks (GNNs) have demonstrated remarkable success across a wide range of problems and fields~\cite{zhou2018gnn}. Most GNN models, however, assume that the input graph is undirected~\cite{kipf2016semi,velivckovic2017graph,hamilton2017inductive}, despite the fact that many real-world networks, such as citation and social networks, are inherently directed. Applying GNNs to directed graphs often involves either converting them to undirected graphs or only propagating information over incoming (or outgoing) edges, both of which may discard valuable information crucial for downstream tasks.

\begin{figure}[t!]
\centering
\vspace{-7mm}
\begin{subfigure}[b]{0.4\textwidth}
     \centering
     \includegraphics[width=\linewidth]{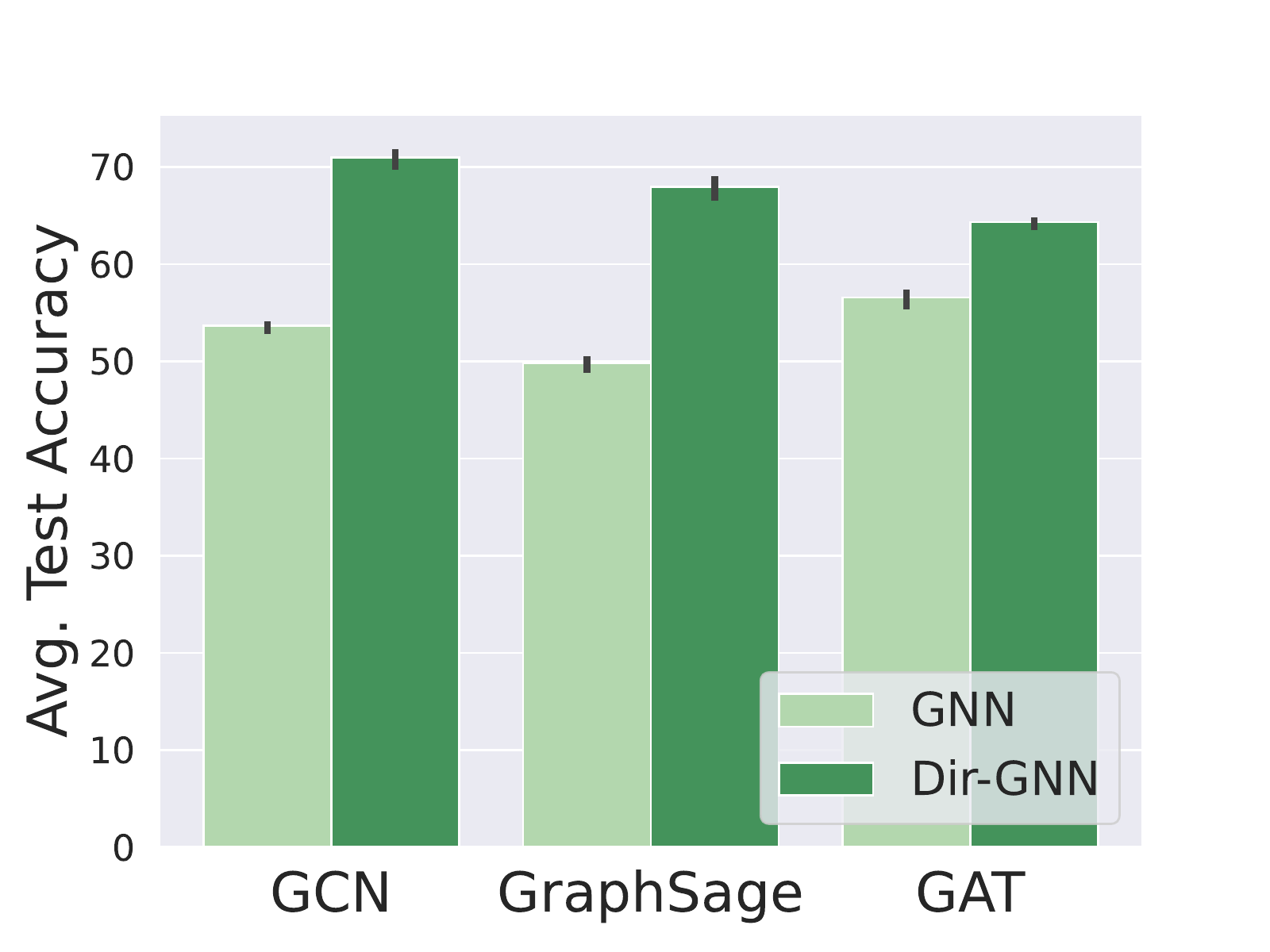}
     \caption{Heterophilic Graphs}
     \label{fig:aggregate_results_heterophilic}
\end{subfigure}
\begin{subfigure}[b]{0.375\textwidth}
     \centering
     \includegraphics[width=\linewidth,trim={1cm 0 0 0},clip]{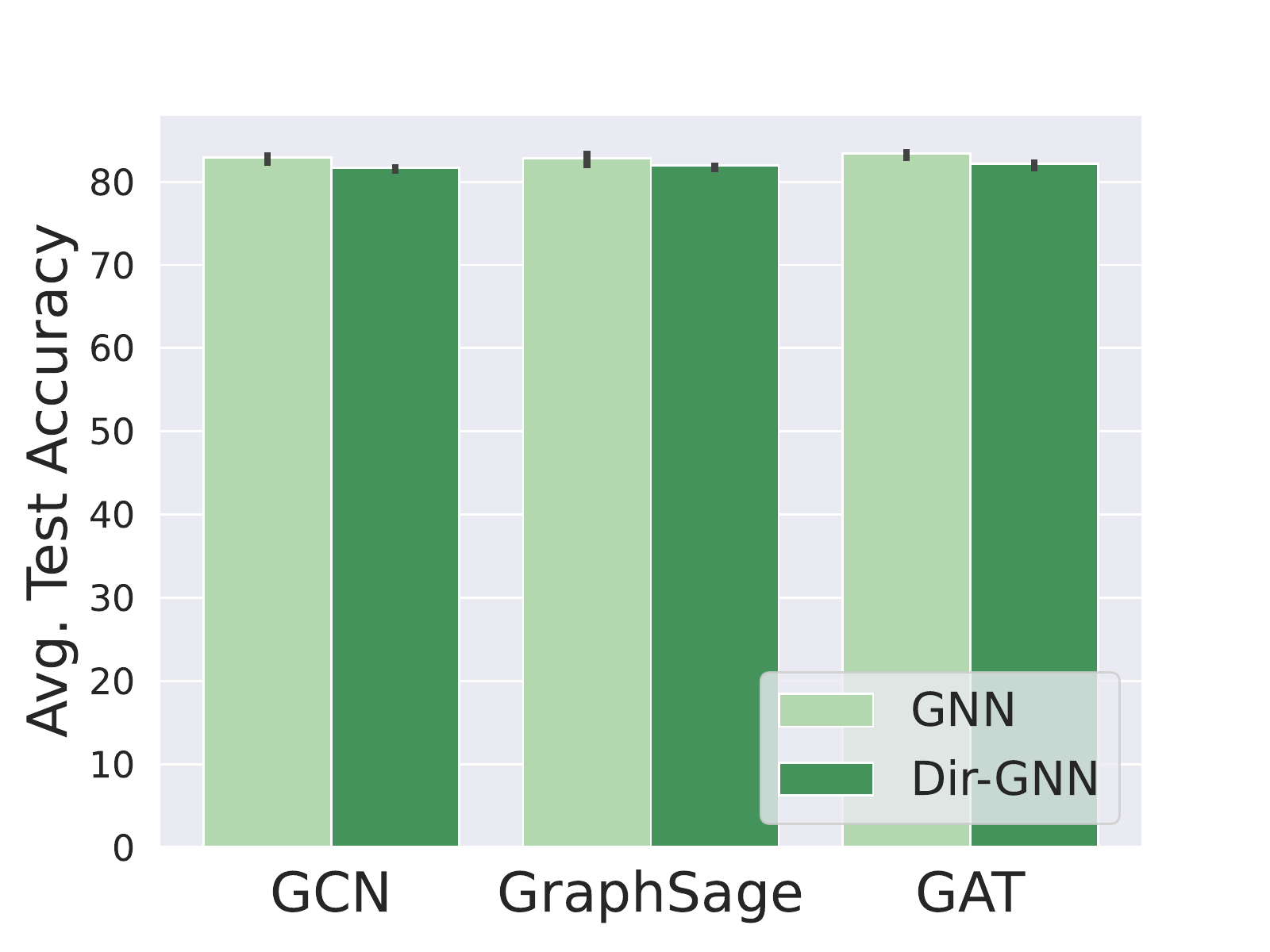}
     \caption{Homophilic Graphs}
     \label{fig:aggregate_results_homophilic}
\end{subfigure}
\vspace{-1mm}
\caption{Extending popular GNN architectures with our \oursacro{} framework to incorporate edge-directionality information brings large gains ($10\%$ to $15\%$) on heterophilic datasets (left), while leaving performance mostly unchanged on homophilic datasets (right). The plots illustrate the average performance over all datasets, while the full results are presented in \cref{tab:direction_ablation}.}
\label{fig:aggregate_results}
\vspace{-4mm}
\end{figure}

We believe the dominance of undirected graphs in the field is rooted in two ``original sins'' of GNNs.   
First, undirected graphs have symmetric Laplacians which admit orthogonal eigendecomposition. Orthogonal Laplacian eigenvectors act as a natural generalization of the Fourier transform and allow to express graph convolution operations in the Fourier domain. 
Since some of the early graph neural networks originated from the field of graph signal processing \cite{shuman2013emerging,sandryhaila2013discrete}, the undirected graph assumption was necessary for spectral GNNs \cite{bruna2013spectral,defferrard2016convolutional,kipf2016semi} to be properly defined. 
With the emergence of spatial GNNs, unified with the message-passing framework (MPNNs~\cite{gilmer2017neural}), 
this assumption was not strictly required anymore, as MPNNs can easily be applied to directed graphs by propagating over the directed adjacency, resulting however in information being propagated only in a single direction, at the risk of discarding useful information from the opposite one. However, early works empirically observed that making the graph undirected consistently leads to better performance on established node-classification benchmarks, which historically have been mainly \textit{homophilic} graphs such as Cora and Pubmed~\cite{sen:aimag08}, where neighbors tend to share the same label. Consequently, converting input graphs to undirected ones has become a standard part of the dataset preprocessing pipeline, to the extent that the popular GNN library PyTorch-Geometric~\cite{fey2019graph} includes a general utility function that automatically makes graphs undirected when loading datasets\footnote{ \href{https://github.com/pyg-team/pytorch_geometric/blob/66b17806b1f4a2008e8be766064d9ef9a883ff03/torch_geometric/io/npz.py\#L26}{This Pytorch-Geometric routine} is used to load datasets stored in an npz format. It makes some directed datasets, such as \href{https://github.com/pyg-team/pytorch_geometric/blob/6fa2ee7bfef32311df73ca78266c18c4449a7382/torch_geometric/datasets/citation_full.py\#L99}{Cora-ML} and \href{https://github.com/pyg-team/pytorch_geometric/blob/6fa2ee7bfef32311df73ca78266c18c4449a7382/torch_geometric/datasets/citation_full.py\#L99}{Citeseer-Full}, automatically undirected without any option to get the directed version instead.}.

The key observation in this paper is that while accounting for edge directionality indeed does not help in homophilic graphs, it can bring extensive gains in \textit{heterophilic} settings (Fig.~\ref{fig:aggregate_results}), where neighbors tend to have different labels.
In the rest of the paper, we study \emph{why} and \emph{how} to use directionality to improve learning on heterophilic graphs. 

\textbf{Contributions.} 
Our contributions are the following:
\begin{itemize}[leftmargin=*]
    \setlength\itemsep{-0em}
    \item We show that considering the directionality of a graph substantially increases its \textit{effective homophily} in heterophilic settings, with negligible or no impact on homophilic settings (see~\cref{sec:directed_heterophily}). 
    \item We propose a novel and generic \ours{} framework (\oursacro{}) which extends \emph{any} MPNN to work on directed graphs by performing separate aggregations of the \emph{incoming} and \emph{outgoing} edges. Moreover, we show that \oursacro{} leads to more homophilic aggregations compared to their undirected counterparts (see~\cref{sec:method}). 
    \item Our theoretical analysis establishes that \oursacro{} is as expressive as the Directed Weisfeiler-Lehman test, while being \emph{strictly more expressive than MPNNs} (see ~\cref{sec:expressivity}).
    \item We empirically validate that augmenting popular GNN architectures with the \oursacro{} framework yields \emph{large improvements on heterophilic benchmarks}, achieving \emph{state-of-the-art results} and outperforming even more complex methods specifically designed for such settings. Moreover, this enhancement does not negatively impact the performance on homophilic benchmarks (see~\cref{sec:experiments}).
\end{itemize}
\section{Background}
\label{sec:background}

We consider a directed graph $G=(V, E)$ with a node set $V$ of $\nnodes$ nodes, and an edge set $E$ of $\nedges$ edges. We define its respective directed adjacency matrix $\mathbf{A} \in \{0, 1\}^{\nnodes \times \nnodes}$ where $a_{ij}=1$ if $(i,j) \in E$ and zero otherwise, its respective undirected adjacency matrix $\mathbf{A}_u$ where $(a_{u})_{ij}=1$ if $(i,j) \in E$ or $(j, i) \in E$ and zero otherwise.
In this paper, we focus on the task of (semi-supervised) node classification on an attributed graph $G$ with node features arranged into the $\nnodes \times \nfeatures$ matrix $\mathbf{X}$ 
 and node labels $y_i \in \{1, ..., \nclass\}$.

\subsection{Homophily and Heterophily in Undirected Graphs} \label{sec:heterophily_in_gnns}

In this Section, we first review the heterophily metrics for undirected graphs, while in Sec.~\ref{sec:directed_heterophily} we propose heterophily metrics adapted to directed graphs. Most GNNs are based on the \textit{homophily assumption} for undirected graphs, i.e., that neighboring nodes tend to share the same labels. 
While a reasonable assumption in some settings, it turns out not to be true in many important applications such as gender classification on social networks or fraudster detection on e-commerce networks. 

\textbf{Homophily metrics.} Several metrics have been proposed to measure homophily on undirected graphs. {\em Node homophily} is defined as
\begin{equation}
h = \frac{1}{|V|} \sum_{i \in V} \frac{\sum_{j:(i,j)\in E}I[y_i = y_j]}{d_i}    
\end{equation}
where $I[y_i = y_j]$ is the indicator function with value 1 if $y_i = y_j$ or zero otherwise. Intuitively, node homophily measures the fraction of neighbors with the same label as the node itself, averaged across all nodes.
However, since heterophily is a complex phenomenon which is hard to capture with only a single scalar, a better representation of a graph's homophily is the $C\times C$  \textit{class compatibility matrix} $\mathbf{H}$~\cite{zhu2020beyond},  capturing the fraction of edges from nodes with label $k$ to nodes with label $l$: 
\begin{equation*}
h_{kl} = \frac{|(i,j) \in E : y_i=k \land y_j=l|}{|(i,j) \in E : y_i=k|}.    
\end{equation*}

\emph{Homophilic} datasets are expected to have most of the mass of their compatibility matrices concentrated in the diagonal, as most of the edges are between nodes of the same class (e.g. see Citeseer-Full in~\cref{fig:citeseer_chameleon_compat}). Conversely, \emph{heterophilic} datasets will have most of the mass away from the diagonal of the compatibility matrix (e.g. see Chameleon in~\cref{fig:citeseer_chameleon_compat}).

\subsection{Message Passing Neural Network}
In this Section, we first review the Message Passing Neural Network (MPNN) paradigm for undirected graphs, while in Sec.~\ref{sec:method} we extend this formalism to directed graphs. An MPNN is a parametric model which \emph{iteratively} applies  aggregation maps $\mathrm{AGG}^{(k)}$ and combination maps $\mathrm{COM}^{(k)}$ to compute embeddings $\mathbf{x}_{i}^{(k)}$ for node $i$ based on messages $\mathbf{m}^{(k)}_{i}$ containing information on its neighbors. Namely, the $k$-th layer of an MPNN is given by
\begin{align}
\begin{split}
\mathbf{\m}^{(\ilayer)}_{i} &=  \mathrm{AGG}^{(\ilayer)}\left(\ldblbrace (\mathbf{x}_{j}^{(\ilayer-1)}, \mathbf{x}_{i}^{(\ilayer-1)}):\, (i,j)\in E \rdblbrace\right) \\
\mathbf{x}_{i}^{(\ilayer)}   &=  \mathrm{COM}^{(\ilayer)}\left(\mathbf{x}_i^{(\ilayer-1)}, \mathbf{\m}^{(\ilayer)}_i\right) \label{eq:mpnn}
\end{split}
\end{align}
\noindent where $\ldblbrace\cdot\rdblbrace$ is a multi-set.
The aggregation maps $\mathrm{AGG}^{(k)}$ and the combination maps $\mathrm{COM}^{(k)}$ are {\em learnable} (usually a small neural network) and their different implementations  result in specific architectures (e.g.  graph convolutional neural networks (GCN) use linear aggregation, graph attention networks (GAT) use attentional layers, etc.). After the last layer $\nlayers$, the node representation $\mathbf{x}_i^{(\nlayers)}$ is mapped into the $\nclass$-probability simplex via a (learnable) decoding step, often given by an MLP. Independent of the choice of $\mathrm{AGG}$ and $\mathrm{COM}$, all MPNNs only send messages along the edges of the graph, which make them particularly suitable for tasks where edges do encode a notion of similarity -- as it is the case when adjacent nodes in the graph often share the same label (homophilic). Conversely, MPNNs tend to struggle in the scenario where they need to separate 
a node embedding from that of its neighbours \cite{nt2019revisiting}, often a challenging problem that has gained attention in the community and which we discuss in detail next.

\section{Heterophily in Directed Graphs}
\label{sec:directed_heterophily}

In this Section, we discuss how accounting for directionality can be particularly helpful for dealing with heterophilic graphs. By leveraging the directionality information, we argue that even standard MPNNs that are traditionally thought to struggle in the heterophilic regime, can in fact perform extremely well. 

\textbf{Weighted homophily metrics.} First, we extend the homophily metrics introduced in Section \ref{sec:heterophily_in_gnns} to account for  directed edges and higher-order neighborhoods. 
Given a possibly directed and weighted $n\times n$ message-passing matrix $\mathbf{S}$, we define the \textit{weighted node homophily} as
\begin{equation}
h(\mathbf{S}) = \frac{1}{|V|} \sum_{i \in V} \frac{\sum_{j \in V} s_{ij}  I[y_i = y_j]}{\sum_{j \in V} s_{ij}}
\end{equation}
\noindent Accordingly, by taking $\mathbf{S} = \mathbf{A}$ and $\mathbf{S} = \mathbf{A}^\top$ respectively, we can compute the node homophily based on outgoing or incoming edges. Similarly, we can also take $\mathbf{S}$ to be any {\em weighted} 2-hop matrix associated with a directed graph (see details below) and compute its node homophily. 

We can also extend the construction to edge-computations by defining the $C\times C$ \textit{weighted compatibility matrix} $\mathbf{H}(\mathbf{S})$ of a message-passing matrix $\mathbf{S}$ as 
\begin{equation}
h_{kl}(\mathbf{S}) = \frac{\sum_{i,j \in V : y_i=k \land y_j=l} s_{ij}}{\sum_{i,j \in V  : y_i=k} s_{ij}}
\end{equation}
\noindent As above, one can take $\mathbf{S} = \mathbf{A}$ or $\mathbf{S} = \mathbf{A}^\top$ to derive the compatibility matrix associated with the out and in-edges, respectively.

\textbf{Effective homophily.} Stacking multiple layers of a GNN effectively corresponds to taking powers of diffusion matrices, resulting in message propagation over higher-order hops. \citet{zhu2020beyond} noted that for heterophilic graphs, the 2-hop tends to be more homophilic than the 1-hop. This phenomenon of similarity within ``friends-of-friends'' has been widely observed and is commonly referred to as monophily~\cite{monophily}. If higher-order hops exhibit increased homophily, exploring the graph through layers can prove beneficial for the task. Consequently, we introduce the concept of \textit{effective homophily} as the maximum weighted node homophily observable at any hop of the graph.

For directed graphs, there exists an exponential number of $k$-hops. For instance, four 2-hop matrices can be considered: the squared operators $\mathbf{A}^2$ and $(\mathbf{A}^\top)^2$, which correspond to following the same forward or backward edge direction twice, as well as the {\em non}-squared operators $\mathbf{A}\mathbf{A}^\top$ and $\mathbf{A}^\top\mathbf{A}$, representing the forward/backward and backward/forward edge directions. Given a graph $G$, we define its effective homophily as follows:
\begin{equation}
\effhom = \max_{k \geq 1}
\max_{\mathbf{C} \in \mathcal{B}^k} h(\mathbf{C})
\end{equation}

where $\mathcal{B}^k$ denotes the set of all $k$-hop matrices for a graph. If $G$ is undirected, $\mathcal{B}^k$ contains only $\mathbf{A}^k$. In our empirical analysis, we will focus on the 2-hop matrices, as computing higher-order $k$-hop matrices becomes intractable for all but the smallest graphs~\footnote{This is attributed to the fact that while $\mathbf{A}$ is typically quite sparse, $\mathbf{A}^k$ grows increasingly dense as $k$ increases, quickly approaching $\nnodes^{2}$ non-zero entries.}. 

\begin{table*}[t]
\begin{center}
\begin{small}
\resizebox{.8\textwidth}{!}{\begin{tabular}{lcccc:ccccc:r}
\toprule
           {} & {} &  $\mathbf{A}_u$ &  $\mathbf{A}_u^2$ &  $\effhom_u$ &    $\mathbf{A}$ &  $\mathbf{A}^\top$ &  $\mathbf{A}^\top \mathbf{A}$ &  $\mathbf{A}\mathbf{A}^\top$ &  $\effhom_d$ & $\effhom_\text{gain}$ \\
\midrule
\multirow{3}{5em}{Homophilic} & \textsc{citeseer-full} &  0.958 &    0.951 &  0.958 &  0.954 &  0.959 &   0.971 &   0.951 &  0.971 &         1.36\% \\
& \textsc{cora-ml}       &  0.810 &    0.767 &  0.810 &  0.808 &  0.833 &   0.803 &   0.779 &  0.833 &         2.84\% \\
& \textsc{ogbn-arxiv}    &  0.635 &    0.548 &  0.635 &  0.632 &  0.675 &   0.658 &   0.556 &  0.675 &          6.3\% \\
\hdashline
\multirow{7}{5em}{Heterophilic} 
& \textsc{chameleon}     &  0.248 &    0.331 &  0.331 &  0.249 &  0.274 &   0.383 &   0.335 &  0.383 &        15.71\% \\
& \textsc{squirrel}      &  0.218 &    0.252 &  0.252 &  0.219 &  0.210 &   0.257 &   0.258 &  0.258 &         2.38\% \\
& \textsc{arxiv-year}    &  0.289 &    0.397 &  0.397 &  0.310 &  0.403 &   0.487 &   0.431 &  0.487 &        22.67\% \\
& \textsc{snap-patents}  &  0.221 &    0.372 &  0.372 &  0.266 &  0.271 &   0.478 &   0.522 &  0.522 &        40.32\% \\
& \textsc{roman-empire}  &  0.046 &    0.365 &  0.365 &  0.045 &  0.042 &   0.535 &   0.609 &  0.609 &        66.85\% \\
\bottomrule
\end{tabular}}
\end{small}
\end{center}
\caption{Weighted node homophily for different diffusion matrices, and effective homophily for both undirected ($\effhom_u$) and directed graph ($\effhom_d$). The last column reports the gain in effective homophily obtained by using the directed graph as opposed to the undirected graph.}
\label{tab:node_homophilies}
\end{table*}

\begin{figure}[t!]
\centering
\vspace{-7mm}
\begin{subfigure}[b]{0.44\textwidth}
     \centering
     \includegraphics[width=\linewidth]{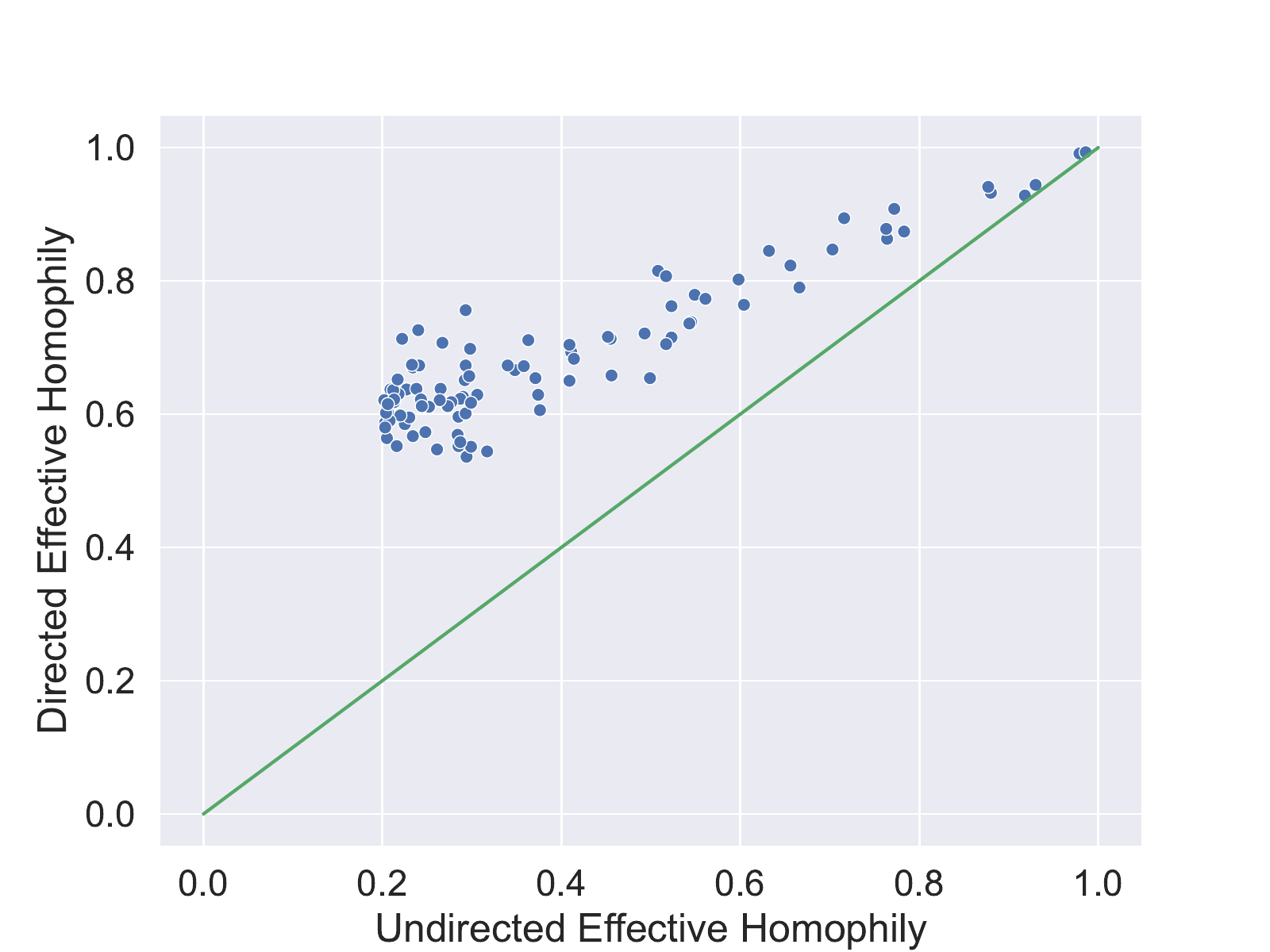}
     \caption{}
     \label{fig:synthetic_effective_homophily}
\end{subfigure}
\hspace{5mm}
\begin{subfigure}[b]{0.44\textwidth}
     \centering
     \includegraphics[width=\linewidth]{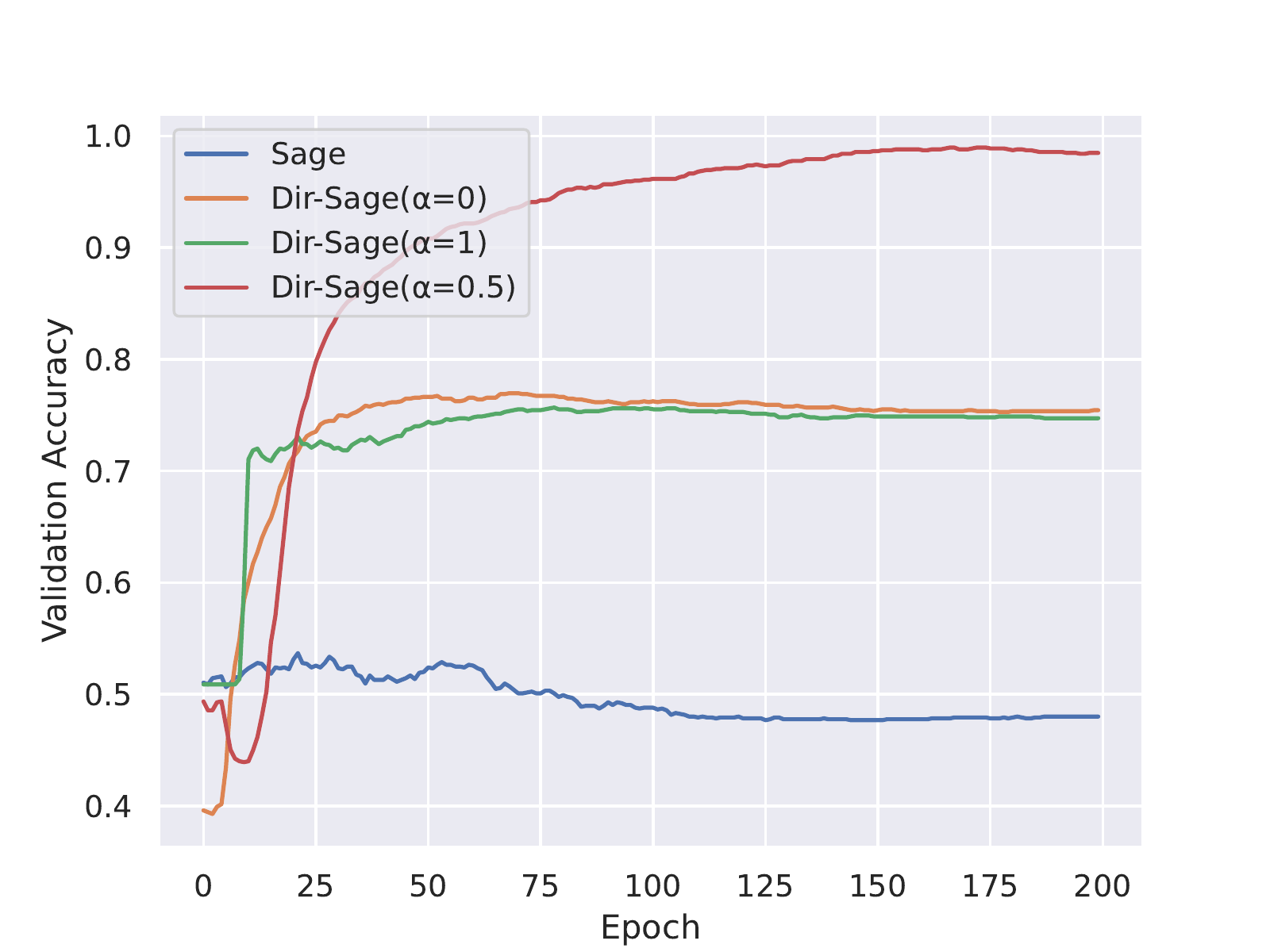}
     \caption{}
    \label{fig:synthetic_task}
\end{subfigure}

\caption{In our synthetic experiments, we observe the following: (a) the effective homophily of directed graphs is consistently higher compared to their undirected counterparts. Interestingly, this gap widens for graphs that are less homophilic. (b) When examining the performance of GraphSage and its \oursacro{} extensions on a synthetic task requiring directionality information, only Dir-Sage($\alpha$=0.5), which utilizes information from both directions, is capable of solving the task.}
\label{fig:test1}
\end{figure}

\looseness=-1
\textbf{Leveraging directionality to enhance effective homophily.}
We observe that converting a graph from directed to undirected results in lower effective homophily for heterophilic graphs, while the impact on homophilic graphs is negligible (refer to the last column of~\cref{tab:node_homophilies}). Specifically, $\mathbf{A}\mathbf{A}^\top$ and $\mathbf{A}^\top\mathbf{A}$ emerge as the most homophilic diffusion matrices for heterophilic graphs. In fact, the average relative gain of effective homophily, $\effhom_\text{gain}$, when using directed graphs compared to undirected ones is only around $3\%$ for homophilic datasets, while it is almost $30\%$ for heterophilic datasets. We further validate this observation on synthetic directed graphs exhibiting various levels of homophily, generated through a modified preferential attachment process (see \cref{sec:effective_homophily_synthetic} for more details). \cref{fig:synthetic_effective_homophily} displays the results: the directed graph consistently demonstrates higher effective homophily compared to its undirected counterpart, with the gap being particularly prominent for lower node homophily levels. The minimal effective homophily gain on homophilic datasets further substantiates the traditional practice of using undirected graphs for benchmarking GNNs, as the datasets were predominantly homophilic until recently. 


\textbf{Real-world example.} We illustrate the concept of effective homophily in heterophilic directed graphs by taking the concrete task of predicting the publication year of papers based on a directed citation network such as Arxiv-Year~\cite{lim2021large}. 
In this case, the different 2-hop neighbourhoods have very different semantics: the diffusion operator $(\mathbf{A}^2)_i$ represents papers that are cited by those papers that paper $i$ cites. As these 2-hop neighboring papers were published further in the past relative to paper $i$, they do not offer much information about the publication year of $i$.
On the other hand, $(\mathbf{A}^\top \mathbf{A})_i$ represents papers that share citations with paper $i$. Papers cited by the same sources are more likely to have been published in the same period, so the diffusion operator $\mathbf{A}^\top \mathbf{A}$ is expected to be more homophilic.
The undirected 2-hop operator $\mathbf{A}_u^2 = (\frac{1}{2} (\mathbf{A} + \mathbf{A}^\top))^ 2 = \frac{1}{4} (\mathbf{A}^2 + (\mathbf{A}^\top)^2 + \mathbf{A}\mathbf{A}^\top + \mathbf{A} \mathbf{A}^\top)$ is the average of the four directed 2-hops. Therefore, the highly homophilic matrix $\mathbf{A}^\top \mathbf{A}$ is diluted by the inclusion of $(\mathbf{A}^2)$, leading to a less homophilic operator overall.


\section{\ours{}}
\label{sec:method}

In this Section, we extend the class of MPNNs to directed graphs, and refer to such generalization as \ours{} (\oursacro{}). We follow the scheme in \Cref{eq:mpnn}, meaning that the update of a node feature is the result of a {\em combination} of its previous state and an {\em aggregation} over its neighbours. Crucially though, the characterization of neighbours now has to account for the edge-directionality. Accordingly, given a node $i\in V$, we perform separate aggregations over the in-neighbours ($j \rightarrow i$) and the out-neighbours ($i\rightarrow j$) respectively:
\begin{align}\label{eq:directed-mpnn}
\begin{split}
    \mathbf{\m}^{(k)}_{i,\inp} &= \mathrm{AGG}^{(k)}_{\inp}\left(\ldblbrace (\mathbf{x}_{j}^{(\ilayer-1)}, \mathbf{x}_{i}^{(\ilayer-1)}) : \, (j,i)\in E\rdblbrace \right) \\  
    \mathbf{\m}^{(k)}_{i,\out} &= \mathrm{AGG}^{(k)}_{\out}\left(\ldblbrace (\mathbf{x}_{j}^{(\ilayer-1)}, \mathbf{x}_{i}^{(\ilayer-1)}) : \, (i,j)\in E \rdblbrace  \right) \\
    \mathbf{x}_{i}^{(k)} &= \mathrm{COM}^{(k)}\left(\mathbf{x}_{i}^{(k-1)},\mathbf{\m}^{(k)}_{i,\inp}, \mathbf{\m}^{(k)}_{i,\out}\right).
\end{split}
\end{align}
\noindent The idea behind our framework is that given {\em any} MPNN, we can readily adapt it to the directed case by choosing how to aggregate over both directions. For this purpose, we replace the neighbour aggregator $\mathrm{AGG}^{(k)}$ with two separate in- and out-aggregators $\mathrm{AGG}^{(k)}_{\inp}$ and $\mathrm{AGG}^{(k)}_{\out}$, which can have {\em independent} sets of parameters for the two aggregation phases and, possibly, different normalizations as discussed below -- however the functional expression in both cases stays the same. As we will show, accounting for both directions separately is fundamental for both the expressivity of the model (see \cref{sec:expressivity}) and its empirical performance (see \Cref{sec:synthetic_experiments}).

\textbf{Extension of common architectures.}
To make our discussion more explicit, we describe extensions of popular MPNNs where the aggregation map is computed by \smash{$\mathbf{\m}^{(k)}_i = (\mathbf{S}\mathbf{x}^{(k-1)})_i$}, where $\mathbf{S}\in\R^{n\times n}$ is a message-passing matrix. In GCN~\cite{kipf2016semi}, \smash{$\mathbf{S} = \mathbf{D}^{-1/2}\mathbf{A}_u\mathbf{D}^{-1/2}$}, where $\mathbf{D}$ is the degree matrix of the undirected graph. In the case of directed graphs, two message-passing matrices $\mathbf{S}_\inp$ and $\mathbf{S}_\out$ are required for in- and out-neighbours respectively. Additionally, the normalization is slightly more subtle, since we now have two different diagonal degree matrices $\mathbf{D}_{\inp}$ and $\mathbf{D}_{\out}$ containing the in-degrees and out-degrees respectively. Accordingly, we propose a normalization of the form $\mathbf{S}_{\out} = \mathbf{D}_{\out}^{-1/2}\mathbf{A}\mathbf{D}_{\inp}^{-1/2}$ i.e. $(\mathbf{S}_{\out})_{ij} = a_{ij}/\sqrt{d^{\out}_{i}d^{\inp}_j}$. To motivate this choice, note that the normalization modulates the aggregation based on the out-degree of $i$ and the in-degree of $j$ as one would expect given that we are computing a message going from $i$ to $j$. We can then take $\mathbf{S}_{\inp} = \mathbf{S}_{\out}^\top$ and write the update at layer $k$ of Dir-GCN as 
\begin{equation}\label{eq:dir-gcn}
    \mathbf{X}^{(k)} =  \sigma\left(\mathbf{S}_{\out}\mathbf{X}^{(k-1)}\mathbf{W}^{(k)}_{\out} + \mathbf{S}^\top_{\out}\mathbf{X}^{(k-1)}\mathbf{W}^{(k)}_{\inp}\right), 
\end{equation}
\noindent for learnable channel-mixing matrices $\mathbf{W}^{(k)}_{\out}, \mathbf{W}^{(k)}_{\inp}$ and with $\sigma$ a pointwise activation map. Finally, we note that in our implementation of \oursacro{} we use an additional learnable or tunable parameter $\alpha$ allowing the framework to weight one direction more than the other (a convex combination), depending on the dataset. \oursacro{} extensions of GAT~\cite{velivckovic2017graph} and GraphSAGE~\cite{hamilton2017inductive} can be found in~\cref{app:extensions}. 

\textbf{\oursacro{} leads to more homophilic aggregations.} Since our main application amounts to relying on the graph-directionality to mitigate heterophily, here we comment on how information is iteratively propagated in a \oursacro{}, generally leading to an aggregation scheme beneficial on most real-world directed heterophilic graphs. We focus on the Dir-GCN formulation, however the following applies to any other \oursacro{} up to changing the message-passing matrices. Consider a 2-layer Dir-GCN as in \Cref{eq:dir-gcn}, and let us remove the pointwise activation $\sigma$~\footnote{Note that this does not affect our discussion, in fact any observation can be extended to the non-linear case by computing the Jacobian of node features as in \citet{topping2021understanding}.}. Then, the node representation can be written as 
\begin{equation}
\resizebox{1\hsize}{!}{$\mathbf{X}^{(2)} = \mathbf{A}_{\out}^{2}\mathbf{X}^{(0)}\mathbf{W}^{(1)}_{\out}\mathbf{W}^{(2)}_{\out} + (\mathbf{A}_{\out}^\top)^{2}\mathbf{X}^{(0)}\mathbf{W}^{(1)}_{\inp}\mathbf{W}^{(2)}_{\inp} \notag + \mathbf{A}_{\out}\mathbf{A}_{\out}^\top\mathbf{X}^{(0)}\mathbf{W}^{(1)}_{\inp}\mathbf{W}^{(2)}_{\out} + \mathbf{A}^{\top}_{\out}\mathbf{A}_{\out}\mathbf{X}^{(0)}\mathbf{W}^{(1)}_{\out}\mathbf{W}^{(2)}_{\inp}.$}
\end{equation}
\noindent We observe that when we aggregate information over multiple layers, 
the final node representation is derived by also computing convolutions over 2-hop matrices $\mathbf{A}_{\out}^\top\mathbf{A}_{\out}$ and $\mathbf{A}_{\out}\mathbf{A}^{\top}_{\out}$. From the discussion in \Cref{sec:directed_heterophily}, we deduce that this framework may be more suited to handle heterophilic graphs since generally such 2-hop matrices are more likely to encode similarity than $\mathbf{A}_{\out}^{2}$, $(\mathbf{A}_{\out}^{\top})^{2}$ or $\mathbf{A}_u^{2}$ -- this is validated empirically on real-world datasets in \Cref{sec:experiments}.

\textbf{Advantages of two-directional updates.} We discuss the benefits of incorporating both directions in the layer update, as opposed to using a single direction. Although spatial MPNNs can be adapted to directed graphs by simply utilizing $\mathbf{A}$ instead of $\mathbf{A}_u$—resulting in message propagation only along out-edges—relying on a single direction presents three primary drawbacks. 
First, if the layer update only considers one direction, the exploration of the multi-hop neighbourhoods through powers of diffusion operators would not include the mixed terms $\mathbf{A}\mathbf{A}^\top$ and $\mathbf{A}^\top\mathbf{A}$, which have been shown to be particularly beneficial for heterophilic graphs in \Cref{sec:directed_heterophily}.
Second, by using only one direction we disregard the graph entirely for nodes where the out-degree is zero~\footnote{Or in-degree, depending on which direction is selected.}. This phenomenon frequently occurs in real-world graphs, as reported in~\cref{tab:zero_degrees}. Incorporating both directions in the layer update helps mitigate this problem, as it is far less common for a node to have both in- and out-degree to be zero, as also illustrated in~\cref{tab:zero_degrees}.
Third, limiting the update to a single direction reduces expressivity, as we discuss in~\cref{sec:expressivity}.

\textbf{Complexity.}
The complexity of \oursacro{} depends on the specific instantiation of the framework. Dir-GCN, Dir-Sage, and Dir-GAT maintain the same per-layer computational complexity as their undirected counterparts ($\mathcal{O}(md + nd^2)$ for GCN and GraphSage, and $\mathcal{O}(md^2)$ for GAT). However, they have twice as many parameters, owing to their separate weight matrices for in- and out-neighbors.

\subsection{Expressive power of \oursacro{}} \label{sec:expressivity}
It is a well known result that MPNNs are bound in expressivity by the 1-WL test, and that it is possible to construct MPNN models which are as expressive as the 1-WL test~\cite{DBLP:conf/iclr/XuHLJ19}. In this section, we show that \oursacro{} is the optimal way to extend MPNNs to directed graphs. We do so by proving that \oursacro{} models can be constructed to be as expressive as an extension of the 1-WL test to directed graphs~\cite{Grohe2021ColorRA}, referred to as \textit{D-WL} (for a formal definition, see~\cref{sec:appendix_wl}). Additionally, we illustrate its greater expressivity over more straightforward approaches, such as converting the graph to its undirected form and utilizing a standard MPNN (\textit{MPNN-U}) or applying an MPNN that propagates solely along edge direction (\textit{MPNN-D})\footnote{The same results apply to a model which sends messages only along in-edges.}. Formal statements for the theorems in this section along with their proofs can be found in~\cref{app:expressivity_analysis}. 
%
%

\begin{theorem}[Informal]\label{thm:dirgnn-as-expressive-as-d-wl}
    \oursacro{} is as expressive as D-WL if $\mathrm{AGG}^{(k)}_{\out}$, $\mathrm{AGG}^{(k)}_{\inp}$, and $\mathrm{COM}^{(k)}$ are injective for all $k$. 
\end{theorem}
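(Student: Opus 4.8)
The plan is to prove the two half-statements that together yield ``as expressive as'': (i) \emph{every} \oursacro{} is upper-bounded by D-WL, and (ii) under the injectivity hypothesis a \oursacro{} is also lower-bounded by D-WL, so that the two induce exactly the same partition of the nodes (and, by the usual readout/pooling argument, of graphs) at every layer. Throughout I assume that \oursacro{} and D-WL start from the same initialization, i.e.\ $\mathbf{x}^{(0)}_i$ is a function of the initial colour $c^{(0)}(i)$ with $\mathbf{x}^{(0)}_i=\mathbf{x}^{(0)}_j \iff c^{(0)}(i)=c^{(0)}(j)$ (both read off the input node features/labels), and I use the fact that the D-WL update $c^{(k)}(i)=\mathrm{HASH}\big(c^{(k-1)}(i),\, \ldblbrace c^{(k-1)}(j):(j,i)\in E\rdblbrace,\, \ldblbrace c^{(k-1)}(j):(i,j)\in E\rdblbrace\big)$ uses, by definition, an injective $\mathrm{HASH}$ that separately reads the in- and out-neighbour colour multisets.

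For (i) I would show by induction on $k$ that $c^{(k)}(i)=c^{(k)}(j)\Rightarrow \mathbf{x}^{(k)}_i=\mathbf{x}^{(k)}_j$ for all $i,j$; the base case is the initialization assumption. For the inductive step, injectivity of $\mathrm{HASH}$ forces $c^{(k-1)}(i)=c^{(k-1)}(j)$ together with equality of the incoming and of the outgoing colour multisets; the induction hypothesis upgrades these to $\mathbf{x}^{(k-1)}_i=\mathbf{x}^{(k-1)}_j$ and to equality of the corresponding feature multisets (the map ``colour $\mapsto$ feature'' is well defined by the hypothesis). Since within each aggregated multiset the ``self'' coordinate $\mathbf{x}^{(k-1)}_i$ is constant and is the same for $i$ and $j$, this gives $\mathbf{\m}^{(k)}_{i,\inp}=\mathbf{\m}^{(k)}_{j,\inp}$ and $\mathbf{\m}^{(k)}_{i,\out}=\mathbf{\m}^{(k)}_{j,\out}$, and feeding these into $\mathrm{COM}^{(k)}$ yields $\mathbf{x}^{(k)}_i=\mathbf{x}^{(k)}_j$. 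Note that this direction needs no injectivity of the \oursacro{} maps. For (ii), now assuming $\mathrm{AGG}^{(k)}_{\inp}$, $\mathrm{AGG}^{(k)}_{\out}$ and $\mathrm{COM}^{(k)}$ injective, I would prove the converse implication $\mathbf{x}^{(k)}_i=\mathbf{x}^{(k)}_j\Rightarrow c^{(k)}(i)=c^{(k)}(j)$, again by induction: injectivity of $\mathrm{COM}^{(k)}$ peels off $\mathbf{x}^{(k-1)}_i=\mathbf{x}^{(k-1)}_j$, $\mathbf{\m}^{(k)}_{i,\inp}=\mathbf{\m}^{(k)}_{j,\inp}$, $\mathbf{\m}^{(k)}_{i,\out}=\mathbf{\m}^{(k)}_{j,\out}$; injectivity of the two $\mathrm{AGG}$ maps gives equality of the in- and out-neighbour feature multisets, which (using $\mathbf{x}^{(k-1)}_i=\mathbf{x}^{(k-1)}_j$ to drop the constant self-coordinate and the induction hypothesis to map features back to colours) gives equality of the in- and out-neighbour colour multisets and $c^{(k-1)}(i)=c^{(k-1)}(j)$; applying $\mathrm{HASH}$ closes the induction. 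Combining (i) and (ii), the \oursacro{} node-partition and the D-WL node-partition coincide at every layer; since D-WL reaches a stable partition after finitely many (at most $\nnodes$) rounds, a $\nlayers$-layer \oursacro{} with injective maps and $\nlayers$ at least the stabilization round is exactly as expressive as D-WL. Finally, to keep the statement non-vacuous I would invoke the standard fact (Xu et al.) that injective maps from bounded multisets over a countable feature domain into $\R^{\nfeatures}$ exist, so such \oursacro{} instantiations exist concretely (e.g.\ a directed GIN-style layer with two sum aggregators and an MLP combine).

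The main obstacles are not deep but are where a careless argument breaks. First, the aggregators act on multisets of \emph{pairs} $(\mathbf{x}_j,\mathbf{x}_i)$, not of neighbour features alone, so one must argue carefully that, because the self-coordinate is constant over the multiset and equal for the two compared nodes, equality/injectivity transfers cleanly to the multisets of neighbour features (and, in the converse direction, back). Second, one must maintain at each induction step that ``colour $\leftrightarrow$ feature'' is a genuine well-defined correspondence — this is exactly what the two implications bootstrap, so the two inductions have to be stated with the right quantifiers. Third, matching the initializations, and the fact that $\mathrm{HASH}$ and the \oursacro{} parameters are shared across all graphs in the considered family, is what makes the two partitions comparable in the first place. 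The only genuine difference from the classical $1$-WL/GIN proof is that every multiset-bookkeeping step must be carried out twice, once for the in-direction and once for the out-direction — which is precisely why separating the two aggregations (rather than symmetrizing or using a single direction) is necessary to reach D-WL.
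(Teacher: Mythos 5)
Your proposal is correct and follows essentially the same route as the paper's proof of the formal version (\cref{thm:dirgnn-as-expressive-as-d-wl-formal}): both directions are established as color-refinement implications by induction over layers/rounds, the upper bound needing only well-definedness of the \oursacro{} maps while the lower bound uses injectivity of $\mathrm{AGG}_\inp, \mathrm{AGG}_\out, \mathrm{COM}$ to peel back to D-WL's multiset inputs, with the same care about the ``pair'' structure of the aggregated multisets (constant self-coordinate) and the same appeal to GIN-style constructions for realizability. The only omission relative to the paper is that the formal statement additionally assumes an injective $\mathrm{READOUT}$ to lift node-level refinement to graph-level discrimination, which you gesture at (``the usual readout/pooling argument'') but do not state as a hypothesis.
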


A discussion of how a \oursacro{} can be parametrized to meet these conditions (similarly to what is done in~\citet{DBLP:conf/iclr/XuHLJ19}) can be found in~\cref{app:d-wl-proof}.

\begin{theorem}[Informal]\label{thm:dirgnn-strictly-more-expressive-than-mpnn}
\oursacro{} is strictly more expressive than both MPNN-U and MPNN-D.
\end{theorem}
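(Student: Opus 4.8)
The plan is to establish the theorem in two halves: (i) \oursacro{} is at least as expressive as each of MPNN-U and MPNN-D, and (ii) there are pairs of directed graphs that \oursacro{} separates but that each baseline cannot. For (i) I would give explicit simulations, showing that any MPNN-U (resp.\ MPNN-D) is recovered as a special instantiation of the scheme in \cref{eq:directed-mpnn}. For (ii) I would exhibit small explicit counterexamples and, to certify that \oursacro{} distinguishes them, reduce to the $D$-WL refinement via \cref{thm:dirgnn-as-expressive-as-d-wl}.

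\textbf{Containments.} MPNN-D sends messages only along out-edges. Instantiating \cref{eq:directed-mpnn} with $\mathrm{AGG}^{(k)}_{\out}$ equal to the MPNN-D aggregator, $\mathrm{AGG}^{(k)}_{\inp}$ a constant map, and $\mathrm{COM}^{(k)}$ discarding its $\mathbf{m}^{(k)}_{i,\inp}$ argument reproduces MPNN-D verbatim, so \oursacro{} is at least as expressive as MPNN-D. For MPNN-U, observe that its computation is a function of the undirected graph $G_u$, in which the neighbourhood of a node is the (disjoint, for graphs without mutually reversed edges) union of its in- and out-neighbourhoods. Hence, taking $\mathrm{AGG}^{(k)}_{\inp}=\mathrm{AGG}^{(k)}_{\out}$ equal to the MPNN-U aggregator applied to the respective neighbour set and letting $\mathrm{COM}^{(k)}$ first merge the two partial aggregates and then apply the MPNN-U combination — for message-passing-matrix architectures this merging is a sum, with $\mathbf{S}_{\out}+\mathbf{S}_{\inp}$ playing the role of the symmetrised operator, exactly as in the tied-weight case $\mathbf{W}^{(k)}_{\out}=\mathbf{W}^{(k)}_{\inp}$ of \cref{eq:dir-gcn} — recovers MPNN-U. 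Equivalently, both containments follow from \cref{thm:dirgnn-as-expressive-as-d-wl} together with the fact that the $D$-WL colouring refines both the $1$-WL colouring of $G_u$ and the one-directional colour refinement that bounds MPNN-D.

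\textbf{Separations.} Against MPNN-U, I would take $G_1$ to be the directed path $a\to b\to c$ and $G_2$ the directed in-star $a\to b$, $c\to b$. These have the same undirected graph, so every MPNN-U produces identical node embeddings on $G_1$ and $G_2$; yet after one refinement round $D$-WL gives $b$ a colour with a size-one in- and a size-one out-neighbour multiset in $G_1$, but an empty out-multiset and a size-two in-multiset in $G_2$, so the colour histograms differ and, by \cref{thm:dirgnn-as-expressive-as-d-wl}, some \oursacro{} separates them. Against MPNN-D, I would take $G_1$ to be two disjoint directed edges and $G_2$ a directed in-star on two sources together with an isolated node (both on four vertices and two edges). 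The one-directional out-edge refinement stabilises on both graphs with the same colour histogram, namely two source-colours and two sink-colours in each, so no MPNN-D distinguishes them; but the full $D$-WL already separates them at the first round, since the in-degree-two centre of $G_2$ receives a colour that does not occur in $G_1$, whence again \cref{thm:dirgnn-as-expressive-as-d-wl} yields a separating \oursacro{}. Combined with the containments, this proves that \oursacro{} is strictly more expressive than both MPNN-U and MPNN-D.

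\textbf{Main obstacle.} The technical heart is the \oursacro{}-side of the separations: one has to verify that the candidate pairs are genuinely $D$-WL-distinguishable (a careful round-by-round comparison of the colour multisets) so that \cref{thm:dirgnn-as-expressive-as-d-wl} can be invoked — it is easy to pick misleading pairs, e.g.\ a directed $6$-cycle versus two directed triangles, on which in-, out- and undirected refinement all agree and nothing separates them. A secondary subtlety is the MPNN-U containment when reciprocal edges are present, since $G_u$ counts a mutual edge once whereas \oursacro{} sees it in both aggregations; this is handled either by restricting to graphs without reciprocal edges (as for the citation networks of interest here) or by exposing the full in- and out-multisets to $\mathrm{COM}^{(k)}$ through injective aggregators.
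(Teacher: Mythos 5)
Your proposal is correct and follows essentially the same structure as the paper's proof: split into MPNN-D and MPNN-U, prove containment by showing \oursacro{} can simulate each baseline (for MPNN-D by zeroing out the in-branch; for MPNN-U either by merging the two partial aggregates in $\mathrm{COM}$ or by routing through the refinement hierarchy D-WL $\sqsubseteq$ U-WL $\sqsubseteq$ 1-WL), then exhibit counterexample pairs and certify the \oursacro{}-side separation via \cref{thm:dirgnn-as-expressive-as-d-wl}. Your MPNN-D pair (two disjoint directed edges versus an in-star with an isolated fourth node) is, up to relabeling, the same pair used in the paper's \cref{fig:mpnn-d_fails}; your MPNN-U pair (directed path $a\to b\to c$ versus the in-star $a\to b \leftarrow c$) is a valid and slightly smaller alternative to the paper's choice in \cref{fig:mpnn-u_fails}, which contrasts a directed $3$-cycle with a transitive tournament on three vertices — both pairs are correct since they share an undirected form but differ in the in/out-degree histogram after one D-WL round. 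Your caution about reciprocal edges in the MPNN-U containment is also warranted and mirrors the paper's explicit handling via the U-WL test on the directed edge set.
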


Intuitively, the theorem states that while all directed graphs distinguished by MPNNs are also separated by \oursacro{}s, there also exist directed graphs separated by the latter but not by the former. This holds true for  MPNNs applied both on the directed and undirected graph.
We observe these theoretical findings to be in line with the empirical results detailed in~\cref{fig:synthetic_gcn_gat} and~\cref{tab:full_direction_ablation}, where \oursacro{} performs comparably or better (typically in the case of heterophily) than MPNNs.
\section{Related Work}
\label{sec:related_work}

\textbf{GNNs for directed graphs.}
While several classical papers have alluded to the extension of their spatial models to directed graphs, empirical validation has not been conducted~\cite{Scarselli:2009ku,li2015gated,gilmer2017neural}. GatedGCN~\cite{DBLP:journals/corr/LiTBZ15} deals with directed graphs, however it aggregates information only from out-neighbors, neglecting potentially valuable information from in-neighbors. More recently,~\citet{vrvcek2022learning} tackle the genome assembly problem by employing a GatedGCN with separate aggregations for in- and out-neighbors.
Various approaches have been developed to generalize spectral convolutions for directed graphs~\cite{spectral-dgcn,motifnet}. Of particular interest are DGCN~\cite{dgcn}, which leverages $\mathbf{A}^\top\mathbf{A}$ and $\mathbf{A}\mathbf{A}^\top$ for its convolution (see~\cref{sec:comparison_with_directed_graph_methods} for a more detailed comparison with Dir-GNN), DiGCN~\cite{digraph}, which uses Personalized Page Rank matrix as a generalized Laplacian and incorporates $k$-hop diffusion matrices, and MagNet~\cite{zhang2021magnet}, which adopts a complex matrix for graph diffusion where the real and imaginary parts represent the undirected adjacency and edge direction respectively. 
The above spectral methods share the following limitations: 1) in-neighbors and out-neighbors share the same weight matrix, which restricts expressivity; 2) they are specialized models, often inspired by GCN, as opposed to broader frameworks; 3) their scalability is severely limited due to their spectral nature. Concurrently to our work, \citet{pmlr-v202-geisler23a} extend transformers to directed graph for the task of graph classification, while~\citet{maskey2023fractional} generalize the concept of oversmoothing to directed graphs. 

\textbf{GNNs for relational graphs.} While counter intuitive at first, a directed graph cannot be equivalently represented by an \textit{undirected} relational graph (see~\cref{sec:alternative_representations} for more details). However, our Dir-GCN model can be considered as a Relational Graph Convolutional Network (R-GCN)~\cite{10.1007/978-3-319-93417-4_38} applied to an augmented \textit{directed} relational graph that incorporates two relation types: one for the original edges and another for the inverse edges added to the graph. Several papers handle multi-relational directed graphs by adding inverse relations~\cite{10.1007/978-3-319-93417-4_38,marcheggiani-titov-2017-encoding,DBLP:journals/corr/abs-1904-08745,Vashishth2020Composition-based}. 
Similarly to the above, directionality can be addressed using an MPNN~\cite{gilmer2017neural} combined with binary edge features, although at the cost of increased memory usage (see~\cref{sec:mpnn_binary_features} for more details).
In our work, however, we are the first to perform an in-depth investigation of the role of directionality in graph learning and its relation with homophily of the graph.

\textbf{Heterophilic GNNs.}
Several GNN architectures have been proposed to handle heterophily. One way amounts to effectively allow the model to enhance the high-frequency components by learning `generalized' negative weights on the graph \cite{chien2020adaptive,bo2021beyond,luan2022revisiting,bodnar2022neural,di2022graph}.  
A different approach tries to enlarge the neighbourhood aggregation to take advantage of the fact that on heterophilic graphs, the likelihood of finding similar nodes increases beyond the 1-hop \cite{abu-el-haijaMixHopHigherOrderGraph2019,zhu2020beyond,lim2021large,maurya2021improving,li2022finding}.

\section{Experiments} \label{sec:experiments}

\subsection{Synthetic Task} \label{sec:synthetic_experiments}

\textbf{Setup.} In order to show the limits of current MPNNs, we design a synthetic task where the label of a node depends on both its in- and out-neighbors: it is one if the mean of the scalar features of their in-neighbors is greater than the mean of the features of their out-neighbors, or zero otherwise (more details in~\cref{sec:appendix_synthetic_experiment}). We report the results using GraphSage as base MPNN, but similar results were obtained with GCN and GAT and reported in~\cref{fig:synthetic-gcn-gat} of the Appendix. We compare GraphSage on the undirected version of the graph (Sage), with three \oursacro{} extensions of GraphSage using different convex combination coefficients $\alpha$: Dir-Sage($\alpha=0$) (only considering in-edges), Dir-Sage($\alpha=1$) (only considering out-edges) and Dir-Sage($\alpha=0.5$) (considering both in- and out-edges equally).

\textbf{Results.} The results in~\cref{fig:synthetic_task} show that only \textit{Dir-Sage($\alpha$=0.5)}, which accounts for both directions, is able to almost perfectly solve the task. Using only in- or out-edges results in around 75\% accuracy, whereas GraphSage on the undirected graph is no better than a random classifier. 

\begin{table*}[t]
\begin{center}
\begin{small}
\begin{sc}
\resizebox{1.\textwidth}{!}{%
\begin{tabular}{lccc:ccccc}
\toprule
{} & \multicolumn{3}{c:}{Homophilic} & \multicolumn{5}{c}{Heterophilic} \\
{} & citeseer\_full & cora\_ml & ogbn-arxiv & chameleon & squirrel & arxiv-year & snap-patents & roman-empire \\ 
hom.      & 0.949  &  0.792  &  0.655 &  0.235  &  0.223 &  0.221   & 0.218   & 0.05    \\
hom. gain & 1.36\% &  2.84\% & 6.30\% & 15.71\% & 2.38\% &  22.67\% & 40.32\% & 66.85\% \\
\midrule
gcn      &           93.37 $\pm$ 0.22 &           84.37 $\pm$ 1.52 &  \textbf{68.39 $\pm$ 0.01} &           71.12 $\pm$ 2.28 &           62.71 $\pm$ 2.27 &           46.28 $\pm$ 0.39 &           51.02 $\pm$ 0.07 &           56.23 $\pm$ 0.37 \\
dir-gcn  &  \textbf{93.44 $\pm$ 0.59} &  \textbf{84.45 $\pm$ 1.69} &           66.66 $\pm$ 0.02 &  \textbf{78.77 $\pm$ 1.72} &  \textbf{74.43 $\pm$ 0.74} &  \textbf{59.56 $\pm$ 0.16} &  \textbf{71.32 $\pm$ 0.06} &  \textbf{74.54 $\pm$ 0.71} \\
\hdashline
sage     &  \textbf{94.15 $\pm$ 0.61} &  \textbf{86.01 $\pm$ 1.56} &  \textbf{67.78 $\pm$ 0.07} &           61.14 $\pm$ 2.00 &           42.64 $\pm$ 1.72 &           44.05 $\pm$ 0.02 &           52.55 $\pm$ 0.10 &           72.05 $\pm$ 0.41 \\
dir-sage &           94.14 $\pm$ 0.65 &           85.84 $\pm$ 2.09 &           65.14 $\pm$ 0.03 &  \textbf{64.47 $\pm$ 2.27} &  \textbf{46.05 $\pm$ 1.16} &  \textbf{55.76 $\pm$ 0.10} &  \textbf{70.26 $\pm$ 0.14} &  \textbf{79.10 $\pm$ 0.19} \\
\hdashline
gat      &  \textbf{94.53 $\pm$ 0.48} &  \textbf{86.44 $\pm$ 1.45} &  \textbf{69.60 $\pm$ 0.01} &           66.82 $\pm$ 2.56 &           56.49 $\pm$ 1.73 &           45.30 $\pm$ 0.23 &                   OOM &           49.18 $\pm$ 1.35 \\
dir-gat  &           94.48 $\pm$ 0.52 &           86.21 $\pm$ 1.40 &           66.50 $\pm$ 0.16 &  \textbf{71.40 $\pm$ 1.63} &  \textbf{67.53 $\pm$ 1.04} &  \textbf{54.47 $\pm$ 0.14} &                   OOM &  \textbf{72.25 $\pm$ 0.04} \\
\bottomrule
\end{tabular}%
}
\end{sc}
\end{small}
\end{center}
\caption{Ablation study comparing base MPNNs on the undirected graphs versus their \oursacro{} extension on the directed graphs. Homophilic datasets, located to the left of the dashed line, show little to no improvement when incorporating directionality, sometimes even experiencing a minor decrease in performance. Conversely, heterophilic datasets, found to the right of the dashed line, demonstrate large accuracy improvements when directionality is incorporated into the model.}
\label{tab:direction_ablation}
\end{table*}

\subsection{Extending Popular GNNs with \oursacro{}}
\label{sec:experiments_extending_gnn}

\textbf{Datasets.} We evaluate on the task of node classification on several directed benchmark datasets with varying levels of homophily: Citesee-Full, Cora-ML~\cite{bojchevski2018deep}, OGBN-Arxiv~\cite{hu2020ogb}, Chameleon, Squirrel~\cite{pei2020geom}, Arxiv-Year, Snap-Patents~\cite{lim2021large} and Roman-Empire~\cite{platonov2023a} (refer to~\cref{tab:datasets-statistics} for dataset statistics). While the first three are mainly homophilic (edge homophily greater than 0.65), the last five are highly heterophilic (edge homophily smaller than 0.24). Refer to~\cref{app:experimental_setup} for more details on the experimental setup and on dataset splits.

\textbf{Setup.} We evaluate the gain of extending popular undirected GNN architectures (GCN~\cite{kipf2016semi}, GraphSage~\cite{hamilton2017inductive} and GAT~\cite{velivckovic2017graph}) with our framework. For this ablation, we use the same hyperparameters (provided in~\cref{sec:appendix_ablation_hyperparams}) for all models and datasets. The aggregated results are plotted in~\cref{fig:aggregate_results}, while the raw numbers are reported in~\cref{tab:direction_ablation}. For \oursacro{}, we take the best results out of $\alpha \in \{0, 0.5, 1\}$ (see~\cref{tab:full_direction_ablation} for the full results).

\textbf{Results.} We report aggregated results in~\cref{fig:aggregate_results}, while~\cref{tab:direction_ablation} shows the results for each dataset. On \textbf{heterophilic datasets}, using directionality brings exceptionally large gains (10\% to 20\% absolute) in accuracy across all three base GNN models. On the other hand, on \textbf{homophilic datasets} using directionality leaves the performance unchanged or slightly hurts. This is in line with the findings of Tab.~\ref{tab:node_homophilies}, which shows that using directionality as in our framework generally increases the effective homophily of heterophilic datasets, while leaving it almost unchanged for homophilic datasets. 
The inductive bias of undirected GNNs to propagate information in the \emph{same} way in both directions is beneficial on homophilic datasets where edges encode a notion of class similarity. Moreover, averaging information across {\em all} your neighbors, independent of direction, leads to a low-pass filtering effect that is indeed beneficial on homophilic graphs \cite{nt2019revisiting}. In contrast, \oursacro{} has to learn to align in- and out-convolutions since they have independent weights.


\subsection{Comparison with State-of-the-Art Models}
\label{sec:experiments_sota}

\begin{table*}[t]
\begin{center}
\begin{small}
\begin{sc}
\resizebox{.7\textwidth}{!}{%
\begin{tabular}{lcccccr}
\toprule
             &  Squirrel                   &  Chameleon                  &  Arxiv-year                 &  Snap-patents               &  Roman-Empire\\
\midrule
MLP          &  28.77 $\pm$ 1.56           &  46.21 $\pm$ 2.99           &  36.70 $\pm$ 0.21           &  31.34 $\pm$ 0.05           &  64.94 $\pm$ 0.62          \\
GCN          &  53.43 $\pm$ 2.01           &  64.82 $\pm$ 2.24           &  46.02 $\pm$ 0.26           &  51.02 $\pm$ 0.06           &  73.69 $\pm$ 0.74          \\
\hdashline
H$_2$GCN     &  37.90 $\pm$ 2.02           &  59.39 $\pm$ 1.98           &  49.09 $\pm$ 0.10           &  OOM                        &  60.11 $\pm$ 0.52          \\
GPR-GNN      &  54.35 $\pm$ 0.87           &  62.85 $\pm$ 2.90           &  45.07 $\pm$ 0.21           &  40.19 $\pm$ 0.03           &  64.85 $\pm$ 0.27          \\
LINKX        &  61.81 $\pm$ 1.80           &  68.42 $\pm$ 1.38           &  56.00 $\pm$ 0.17           &  61.95 $\pm$ 0.12           &  37.55 $\pm$ 0.36          \\
FSGNN        &  74.10 $\pm$ 1.89           &  78.27 $\pm$ 1.28           &  50.47 $\pm$ 0.21           &  65.07 $\pm$ 0.03           &  79.92 $\pm$ 0.56          \\
ACM-GCN      &  67.40 $\pm$ 2.21           &  74.76 $\pm$ 2.20           &  47.37 $\pm$ 0.59           &  55.14 $\pm$ 0.16           &  69.66 $\pm$ 0.62          \\
GloGNN       &  57.88 $\pm$ 1.76           &  71.21 $\pm$ 1.84           &  54.79 $\pm$ 0.25           &  62.09 $\pm$ 0.27           &  59.63 $\pm$ 0.69          \\
Grad. Gating &  64.26 $\pm$ 2.38           &  71.40 $\pm$ 2.38           &  63.30 $\pm$ 1.84           &  69.50 $\pm$ 0.39           &  82.16 $\pm$ 0.78          \\
\hdashline
DiGCN        &  37.74 $\pm$ 1.54           &  52.24 $\pm$ 3.65           &  OOM                        &  OOM                        &  52.71 $\pm$ 0.32          \\ 
MagNet       &  39.01 $\pm$ 1.93           &  58.22 $\pm$ 2.87           &  60.29 $\pm$ 0.27           &  OOM                        &  88.07 $\pm$ 0.27          \\
\hdashline
\oursacro{}  &  \textbf{75.31 $\pm$ 1.92}  &  \textbf{79.71 $\pm$ 1.26}  &  \textbf{64.08 $\pm$ 0.26}  &  \textbf{73.95 $\pm$ 0.05}  &  \textbf{91.23 $\pm$ 0.32} \\
\bottomrule
\end{tabular}%
}
\end{sc}
\end{small}
\end{center}
\caption{Results on real-world directed heterophilic datasets. OOM indicates out of memory. }
\label{tab:heterophilic_results}
\end{table*}

\textbf{Setup.} Given the importance of directionality on heterophilic tasks, we compare Dir-GNN with state-of-the-art models on heterophilic benchmarks Chameleon, Squirrel~\cite{pei2020geom}, Arxiv-Year, Snap-Patents~\cite{lim2021large} and Roman-Empire~\cite{platonov2023a}. In particular, we compare to \textbf{simple baselines}: MLP and GCN~\cite{kipf2016semi}, \textbf{heterophilic state-of-the-art models}: H$_2$GCN~\cite{zhu2020beyond}, GPR-GNN~\cite{chien2020adaptive}, LINKX~\cite{lim2021large}, FSGNN~\cite{maurya2021improving}, ACM-GCN~\cite{luan2022revisiting}, GloGNN~\cite{li2022finding}, Gradient Gating~\cite{rusch2022gradient}, and \textbf{state-of-the-art models for directed graphs}: DiGCN~\cite{digraph} and MagNet~\cite{zhang2021magnet}.~\cref{sec:appendix_baseline_results} contains more details on how baseline results were obtained. Differently from the results in~\cref{tab:direction_ablation}, we now tune the hyperparameters of our model using a grid search (see~\cref{sec:appendix_grid_search} for the exact ranges).

\textbf{Results.} In~\cref{tab:heterophilic_results} we observe that \oursacro{} obtains {\bf new state-of-the-art} results on all five heterophilic datasets, outperforming complex methods which were specifically designed to tackle heterophily. These results suggest that, when present, {\em using the edge direction can significantly improve learning on heterophilic graphs}, justifying the title of the paper. In contrast, discarding it is so harmful that not even complex architectures can make up for this loss of information. We further note that DiGCN and MagNet, despite being specifically designed for directed graphs, struggle on Squirrel and Chameleon. This is due to their inability to selectively aggregate from one direction while disregarding the other, a strategy that proves particularly advantageous for these two datasets (see~\cref{tab:full_direction_ablation}). Our proposed \oursacro{} framework overcomes this limitation thanks to its distinct weight matrices and the flexibility provided by the $\alpha$ parameter, enabling selective directional aggregation.
\section{Conclusion}
\label{sec:conclusion}

\looseness=-1
We introduced \oursacro, a generic framework to extend any spatial graph neural network to directed graphs, which we prove to be strictly more expressive than MPNNs. We showed that treating the graph as directed improves the effective homophily of heterophilic datasets, and validated empirically that augmenting popular GNN architectures with our framework results in large improvements on heterophilic benchmarks, while leaving performance almost unchanged on homophilic benchmarks. Surprisingly, we found simple instantiations of our framework to obtain state-of-the-art results on the five directed heterophilic benchmarks we experimented on, outperforming recent architectures developed specifically for heterophilic settings as well as previously proposed methods for directed graphs.

\textbf{Limitations.} Our research has several areas that could be further refined and explored. First, the theoretical exploration of the conditions that lead to a higher effective homophily in directed graphs compared to their undirected counterparts is still largely unexplored. Furthermore, we have yet to investigate the expressivity advantage of \oursacro{} in the specific context of heterophilic graphs, where empirical gains were most pronounced.  Finally, we haven't empirically investigated different functional forms for aggregating in- and out-edges. These aspects mark potential areas for future enhancements and investigations. 

\section*{Acknowledgements}
Emanuele Rossi, Fabrizio Frasca and Michael Bronstein are supported in part by ERC Consolidator Grant No. 274228 (LEMAN).

\bibliographystyle{unsrtnat}
\bibliography{references}

\appendix
\crefalias{section}{appsec}
\crefalias{subsection}{appsec}
\crefalias{subsubsection}{appsec}
\newpage

\begin{figure}
    \centering
    \includegraphics[width=0.65\textwidth]{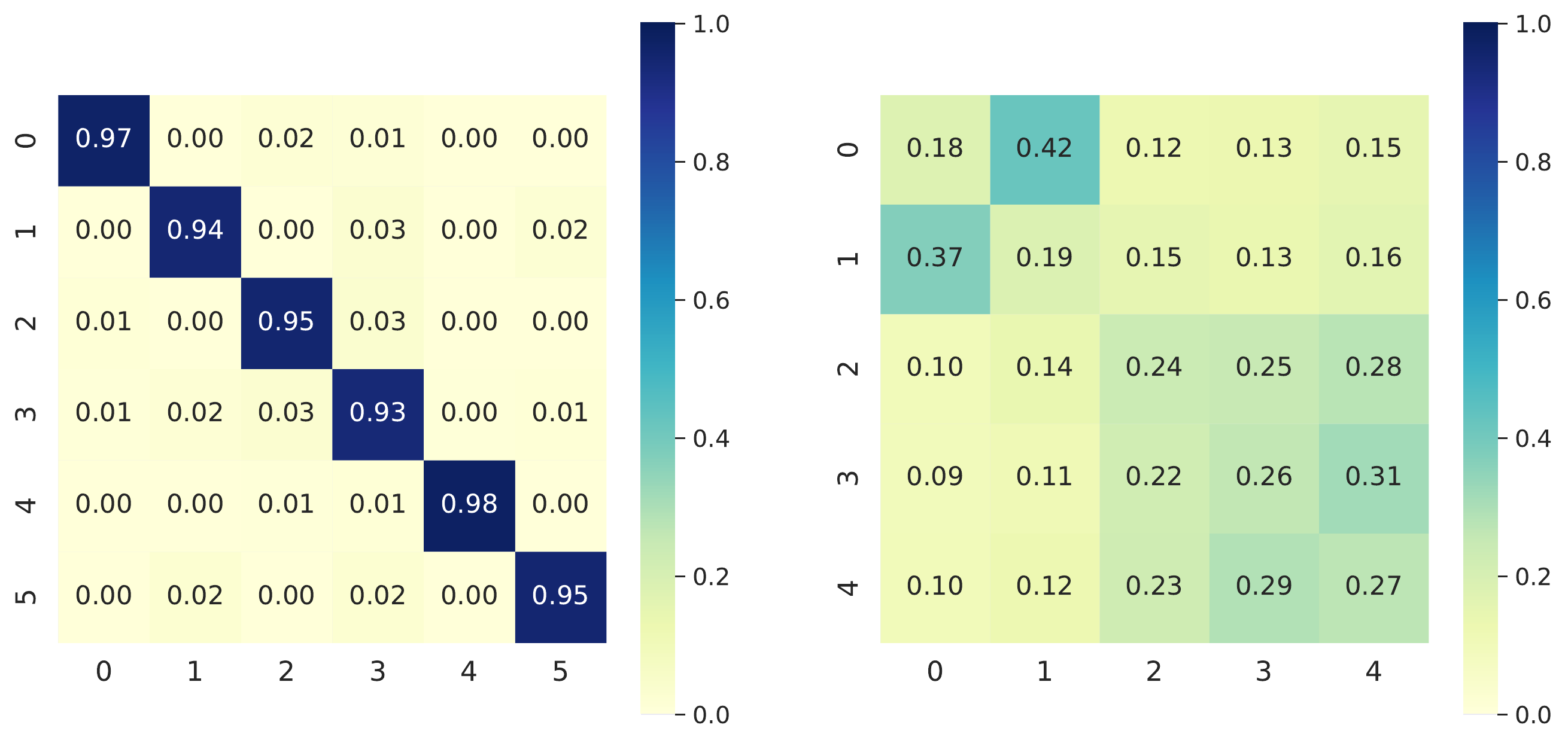}
    \caption{Compatibility matrices for the undirected version of Citeseer-Full (homophilic, left) and Chameleon (heterophilic, right).}
    \label{fig:citeseer_chameleon_compat}
\end{figure}

\begin{figure*}
    \centering
    \includegraphics[width=.8\textwidth]{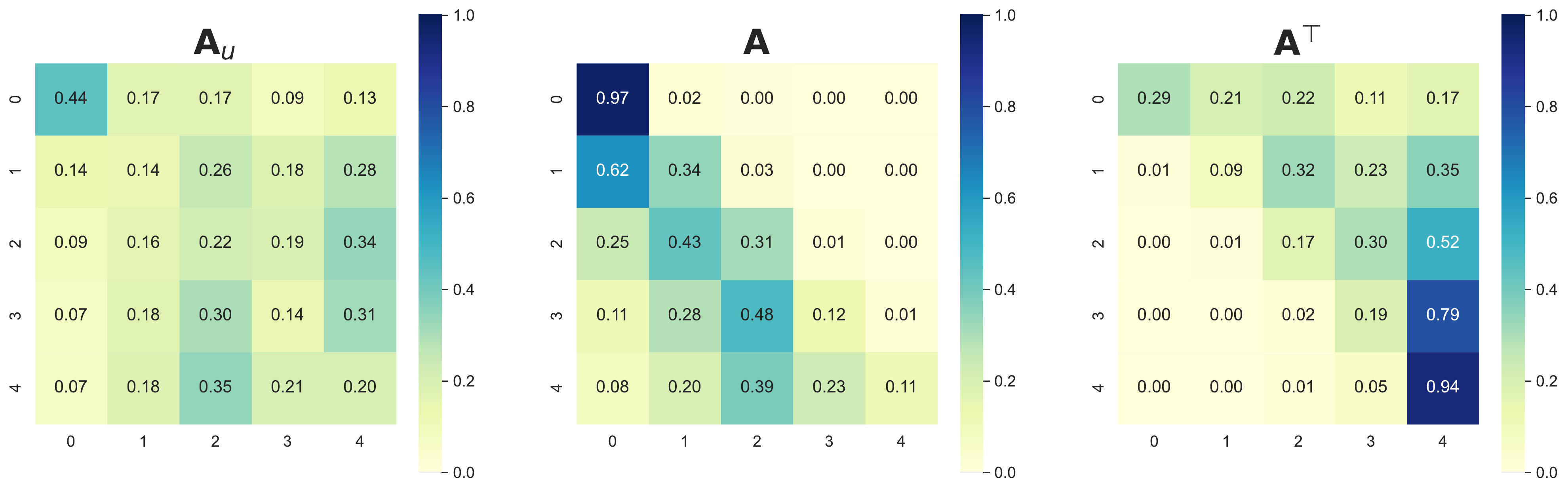}
    \caption{Weighted compatibility matrices of the undirected diffusion operator $\mathbf{A}_u$ and the two directed diffusion operators $\mathbf{A}$ and $\mathbf{A}^\top$ for Arxiv-Year. The last two have rows (classes) which are much more distinguishable then the first, despite still being heterophilic.}
    \label{fig:arxiv-year-compat}
\end{figure*}

\section{Compatibility Matrices}
\cref{fig:citeseer_chameleon_compat} shows the compatibility matrices for both Citeseer-Full (homophilic) and Chameleon (heterophilic). Additionally, ~\cref{fig:arxiv-year-compat} presents the weighted compatibility matrices of the undirected diffusion operator $\mathbf{A}_u$ and the two directed diffusion operators $\mathbf{A}$ and $\mathbf{A}^\top$ for Arxiv-Year. The last two have rows (classes) which are much more distinguishable then the first, despite still being heterophilic. This phenomen, called harmless heterophily, is discussed in~\cref{sec:directed_heterophily}.

\section{Harmless Heterophily Through Directions}
It has been recently shown that heterophily is not necessarily harmful for GNNs, as long as nodes with the same label share similar neighborhood patterns, and different classes have distinguishable patterns~\cite{ma2022is, luan2023graph} . We find that some directed datasets, such as Arxiv-Year and Snap-Patents, show this form of \textit{harmless heterophily} when treated as directed, and instead manifest \textit{harmful heterophily} when made undirected (see~\cref{fig:arxiv-year-compat} in the Appendix). This suggests that using directionality can be beneficial also when using only one layer, as we confirm empirically (see~\cref{fig:arxiv1-layer} in the Appendix).

\textbf{Toy example.} We further illustrate the concepts presented in this Section with the toy example in Fig.~\ref{fig:synthetic_example}, which shows a directed graph with three classes (blue, orange, green). Despite the graph being maximally heterophilic, it presents harmless heterophily, since different classes have very different neighborhood patterns that can be observed clearly from the compatibility matrix of $\mathbf{A}$ (b). When the graph is made undirected (c), we are corrupting this information, and the classes become less distinguishable, making the task harder. We also note that both $\mathbf{A}^\top\mathbf{A}$ and $\mathbf{A}\mathbf{A}^\top$ presents perfect homophily (d), while $\mathbf{A}_u^2$ does not, in line with the discussion in previous paragraphs.

\begin{figure}[t!]
\vspace{-4mm}
\centering
\begin{subfigure}[b]{0.19\textwidth}
     \centering
     \includegraphics[width=\linewidth]{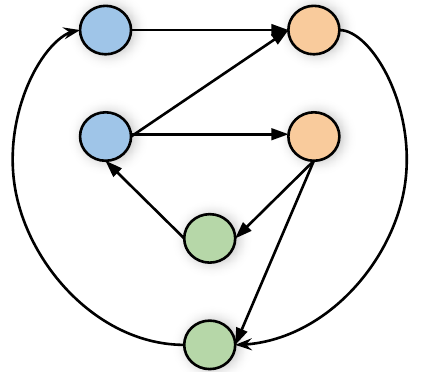}
     \caption{}
\end{subfigure}
\hfill
\begin{subfigure}[b]{0.19\textwidth}
     \centering
     \includegraphics[width=\linewidth]{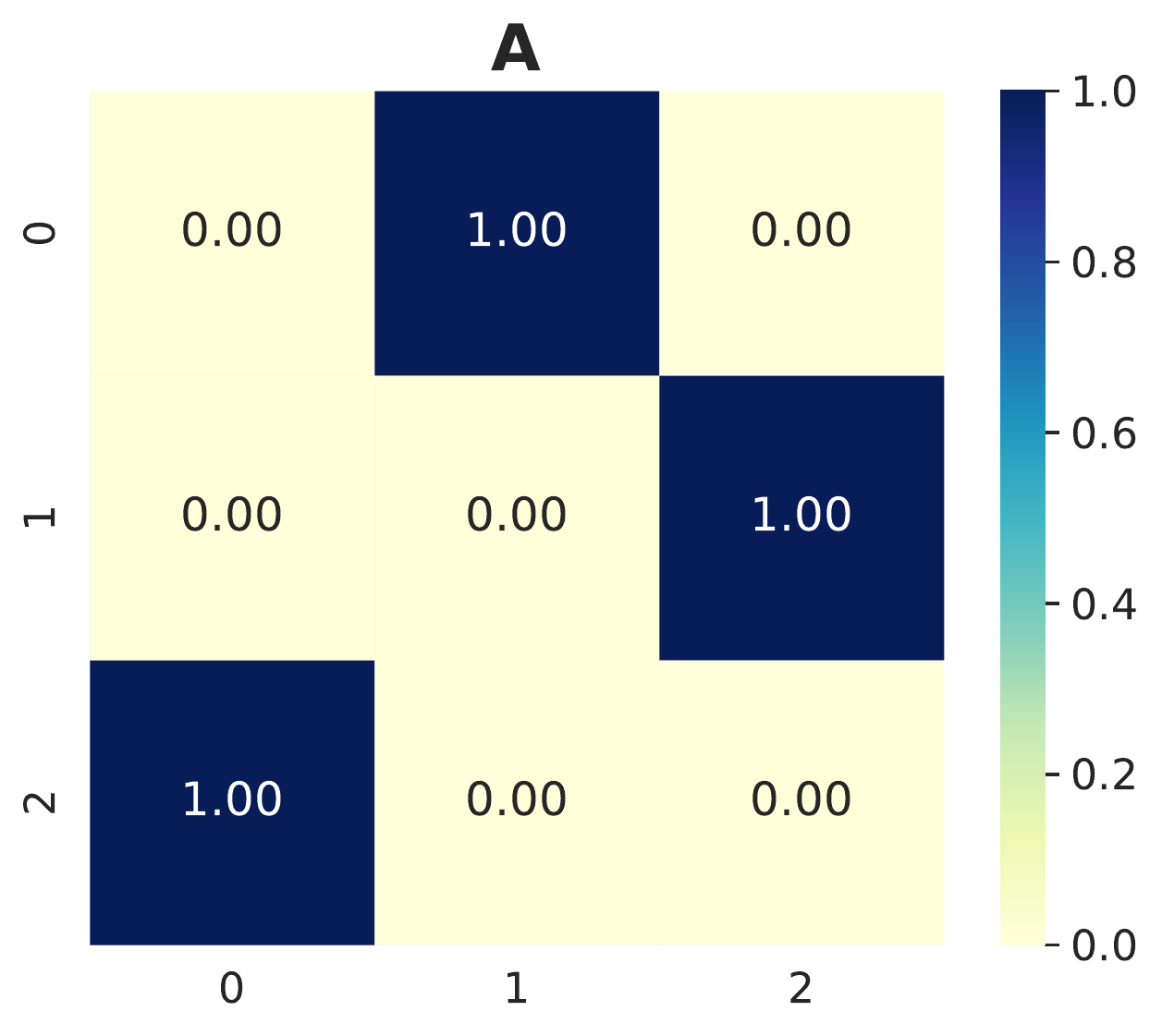}
     \caption{}
\end{subfigure}
\hfill
\begin{subfigure}[b]{0.19\textwidth}
     \centering
     \includegraphics[width=\linewidth]{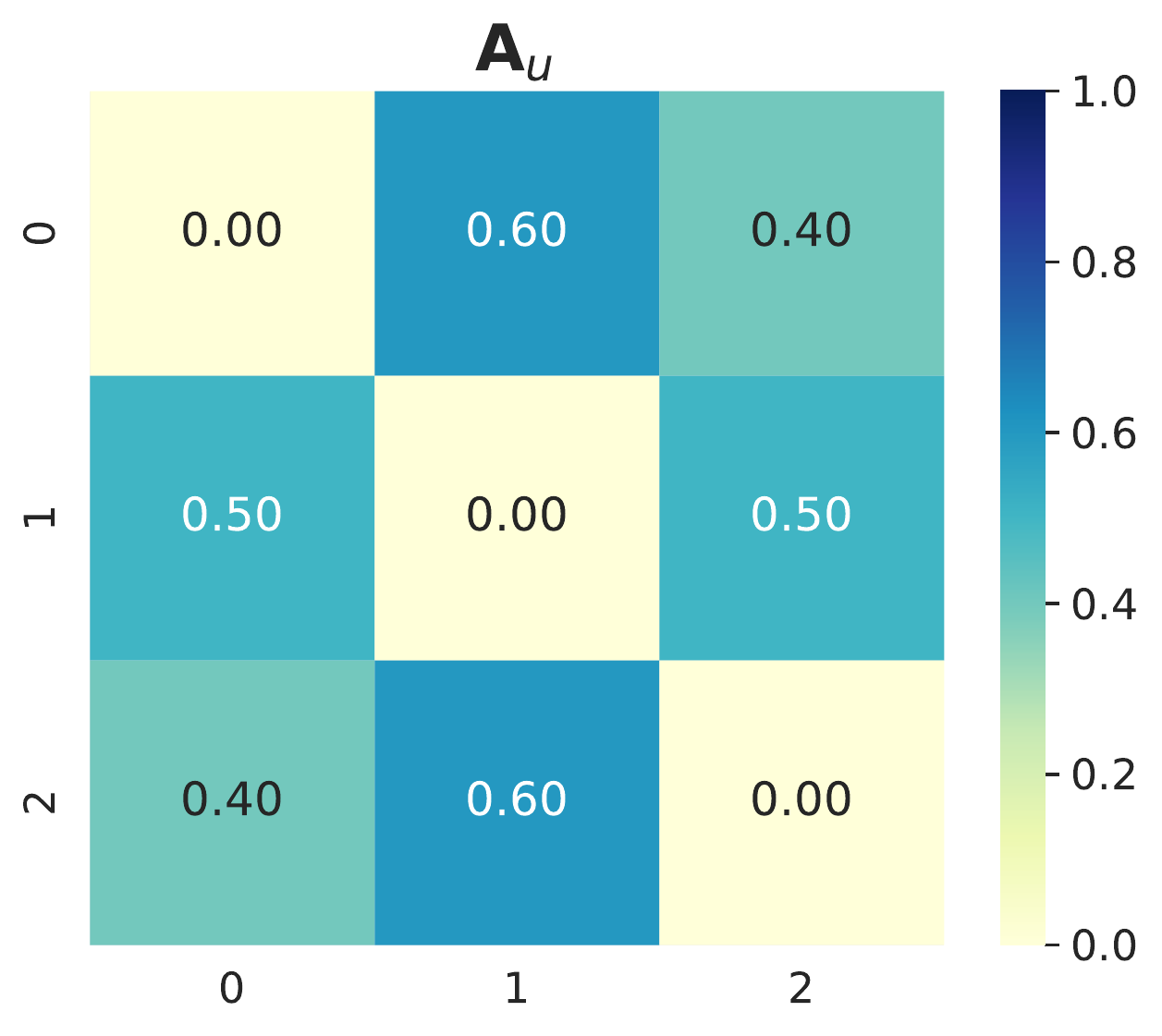}
     \caption{}
\end{subfigure}
\hfill
\begin{subfigure}[b]{0.19\textwidth}
     \centering
     \includegraphics[width=\linewidth]{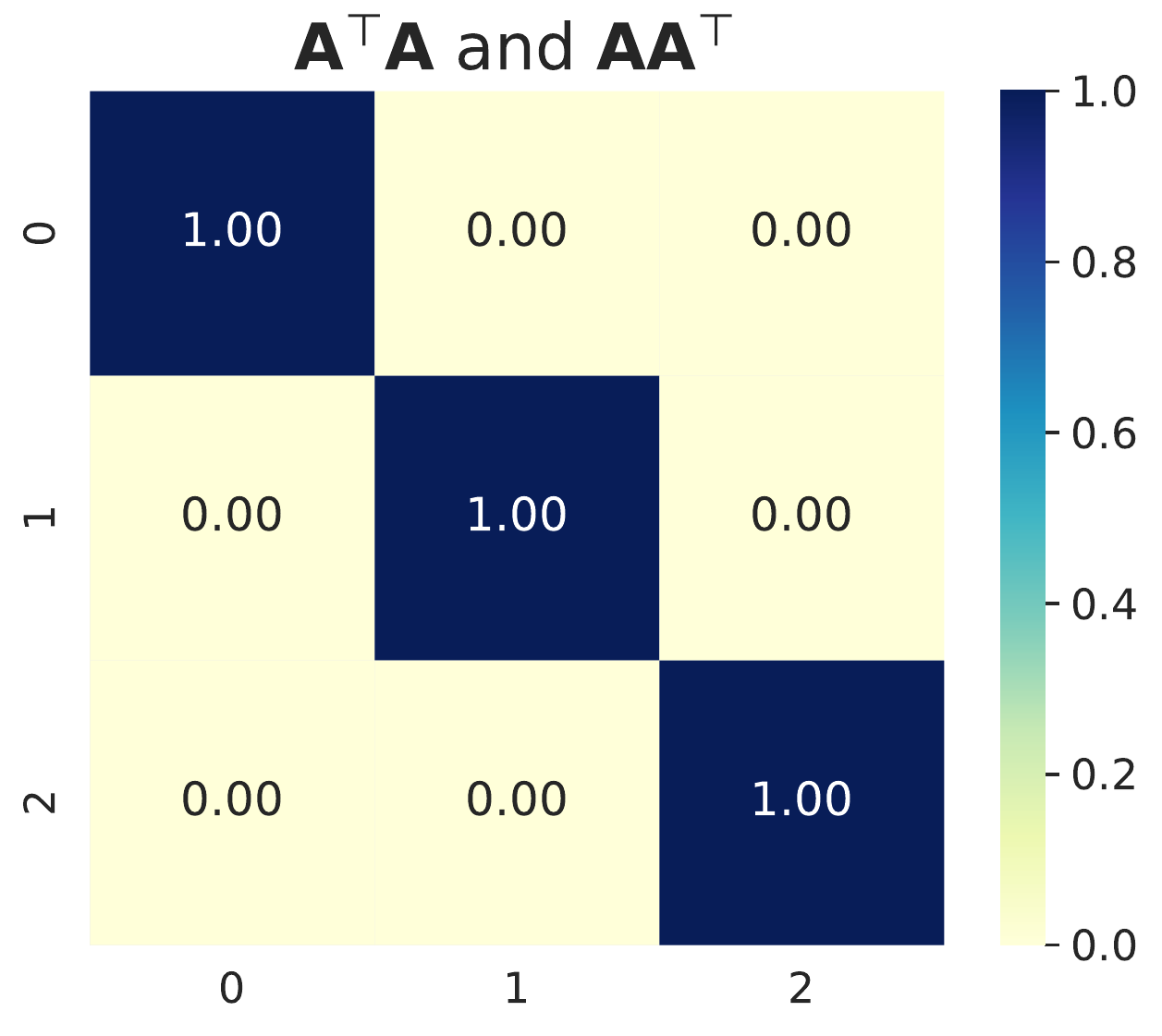}
     \caption{}
\end{subfigure}
\hfill
\begin{subfigure}[b]{0.19\textwidth}
     \centering
     \includegraphics[width=\linewidth]{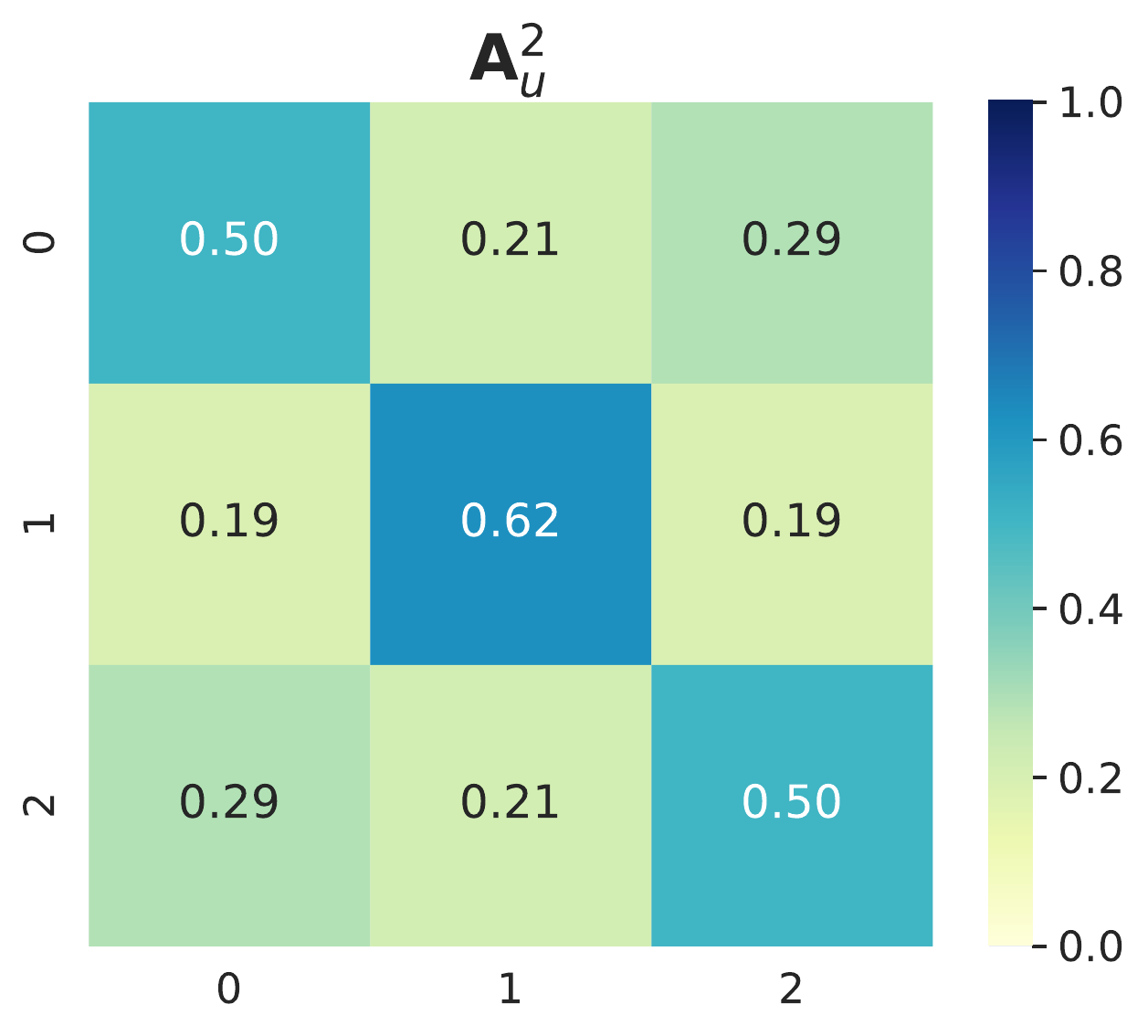}
     \caption{}
\end{subfigure}

\caption{(a) A toy directed graph with three classes showcasing harmless heterophily.
(b) Compatibility matrix of $\mathbf{A}$ showing that classes (blue, orange and green) have very different neighborhoods and can be easily distinguished. (c) Making the graph undirected makes the classes harder to distinguish, making the task harder to solve. 
(d) The mixed directed 2-hops ($\mathbf{A}^\top \mathbf{A}$ and $\mathbf{A} \mathbf{A}^\top$) have perfect homophily, while (e) this is not the case for the undirected 2-hop.}
\label{fig:synthetic_example}
\vspace{-5mm}
\end{figure}

\section{Extension of Popular GNNs} \label{app:extensions}
We first consider an extension of GraphSAGE \cite{hamilton2017inductive} using our \oursacro{} framework. The main choice reduces to that of normalization. In the spirit of GraphSAGE, we require the message-passing matrices $\mathbf{A}_{\inp}$ and $\mathbf{A}_{\out}$ to both be row-stochastic. This is done by taking $\mathbf{A}_{\out} = \mathbf{D}_{\out}^{-1}\mathbf{A}$ and $\mathbf{A}_{\inp} = \mathbf{D}_{\inp}^{-1}\mathbf{A}^\top$, respectively. In this case, the directed version of GraphSAGE becomes
\begin{align*}\label{eq:dir-sage}
    \mathbf{X}^{(k)} =  \sigma(\mathbf{X}^{(k-1)}\mathbf{\Omega}^{(k)} + \mathbf{D}_{\out}^{-1}\mathbf{A}\mathbf{X}^{(k-1)}\mathbf{W}^{(k)}_{\out} + \mathbf{D}_{\inp}^{-1}\mathbf{A}^\top\mathbf{X}^{(k-1)}\mathbf{W}^{(k)}_{\inp}). 
\end{align*}
\noindent Finally, we consider the generalization of GAT \cite{velivckovic2017graph} to the directed case. Here we simply compute attention coefficients over the in- and out-neighbours separately. If we denote the attention coefficient over the edge $(i,j)$ by $\beta_{ij}$, then the update of node $i$ at layer $k$ can be computed as 
\begin{equation*}\label{eq:dir-gat}
    \mathbf{h}_i^{(k)} = \sigma(\sum_{(i,j)\in E}\beta_{ij}^\out \mathbf{W}^{(k)}_{\out}\mathbf{h}_{j}^{(k)} +  \sum_{(j,i)\in E}\beta_{ji}^\inp \mathbf{W}^{(k)}_{\inp}\mathbf{h}_{j}^{(k)}),
\end{equation*}
\noindent where $\beta^\out,\beta^\inp$ are both row-stochastic matrices with support given by $\mathbf{A}$ and $\mathbf{A}^\top$, respectively.

\section{Analysis of Expressivity} \label{app:expressivity_analysis}

In this appendix we prove the expressivity results reported in \cref{sec:expressivity}, after restating them more formally. It is important to note that we cannot build on the expressivity results from \citet{barcelo2022weisfeiler}, since their scope is limited to undirected relational graphs, and (perhaps surprisingly) it is not possible to equivalently represent a directed graph with an undirected relational graphs, as we show in~\cref{sec:alternative_representations}.

We start by introducing useful concepts which will be instrumental to our discussion. As commonly done, we will assume in our analysis that all nodes have constant scalar node features $c$. 

\subsection{(Directed) Weisfeiler-Lehman Test} \label{sec:appendix_wl}
The 1-dimensional Weisfeiler-Lehman algorithm (1-WL), or color refinement, is a heuristic approach to the graph isomorphism problem, initially proposed by~\citet{weisfeiler1968reduction}. This algorithm is essentially an iterative process of vertex labeling or coloring, aimed at identifying whether two graphs are non-isomorphic.

Starting with an identical coloring of the vertices in both graphs, the algorithm proceeds through multiple iterations. In each round, vertices with identical colors are assigned different colors if their respective sets of equally-colored neighbors are unequal in number. The algorithm continues this process until it either reaches a point where the distribution of vertices colors is different in the two graphs or converges to the same distribution. In the former case, the algorithm concludes that the graphs are not isomorphic and halts. Alternatively, the algorithm terminates with an inconclusive result: the two graphs are `possibly isomorphic'. It has been shown that this algorithm cannot distinguish all non-isomorphic graphs~\cite{cai1992anoptimal}. 

Formally, given an undirected graph $G = (V, E)$, the 1-WL algorithm calculates a node coloring $C^{(t)}: V (G) \rightarrow \mathbb{N}$ for each iteration $t > 0$, as follows:

\begin{equation}
C^{(t)}(i) = \mathrm{RELABEL}\left( C^{(t-1)}(i), \ldblbrace C^{(t-1)}(j) : j \in N(i) \rdblbrace\right)
\end{equation}

where $\mathrm{RELABEL}$ is a function that injectively assigns a unique color, not used in previous iterations, to the pair of arguments. The function $N(i)$ represents the set of neighbours of $i$.

Since we deal with directed graphs, it is necessary to extend the 1-WL test to accommodate directed graphs. We note that a few variants have been proposed in the literature~\cite{Grohe2021ColorRA,DBLP:journals/corr/abs-1904-08745,Kollias2022DirectedGA}. Here, we focus on a variant whereby in- and out-neighbours are treated separately, as discussed in~\citep{Grohe2021ColorRA}. This variant, which we refer to as \textit{D-WL}, refines colours as follows:

\begin{equation}
D^{(t)}(i) = \mathrm{RELABEL}\left( D^{(t-1)}(i), \ldblbrace D^{(t-1)}(j) : j \in N_\out(i) \rdblbrace, \ldblbrace D^{(t-1)}(j) : j \in N_\inp(i) \rdblbrace \right)
\end{equation}

where $N_\out(i)$ and $N_\inp(i)$ are the set of out- and in-neighbors of $i$, respectively. Our first objective is to demonstrate that \oursacro{} is as expressive as D-WL. Establishing this will enable us to further show that \oursacro{} is strictly more expressive than an MPNN operating on either the directed or undirected version of a graph. Let us start by introducing some further auxiliary tools that will be used in our analysis.

\subsection{Expressiveness and color refinements}

A way to compare graph models (or algorithms) in their expressiveness is by contrasting their discriminative power, that is the ability they have to disambiguate between non-isomorphic graphs.

Two graphs are called \emph{isomorphic} whenever there exists a graph isomorphism between the two:
\begin{definition}[Graph isomorphism]\label{def:isomorphism}
    Let $G_1 = (V_1, E_1)$, $G_2 = (V_2, E_2)$ be two (directed) graphs. An \emph{isomorphism} between $G_1, G_2$ is a bijective map $\varphi: V_1 \rightarrow V_2$ which preserves adjacencies, that is: $\forall u, v \in V_1: (u, v) \in E_1 \Longleftrightarrow (\varphi(u), \varphi(v)) \in E_2$. 
\end{definition}
\noindent On the contrary, they are deemed non-isomorphic when such a bijection does not exist. A model that is able to discriminate between two non-isomorphic graphs assigns them distinct representations. This concept is extended to families of models as follows:
\begin{definition}[Graph discrimination]
    Let $G = (V, E)$ be any (directed) graph and $M$ a model belonging to some family $\mathcal{M}$. 
    Let $G_1$ and $G_2$ be two graphs. We say $M$ discriminates $G_1$, $G_2$ iff $M(G_1) \neq M(G_2)$. 
    We write $G_1 \neq_M G_2$. If there exists such a model $M \in \mathcal{M}$, then family $\mathcal{M}$ distinguishes between the two graphs and we write $G_1 \neq_{\mathcal{M}} G_2$. 
\end{definition}

Families of models can be compared in their expressive power in terms of graph disambiguation:
\begin{definition}[At least as expressive]\label{def:at_least_as_expressive}
    Let $\mathcal{M}_1, \mathcal{M}_2$ be two model families. We say $\mathcal{M}_1$ is at least as expressive as $\mathcal{M}_2\ \mathrm{iff}\ \forall G_1 = (V_1, E_1), G_2 = (V_2, E_2), G_1 \neq_{\mathcal{M}_2} G_2 \implies G_1 \neq_{\mathcal{M}_1} G_2$. We write $\mathcal{M}_1 \sqsubseteq \mathcal{M}_2$.
\end{definition}

Intuitively, $\mathcal{M}_1$ is at least as expressive as $\mathcal{M}_2$ if when $\mathcal{M}_2$ discriminates a pair of graphs, also $\mathcal{M}_1$ does. Additionally, a family can be \emph{strictly} more expressive than another:

\begin{definition}[Strictly more expressive] \label{def:strictly_more_expressive}
    Let $\mathcal{M}_1, \mathcal{M}_2$ be two model families. We say $\mathcal{M}_1$ is strictly more expressive than $\mathcal{M}_2\ \mathrm{iff}\ \mathcal{M}_1 \sqsubseteq \mathcal{M}_2 \land \mathcal{M}_2 \not\sqsubseteq \mathcal{M}_1$. Equivalently, $\mathcal{M}_1 \sqsubseteq \mathcal{M}_2 \land \exists G_1 = (V_1, E_1), G_2 = (V_2, E_2),
    \ \mathrm{s.t.}\ G_1 \neq_{\mathcal{M}_1} G_2 \land G_1 =_{\mathcal{M}_2} G_2$.
\end{definition}

Intuitively, $\mathcal{M}_1$ is strictly more expressive than $\mathcal{M}_2$ if $\mathcal{M}_1$ is at least as expressive as $\mathcal{M}_2$ and there exist pairs of graphs that $\mathcal{M}_1$ distinguishes but $\mathcal{M}_1$ does not. 

Many graph algorithms and models operate by generating \emph{node colorings} or representations. These can be gathered into multisets of colors that are compared to assess whether two graphs are non-isomorphic. Other than convenient, in these cases it is interesting to characterise the discriminative power at the level of nodes by means of the concept of {color refinement}~\cite{morris2019weisfeiler,bodnar2022neural,bevilacqua2022equivariant}. 
\begin{definition}[Color refinement]
    Let $G = (V, E)$ be a graph and $C, D$ two coloring functions. Coloring $D$ refines colouring $C$ when $\forall v, w \in V, D(v) = D(w) \implies C(v) = C(w)$.
\end{definition}
\noindent Essentially, when $D$ refines $C$, if any two nodes are assigned the same color by $D$, the same holds for $C$. Equivalently, if two nodes are distinguished by $C$ (because they are assigned different colors), then they are also distinguished by $D$. When, for any graph, $D$ refines $C$, then we write $D \sqsubseteq C$ and, when also the opposite holds, ($C \sqsubseteq D$), we then write $D \equiv C$. As an example, for any $t \geq 0$ it can be shown that, on any graph, the coloring generated by the 1-WL algorithm at round $t+1$ refines that at round $t$, that is $C^{(t+1)} \sqsubseteq C^{(t)}$; this being essentially due to the injectivity property of the $\mathrm{RELABEL}$ function.

Importantly, as we were anticipating above, this concept can be directly translated into graph discrimination as long as graphs are represented by the multiset of their vertices' colors, or an \emph{injection} thereof. This link, which explains the use of the same symbol to refer to the concepts of color refinement and discriminative power, is explicitly shown, for example, in~\citet{pmlr-v139-bodnar21a,bevilacqua2022equivariant}. More concretely, it can be shown that, if coloring $D$ refines coloring $C$, then the algorithm which generates $D$ is at least as expressive as the one generating $C$, as long as multisets of node colours are directly compared to discriminate between graphs, or they are first encoded by a multiset injection before the comparison is carried out. In the following we will resort to the concept of color refinement to prove some of our theoretical results. This approach is not only practically convenient for the required derivations, but it also informs us on the discriminative power models have at the level of nodes, something which is of relevance to us given our focus on node-classification tasks.

Furthermore, even though \oursacro{} outputs node-wise embeddings, it can be augmented with a global readout function to generate a single graph-wise embeddings $\mathbf{x}_{G} = \mathrm{READOUT}\left(\ldblbrace \mathbf{x}_{i}^{(K)} : i \in V \rdblbrace\right)$. We will assume that all models discussed in this section are augmented with a global readout function.

\subsection{MPNNs on Directed Graphs}

Before moving forward to prove our expressiveness results, let us introduce the families of architectures we compare with. These embody straightforward approaches to adapt MPNNs to directed graphs.

Let MPNN-D be a model that performs message-passing by only propagating messages in accordance with the directionality of edges. Its layers can be defined as follows:

\begin{align}
\begin{split}
\mathbf{\m}^{(\ilayer)}_{i} &=  \mathrm{AGG}^{(\ilayer)}\left(\ldblbrace (\mathbf{x}_{j}^{(\ilayer-1)}, \mathbf{x}_{i}^{(\ilayer-1)}):\, j \in N_\out(i) \rdblbrace\right) \\
\mathbf{x}_{i}^{(\ilayer)}   &=  \mathrm{COM}^{(\ilayer)}\left(\mathbf{x}_i^{(\ilayer-1)}, \mathbf{\m}^{(\ilayer)}_i\right) \label{eq:mpnn-d}
\end{split}
\end{align}

Instead, let MPNN-U be a model which propagates messages equally along any incident edge, independent of their directionality. Its layers can be defined as follows:
\begin{align}
\begin{split}
\mathbf{\m}^{(\ilayer)}_{i} &=  \mathrm{AGG}^{(\ilayer)}\left(\ldblbrace (\mathbf{x}_{j}^{(\ilayer-1)}, \mathbf{x}_{i}^{(\ilayer-1)}):\, j \in N_\out(i) \rdblbrace \cup \ldblbrace (\mathbf{x}_{j}^{(\ilayer-1)}, \mathbf{x}_{i}^{(\ilayer-1)}):\, j \in N_\inp(i) \rdblbrace \right) \\
\mathbf{x}_{i}^{(\ilayer)}   &=  \mathrm{COM}^{(\ilayer)}\left(\mathbf{x}_i^{(\ilayer-1)}, \mathbf{\m}^{(\ilayer)}_i\right) \label{eq:mpnn-u}
\end{split}
\end{align}

Note that if there are no bi-directional edges, MPNN-U is equivalent to first converting the graph to its undirected form (where the edge set is redefined as $E^{(u)}=\{ 
(i, j): (i, j) \in E \lor (j, i) \in E \}$) and then running an undirected MPNN(~\cref{eq:mpnn}). In practice, we observe that the number of bi-directional edges is generally small on average, while \emph{extremely} small on specific datasets (see~\cref{tab:datasets-statistics}). In these cases, we expect the empirical performance of the two approaches to be close to each other. We remark that, in our experiments, we opt for the latter strategy as it is easier and more efficient to implement.

We can now formally define families for the models we will be comparing.

\begin{definition}[Model families] \label{def:model_families}
    Let $\mathcal{M}_\mathrm{MPNN-D}$ be the family of Message Passing Neural Networks on the directed graph (\cref{eq:mpnn-d}), $\mathcal{M}_\mathrm{MPNN-U}$ that of Message Passing Neural Networks on the undirected form of the graph (\cref{eq:mpnn-u}), and $\mathcal{M}_\mathrm{\oursacro{}}$ that of \oursacro{} models (\cref{eq:directed-mpnn}).
\end{definition}

\subsection{Comparison with D-WL} \label{app:d-wl-proof}

We start by restating~\cref{thm:dirgnn-as-expressive-as-d-wl} more formally:

\begin{theorem}\label{thm:dirgnn-as-expressive-as-d-wl-formal}
    $\mathcal{M}_{\oursacro{}}$ is as expressive as D-WL if $\mathrm{AGG}^{(k)}_{\out}$, $\mathrm{AGG}^{(k)}_{\inp}$, and $\mathrm{COM}^{(k)}$ are injective for all $k$ and node representations are aggregated via an injective $\mathrm{READOUT}$ function. 
\end{theorem}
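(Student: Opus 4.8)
The plan is to prove the equivalence by showing mutual refinement between the coloring produced by D-WL and the one implicitly computed by a suitably parametrized \oursacro{} (the latter being well-defined once $\mathrm{AGG}^{(k)}_{\out}$, $\mathrm{AGG}^{(k)}_{\inp}$, and $\mathrm{COM}^{(k)}$ are injective). Concretely, I would denote by $D^{(t)}$ the D-WL coloring after $t$ rounds and by $\chi^{(t)}(i) := \mathbf{x}_i^{(t)}$ the node embeddings produced by a \oursacro{} with injective aggregators and combination at each layer, both started from the constant coloring $c$. The goal is to prove $\chi^{(t)} \equiv D^{(t)}$ for all $t \geq 0$, i.e. $\chi^{(t)} \sqsubseteq D^{(t)}$ and $D^{(t)} \sqsubseteq \chi^{(t)}$; once this node-level equivalence is in place, the theorem follows from the standard argument (cited in the excerpt via~\citet{pmlr-v139-bodnar21a,bevilacqua2022equivariant}) that an injective $\mathrm{READOUT}$ over the multiset of node colors turns a color-refinement equivalence into an equivalence of discriminative power at the graph level, hence $\mathcal{M}_{\oursacro{}}$ is as expressive as D-WL.

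Both directions go by induction on $t$, with the base case being immediate since both colorings are constant. For the direction $\chi^{(t)} \sqsubseteq D^{(t)}$ (\oursacro{} is no more powerful than D-WL), I would assume $\chi^{(t-1)} \sqsubseteq D^{(t-1)}$ and show that if $D^{(t)}(i) = D^{(t)}(j)$ then $\chi^{(t)}(i) = \chi^{(t)}(j)$: equality of D-WL colors at round $t$ means $D^{(t-1)}(i) = D^{(t-1)}(j)$, $\ldblbrace D^{(t-1)}(u) : u \in N_\out(i)\rdblbrace = \ldblbrace D^{(t-1)}(u) : u \in N_\out(j)\rdblbrace$, and likewise for in-neighbors; pushing these equalities through the induction hypothesis (which says $D^{(t-1)}$-equal nodes are $\chi^{(t-1)}$-equal, hence the multisets of $\chi^{(t-1)}$-colors over the out- and in-neighborhoods also coincide) and then applying the (deterministic) maps $\mathrm{AGG}^{(t)}_{\out}, \mathrm{AGG}^{(t)}_{\inp}, \mathrm{COM}^{(t)}$ yields $\chi^{(t)}(i) = \chi^{(t)}(j)$. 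For the converse direction $D^{(t)} \sqsubseteq \chi^{(t)}$ (\oursacro{} is at least as powerful), I would assume $D^{(t-1)} \sqsubseteq \chi^{(t-1)}$ and use injectivity: if $\chi^{(t)}(i) = \chi^{(t)}(j)$, then injectivity of $\mathrm{COM}^{(t)}$ forces $\chi^{(t-1)}(i) = \chi^{(t-1)}(j)$ and $\mathbf{m}^{(t)}_{i,\inp} = \mathbf{m}^{(t)}_{j,\inp}$, $\mathbf{m}^{(t)}_{i,\out} = \mathbf{m}^{(t)}_{j,\out}$; injectivity of $\mathrm{AGG}^{(t)}_{\out}$ (resp. $\mathrm{AGG}^{(t)}_{\inp}$) then forces equality of the multisets $\ldblbrace \chi^{(t-1)}(u) : u \in N_\out(i)\rdblbrace = \ldblbrace \chi^{(t-1)}(u) : u \in N_\out(j)\rdblbrace$ (resp. for in-neighbors); finally the induction hypothesis $D^{(t-1)} \sqsubseteq \chi^{(t-1)}$ lets me lift these to the corresponding equalities of $D^{(t-1)}$-multisets, and combined with $D^{(t-1)}(i) = D^{(t-1)}(j)$ and injectivity of D-WL's $\mathrm{RELABEL}$ this gives $D^{(t)}(i) = D^{(t)}(j)$.

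One subtlety worth spelling out is that for the second direction I should not only assume the aggregators are injective as functions but make sure the argument respects that the messages are pairs $(\mathbf{x}_j, \mathbf{x}_i)$; since $\mathbf{x}_i^{(t-1)}$ is a common second coordinate shared across all messages into $i$, the multiset $\ldblbrace (\chi^{(t-1)}(u), \chi^{(t-1)}(i)) : u \in N_\out(i)\rdblbrace$ carries exactly the same information as the pair $\big(\chi^{(t-1)}(i), \ldblbrace \chi^{(t-1)}(u) : u \in N_\out(i)\rdblbrace\big)$, so I can freely pass between the two formulations; I would note this once and move on. I also need to observe that a \oursacro{} with injective layer maps is actually realizable over the (finite, since node features live in a fixed finite set and graphs are finite) domains involved — this is the point deferred to the discussion after the theorem and in~\citet{DBLP:conf/iclr/XuHLJ19}, and I would simply invoke it rather than re-derive a concrete construction. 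The main obstacle, such as it is, is not any single hard step but bookkeeping: being careful that the induction hypothesis is the \emph{refinement} statement (not mere equality at a fixed round) so that it transports multiset equalities in both directions, and being explicit that D-WL's $\mathrm{RELABEL}$ is itself injective so the converse closes. I expect the bulk of the writeup to be the two symmetric induction steps, each a short chain of implications, followed by the one-line reduction from node-coloring equivalence to graph-level expressivity via the injective $\mathrm{READOUT}$.
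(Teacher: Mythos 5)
Your proposal is correct and follows essentially the same two-pronged induction the paper uses: first show that D-WL refines the \oursacro{} coloring at every round (so \oursacro{} is bounded above by D-WL, which needs no injectivity), then show the converse refinement holds once $\mathrm{AGG}^{(k)}_{\out}$, $\mathrm{AGG}^{(k)}_{\inp}$, $\mathrm{COM}^{(k)}$ are injective, and finally lift the node-level equivalence to graph discrimination via the injective $\mathrm{READOUT}$; the handling of the $(\mathbf{x}_j, \mathbf{x}_i)$ pair via the shared second coordinate also matches the paper's use of~\citep[Lemmas 2--3]{bevilacqua2022equivariant}. One small caution: you have the $\sqsubseteq$ symbol backwards relative to the paper's convention (where $D \sqsubseteq C$ means $D(v)=D(w) \implies C(v)=C(w)$), so the direction you label ``$\chi^{(t)} \sqsubseteq D^{(t)}$'' is, in the paper's notation, $D^{(t)} \sqsubseteq \chi^{(t)}$ and vice versa --- but the implications you actually prove are the right ones, so this is a notational slip rather than a gap.
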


We now prove the theorem by showing that D-WL and \oursacro{} (under the hypotheses of the theorem) are equivalent in their expressive power. We will show this in terms of color refinement and, in particular, by showing that, not only the D-WL coloring at any round $t$ refines that induced by any \oursacro{} at the same time step, but also that, when \oursacro{}'s components are injective, the opposite holds.

\begin{proof}[Proof of \Cref{thm:dirgnn-as-expressive-as-d-wl-formal}]

Let us begin by showing that \oursacro{} is upper-bounded in expressive power by the D-WL test. We do this by showing that, at any $t \geq 0$, the D-WL coloring $D^{(t)}$ refines the coloring induced by the representations of any \oursacro{}, that is, on any graph $G = (V, E)$, $\forall v, w \in V, \quad D^{(t)}(v) = D^{(t)}(w) \implies h^{(t)}_v = h^{(t)}_w$, where $h^{(t)}_v$ refers to the representation of node $v$ in output from any \oursacro{} at layer $t > 0$. For $t = 0$ nodes are populated with a constant color: $\forall v \in V: D^{(0)}_v = \bar{c}$, or an appropriate encoding thereof in the case of the Dir-GNN $h^{(0)}_v = \textrm{enc}(\bar{c})$.

We proceed by induction. The base step trivially holds for $t=0$ given how nodes are initialised. As for the recursion step, let us assume the thesis hold for $t>0$; we seek to prove it also hold for $t+1$, showing that $\forall v, w \in V, \quad D^{(t+1)}(v) = D^{(t+1)}(w) \implies h^{(t+1)}_v = h^{(t+1)}_w$. $D^{(t+1)}(v) = D^{(t+1)}(w)$ implies the equality of the inputs of the $\textrm{RELABEL}$ function given it is injective. That is: $D^{(t)}(v) = D^{(t)}(w)$, $\ldblbrace D^{(t)}(u) : u \in N_\out(v) \rdblbrace = \ldblbrace D^{(t)}(u) : u \in N_\out(w) \rdblbrace$, and $\ldblbrace D^{(t)}(u) : u \in N_\inp(v) \rdblbrace = \ldblbrace D^{(t)}(u) : u \in N_\inp(w) \rdblbrace$. By the induction hypothesis, we immediately get $h^{(t)}_v = h^{(t)}_w$. Also, the induction hypothesis, along with \citep[Lemma 2]{bevilacqua2022equivariant}, gives us: $\ldblbrace h^{(t)}_u : u \in N_\out(v) \rdblbrace = \ldblbrace h^{(t)}_u : u \in N_\out(w) \rdblbrace$, and $\ldblbrace h^{(t)}_u : u \in N_\inp(v) \rdblbrace = \ldblbrace h^{(t)}_u : u \in N_\inp(w) \rdblbrace$. Given that $h^{(t)}_v = h^{(t)}_w = \bar{h}$, we also have $\ldblbrace (h^{(t)}_u, h^{(t)}_v) : u \in N_\out(v) \rdblbrace = \ldblbrace (h^{(t)}_u, h^{(t)}_w) : u \in N_\out(w) \rdblbrace$, and $\ldblbrace (h^{(t)}_u, h^{(t)}_v) : u \in N_\inp(v) \rdblbrace = \ldblbrace (h^{(t)}_u, h^{(t)}_w) : u \in N_\inp(w) \rdblbrace$: it would be sufficient, for example, to construct the well-defined function $\varphi: h \mapsto (h, \bar{h})$ and invoke~\citep[Lemma 3]{bevilacqua2022equivariant}. These all represents the only inputs to a \oursacro{} layer -- the two $\textrm{AGG}^{(t)}$ and the $\textrm{COM}^{(t)}$ function in particular. Being well defined functions, they must return equal outputs for equal inputs, so that  $h^{(t+1)}_v = h^{(t+1)}_w$.

In a similar way, we show that, when $\textrm{AGG}$ and $\textrm{COM}$ functions are injective, the opposite hold, that is, $\forall v, w \in V, h^{(t)}_v = h^{(t)}_w \implies D^{(t)}(v) = D^{(t)}(w)$. The base step holds for $t=0$ for the same motivations above. Let us assume the thesis holds for $t>0$ and seek to show that for $t+1, \forall v, w \in V, h^{(t+1)}_v = h^{(t+1)}_w \implies D^{(t+1)}(v) = D^{(t+1)}(w)$. If $h^{(t+1)}_v = h^{(t+1)}_w$, then $\mathrm{COM}^{(t)}\left(\mathbf{h}_{v}^{(t)}, \mathbf{\m}^{(t)}_{v,\out}, \mathbf{\m}^{(t)}_{v,\inp}\right) = \mathrm{COM}^{(t)}\left(\mathbf{h}_{w}^{(t)}, \mathbf{\m}^{(t)}_{w,\out}, \mathbf{\m}^{(t)}_{w,\inp}\right)$. As $\mathrm{COM}^{(t)}$ is injective, it must also hold $\mathbf{h}_{v}^{(t)} = \mathbf{h}_{w}^{(t)}$, which, by the induction hypothesis, gives $D^{(t)}(v) = D^{(t)}(w)$. Furthermore, by the same argument, we must also have $\mathbf{\m}^{(t)}_{v,\out} = \mathbf{\m}^{(t)}_{w,\out}$, and $\mathbf{\m}^{(t)}_{v,\inp} = \mathbf{\m}^{(t)}_{w,\inp}$. At this point we recall that, for any node $v$, $\mathbf{\m}^{(t)}_{v,\out} = \textrm{AGG}^{(t)}_\out(\ldblbrace (h^{(t)}_u, h^{(t)}_v) : u \in N_\out(v) \rdblbrace)$ and $\mathbf{\m}^{(t)}_{v,\inp} = \textrm{AGG}^{(t)}_\inp(\ldblbrace (h^{(t)}_u, h^{(t)}_v) : u \in N_\inp(v) \rdblbrace)$, where, by our assumption, $\textrm{AGG}^{(t)}_\inp$, $\textrm{AGG}^{(t)}_\out$ are injective. This implies the equality between the multisets in input, i.e. $\ldblbrace (h^{(t)}_u, h^{(t)}_v) : u \in N_\out(v) \rdblbrace = \ldblbrace (h^{(t)}_u, h^{(t)}_w) : u \in N_\out(w) \rdblbrace$, and $\ldblbrace (h^{(t)}_u, h^{(t)}_v) : u \in N_\inp(v) \rdblbrace = \ldblbrace (h^{(t)}_u, h^{(t)}_w) : u \in N_\inp(w) \rdblbrace$. From these equalities it clearly follows $\ldblbrace h^{(t)}_u : u \in N_\out(v) \rdblbrace = \ldblbrace h^{(t)}_u : u \in N_\out(w) \rdblbrace$, and $\ldblbrace h^{(t)}_u : u \in N_\inp(v) \rdblbrace = \ldblbrace h^{(t)}_u : u \in N_\inp(w) \rdblbrace$ -- one can invoke~\citep[Lemma 3]{bevilacqua2022equivariant} with the well defined function $\varphi: (h_1, h_2) \mapsto h_1$. Again, by the induction hypothesis, and \citep[Lemma 2]{bevilacqua2022equivariant}, we have $\ldblbrace D^{(t)}(u) : u \in N_\inp(v) \rdblbrace = \ldblbrace D^{(t)}(u) : u \in N_\inp(w) \rdblbrace$ and $\ldblbrace D^{(t)}(u) : u \in N_\out(v) \rdblbrace = \ldblbrace D^{(t)}(u) : u \in N_\out(w) \rdblbrace$. Finally, as all and only inputs to the $\textrm{RELABEL}$ function are equal, $D^{(t+1)}(v) = D^{(t+1)}(w)$. The proof then terminates: as the $\textrm{READOUT}$ function is assumed to be injective, having proved the refinement holds at the level of nodes, this is enough to also state that, if two graphs are distinguished by D-WL they are also distinguished by a \oursacro{} satisfying the injectivity assumptions above.
\end{proof}

As for the existence and implementation of these injective components, constructions can be found in \citet{DBLP:conf/iclr/XuHLJ19} and \citet{corso2020principal}.
In particular, in \citep[Lemma 5]{DBLP:conf/iclr/XuHLJ19}, the authors show that, for a countable $\mathcal{X}$, there exist maps $f: \mathcal{X} \rightarrow \mathbb{R}^n$ such that function $h: X \mapsto \sum_{x \in X} f(x)$ is injective for subsets $X \subset \mathcal{X}$ of bounded cardinality, and any multiset function $g$ can be decomposed as $g(X) = \varphi \big ( \sum_{x \in X} f(x) \big )$ for some function $\varphi$. As $\mathcal{X}$ is countable, there always exists an injection $Z: x \rightarrow \mathbb{N}$, and function $f$ can be constructed, for example, as $f(x) = N^{-Z(x)}$, with $N$ being the maximum (bounded) cardinality of subsets $X \subset \mathcal{X}$. These constructions are used to build multiset aggregators in the GIN architecture~\citep{DBLP:conf/iclr/XuHLJ19} when operating on features from a countable set and neighbourhoods of bounded size. Under the same assumptions, the same constructions can be readily adapted to express the aggregators $\text{AGG}^{(k)}_\out, \text{AGG}^{(k)}_\inp$ as well as $\text{READOUT}$ in our \oursacro{}. 
Similarly to the above, under the same assumptions, injective maps for elements $(c, X), c \in \mathcal{X}, X \subset \mathcal{X}$ can be constructed as $h(c, X) = (1+\epsilon)f(c) + \sum_{x \in X}f(x)$ for infinitely many choice of $\epsilon$, including all irrational numbers, and any function $g$ on couples $(c, X)$ can be decomposed as $g(c, X) = \varphi \big ( (1+\epsilon)f(c) + \sum_{x \in X}f(x) \big )$~\citep[Corollary 6]{DBLP:conf/iclr/XuHLJ19}. The same approach can be extended to our use-case. In fact, for irrationals $\epsilon_X, \epsilon_Y$, an injection on triple $(c, X, Y)$ (with $c \in \mathcal{X}, X, Y \subset \mathcal{X}$ of bounded size and $\mathcal{X}$ countable) can be built as $h(c, X, Y) = \ell \big ( (1+\epsilon_X)f_X(c) + \sum_{x \in X} f_X(x), (1+\epsilon_Y)f_Y(c) + \sum_{y \in Y} f_Y(y) \big )$, where $\ell$ is an injection on a countable set and $(1+\epsilon_X)f_X(c) + \sum_{x \in X} f_X(x), (1+\epsilon_Y)f_Y(c) + \sum_{y \in Y} f_Y(y)$ realise injections over couples $(c, X), (c, Y)$ as described above. This construction can be used to express the $\text{COM}^{(k)}$ components of \oursacro{}. 
In practice, in view of the Universal Approximation Theorem (UAT)~\citep{HORNIK1989359}, \citet{DBLP:conf/iclr/XuHLJ19} propose to use Multi-Layer Perceptrons (MLPs) to learn the required components described above, functions $f$ and $\varphi$ in particular. We note that, in order to resort to the original statement of the UAT, this approach additionally requires boundedness of set $\mathcal{X}$ itself. Similar practical parameterizations can be used to build our desired \oursacro{} layers. Last, we refer readers to~\citep{corso2020principal} for constructions which can be adopted in the case where initial node features have a continuous, uncountable support.

\subsection{Comparison with MPNNs}

In this subsection we prove \cref{thm:dirgnn-strictly-more-expressive-than-mpnn}, which we restate more formally and split into two separate parts, one regarding MPNN-D and the other regarding MPNN-U. We start by proving that \oursacro{} is stricly more expressive than MPNN-D, i.e an MPNN which operates on directed graphs by only propagating messages according to the directionality of edges:

\begin{theorem}\label{thm:dirgnn-strictly-more-expressive-than-mpnn-d}
    $\mathcal{M}_\mathrm{\oursacro{}}$ is strictly more powerful than $\mathcal{M}_\mathrm{MPNN-D}$.
\end{theorem}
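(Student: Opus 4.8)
The plan is to verify the two conditions of \cref{def:strictly_more_expressive}: first that $\mathcal{M}_\mathrm{\oursacro{}} \sqsubseteq \mathcal{M}_\mathrm{MPNN-D}$ (i.e.\ \oursacro{} is at least as expressive as MPNN-D), and second that there is a pair of directed graphs separated by some \oursacro{} but by no MPNN-D, which gives $\mathcal{M}_\mathrm{MPNN-D} \not\sqsubseteq \mathcal{M}_\mathrm{\oursacro{}}$.

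For the first part I would observe that MPNN-D is a \emph{special case} of \oursacro{}: given any MPNN-D with maps $\mathrm{AGG}^{(k)},\mathrm{COM}^{(k)}$ and readout, instantiate \cref{eq:directed-mpnn} by setting $\mathrm{AGG}^{(k)}_{\out} := \mathrm{AGG}^{(k)}$, letting $\mathrm{AGG}^{(k)}_{\inp}$ be the constant map sending every multiset to $\mathbf{0}$, and taking $\mathrm{COM}^{(k)}(\mathbf{x},\mathbf{m}_{\inp},\mathbf{m}_{\out}) := \mathrm{COM}^{(k)}(\mathbf{x},\mathbf{m}_{\out})$, keeping the same $\mathrm{READOUT}$. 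The resulting \oursacro{} computes exactly the same node and graph embeddings as the original MPNN-D on every graph, so $\mathcal{M}_\mathrm{MPNN-D} \subseteq \mathcal{M}_\mathrm{\oursacro{}}$ as families of models, whence the implication in \cref{def:at_least_as_expressive} holds trivially. (An alternative route goes through \cref{thm:dirgnn-as-expressive-as-d-wl-formal}: D-WL refines the ``out-neighbour-only'' colour refinement that upper-bounds MPNN-D by the very same induction used in that proof, and a \oursacro{} with injective components matches D-WL.)

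For the second part I would exhibit the pair on vertex set $\{1,2,3,4\}$ given by $G_1$ with edges $\{(1,2),(3,4)\}$ (two disjoint directed edges) and $G_2$ with edges $\{(1,2),(3,2)\}$ (two edges into vertex $2$, with vertex $4$ isolated). They are non-isomorphic, since $G_2$ has a vertex of in-degree $2$ whereas $G_1$ has maximum in-degree $1$. No MPNN-D distinguishes them: with the constant node initialisation assumed in the analysis, a short induction on the layer index $k$ shows that any MPNN-D, run with fixed parameters on both graphs, assigns one common value $u^{(k)}$ to vertices $1,3$ and another common value $v^{(k)}$ to vertices $2,4$, \emph{in both graphs} --- the only information an MPNN-D layer (\cref{eq:mpnn-d}) extracts at a vertex is the multiset of its out-neighbours' embeddings, and in both $G_1$ and $G_2$ vertices $1,3$ each have exactly one out-neighbour, itself with no out-neighbours, while vertices $2,4$ have no out-neighbours at all. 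Hence the node-embedding multiset is $\{u^{(K)},u^{(K)},v^{(K)},v^{(K)}\}$ for both graphs and any $\mathrm{READOUT}$ produces the same graph embedding, i.e.\ $G_1 =_{\mathcal{M}_\mathrm{MPNN-D}} G_2$. In contrast, D-WL separates $G_1$ from $G_2$ already at the first round, since vertex $2$ of $G_2$ receives a colour reflecting an in-neighbour multiset of size $2$, unlike any vertex of $G_1$: the round-$1$ colour multiset is $[A,A,B,B]$ for $G_1$ but $[A,A,C,D]$ with $A,B,C,D$ pairwise distinct for $G_2$ (by injectivity of $\mathrm{RELABEL}$ on the distinct triples of own colour / out-multiset / in-multiset). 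By \cref{thm:dirgnn-as-expressive-as-d-wl-formal} there is then a \oursacro{} with injective aggregators (e.g.\ a GIN-style parametrisation of \cref{eq:directed-mpnn}) for which $G_1 \neq_{\mathcal{M}_\mathrm{\oursacro{}}} G_2$. Combining both parts with \cref{def:strictly_more_expressive} completes the proof.

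I expect the construction in the second part to be the main obstacle: one needs a pair of directed graphs whose \emph{forward} computation --- all that MPNN-D sees --- coincides while their in-edge structure, visible to D-WL, differs. Natural first attempts such as a directed path versus a vertex with two in-edges fail, because MPNN-D (equivalently, the out-neighbour colour refinement) detects forward path lengths; the disjoint-edge construction is designed precisely so that the out-neighbour unrolled computation trees agree across $G_1$ and $G_2$ while the in-degree sequences do not. The remaining steps --- the inductive claim that an MPNN-D embedding depends only on the out-unrolled computation tree, and the single-round D-WL computation --- are routine.
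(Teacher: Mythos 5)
Your proof is correct and takes essentially the same route as the paper: the first half uses the same "Dir-GNN simulates MPNN-D by ignoring the in-message" argument (the paper's Lemma~\ref{lemma:dirgnn-at-least-as-expressive-as-mpnn-d}), and the second half uses the same counterexample pair (up to relabeling) — two disjoint directed edges versus two edges sharing a target plus an isolated vertex — together with the same inductive argument that MPNN-D sees identical out-unrolled computations on both graphs, followed by a one-round D-WL separation and an appeal to Theorem~\ref{thm:dirgnn-as-expressive-as-d-wl-formal} (the paper's Lemma~\ref{lemma:exists-graphs-distinguished-by-dirgnn-but-not-by-mpnn-d}).
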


We begin by first proving the following lemmas:

\begin{lemma}\label{lemma:dirgnn-at-least-as-expressive-as-mpnn-d}
    $\mathcal{M}_\mathrm{\oursacro{}}$ is at least as expressive as $\mathcal{M}_\mathrm{MPNN-D}$ ($\mathcal{M}_\mathrm{\oursacro{}} \sqsubseteq \mathcal{M}_\mathrm{MPNN-D}$). 
\end{lemma}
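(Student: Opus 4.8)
The plan is to prove the containment $\mathcal{M}_\mathrm{\oursacro{}} \sqsubseteq \mathcal{M}_\mathrm{MPNN-D}$ by a direct simulation argument: I show that every MPNN-D model can be reproduced, layer by layer and node by node, by a suitable \oursacro{} model, so that whenever the former discriminates a pair of graphs the latter does too. Fix an arbitrary MPNN-D instance $M$ with layers $(\mathrm{AGG}^{(k)}, \mathrm{COM}^{(k)})_{k=1}^{\nlayers}$ and a global $\mathrm{READOUT}$ (see \cref{eq:mpnn-d}). I construct a \oursacro{} model $M'$ in the sense of \cref{eq:directed-mpnn} that uses the same initial encoding of the constant feature and the same $\mathrm{READOUT}$, whose out-aggregator at layer $k$ is precisely $\mathrm{AGG}^{(k)}$, whose in-aggregator at layer $k$ is an arbitrary constant map (say, the one sending every multiset to $\mathbf{0}$), and whose combine function at layer $k$, evaluated at $(\mathbf{x}, \mathbf{m}_{\inp}, \mathbf{m}_{\out})$, returns $\mathrm{COM}^{(k)}(\mathbf{x}, \mathbf{m}_{\out})$ — i.e. it simply discards its in-message argument. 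All these are well-defined maps, so $M' \in \mathcal{M}_\mathrm{\oursacro{}}$.

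Next I would argue by induction on the layer index $k$ that, on every directed graph $G = (V, E)$, the representation $\mathbf{x}_i^{(k)}$ produced by $M'$ equals the one produced by $M$, for every $i \in V$. The base case $k = 0$ holds because both models start from the same constant initialisation. For the inductive step, assume the two models agree at layer $k-1$; then the out-message of $M'$ at node $i$ is $\mathrm{AGG}^{(k)}(\ldblbrace (\mathbf{x}_j^{(k-1)}, \mathbf{x}_i^{(k-1)}) : j \in N_\out(i) \rdblbrace)$, which is exactly the message $\mathbf{m}_i^{(k)}$ that $M$ computes at node $i$ (same aggregator, same inputs over $N_\out(i)$ by the induction hypothesis). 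Since the combine function of $M'$ ignores the in-message and applies $\mathrm{COM}^{(k)}$ to $(\mathbf{x}_i^{(k-1)}, \mathbf{m}_i^{(k)})$, it outputs precisely $\mathbf{x}_i^{(k)}$ as computed by $M$. Hence the equality propagates through layer $k$, and inductively holds at layer $\nlayers$; applying the shared $\mathrm{READOUT}$ yields $M'(G) = M(G)$ for every $G$.

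The lemma then follows immediately from \cref{def:at_least_as_expressive}: if $G_1 \neq_{\mathcal{M}_\mathrm{MPNN-D}} G_2$ then some $M \in \mathcal{M}_\mathrm{MPNN-D}$ satisfies $M(G_1) \neq M(G_2)$, and the simulating $M' \in \mathcal{M}_\mathrm{\oursacro{}}$ satisfies $M'(G_1) = M(G_1) \neq M(G_2) = M'(G_2)$, hence $G_1 \neq_{\mathcal{M}_\mathrm{\oursacro{}}} G_2$. I expect the only delicate point to be checking that the degenerate choices of in-aggregator and combine function genuinely lie inside the \oursacro{} family: this is immediate for the abstract family of \cref{eq:directed-mpnn}, which permits arbitrary maps, but in the concrete parameterised instantiations (Dir-GCN, Dir-Sage, Dir-GAT) one would instead observe that zeroing out the in-direction weight matrix $\mathbf{W}^{(k)}_{\inp}$ realises the same effect. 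An alternative would be to phrase the argument via color refinement — showing the \oursacro{} coloring refines the MPNN-D coloring at every round, in analogy with the proof of \cref{thm:dirgnn-as-expressive-as-d-wl-formal} — but the explicit simulation above is the most direct route.
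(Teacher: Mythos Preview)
Your proposal is correct and follows essentially the same approach as the paper: both argue that \oursacro{} generalises MPNN-D by choosing the combine map to ignore the in-message (and matching the out-aggregator and readout), so that any MPNN-D is simulated exactly. Your version is simply a more explicit rendering of the paper's brief argument, with the layer-by-layer induction spelled out.
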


\begin{proof}[Proof of Lemma~\ref{lemma:dirgnn-at-least-as-expressive-as-mpnn-d}]
    We prove this Lemma by noting that the \oursacro{} architecture generalizes that of an MPNN-D, so that a \oursacro{} model can (learn to) simulate a standard MPNN-D by adopting particular weights. Specifically, \oursacro{} defaults to MPNN-D (which only sends messages along the out edges) if $\mathrm{COM}^{(k)}\left(\mathbf{x}_{i}^{(k-1)},\mathbf{\m}^{(k)}_{i,\inp}, \mathbf{\m}^{(k)}_{i,\out}\right) = \mathrm{COM}^{(k)}\left(\mathbf{x}_{i}^{(k-1)}, \mathbf{\m}^{(k)}_{i,\out}\right)$, i.e. $\mathrm{COM}$ ignores in-messages, and the two readout modules coincide. Importantly, the direct implication of the above is that whenever an MPNN-D model distinguishes two graphs, then there exists a \oursacro{} which can implement such a model and then discriminate the two graphs as well.
\end{proof}

\begin{figure*}
    \centering
    \includegraphics[width=.7\textwidth]{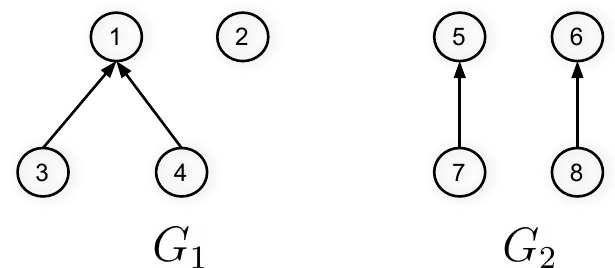}
    \vspace{3mm}
    \caption{Two non-isomorphic directed graphs that cannot be distinguished by any MPNN-D model but can be distinguished by \oursacro{}.}
    \label{fig:mpnn-d_fails}
\end{figure*}

\begin{lemma}\label{lemma:exists-graphs-distinguished-by-dirgnn-but-not-by-mpnn-d}
    There exist graph pairs discriminated by a \oursacro{} model which are not discriminated by any MPNN-D model.
\end{lemma}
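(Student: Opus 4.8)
The plan is to exhibit a concrete pair of non-isomorphic directed graphs $G_1, G_2$ — e.g.\ the pair in \cref{fig:mpnn-d_fails} — that coincide on every out-neighbourhood statistic, so that no MPNN-D can tell them apart, but whose in-degree profiles differ, so that they are separated by D-WL after a single round and hence by a suitable \oursacro{}. A convenient witness is $G_1 = (\{a,b,c,d\},\{(a,c),(b,c)\})$ and $G_2 = (\{a,b,c,d\},\{(a,c),(b,d)\})$: they are non-isomorphic because $G_1$ has a vertex of in-degree $2$ while every vertex of $G_2$ has in-degree at most $1$, yet they share the same out-degree sequence $(1,1,0,0)$ and, crucially, the same multiset of rooted \emph{out}-unfoldings at every depth (in both graphs two vertices unfold to root $\to$ sink and two vertices are sinks).

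First I would prove that $G_1 =_{\mathcal{M}_\mathrm{MPNN-D}} G_2$. The key step is the structural observation that, for any MPNN-D instantiation and any $k\geq 0$, the embedding $\mathbf{x}^{(k)}_v$ is a function of the depth-$k$ rooted out-unfolding of $v$ alone; this follows by induction on $k$ exactly as in the classical argument relating MPNNs to $1$-WL, using only that $\mathrm{AGG}^{(k)}$ and $\mathrm{COM}^{(k)}$ are well-defined functions and that MPNN-D aggregates solely over $N_\out(v)$ (with constant initial features the base case is immediate). Since $G_1$ and $G_2$ have identical multisets of depth-$k$ out-unfoldings for all $k$, they produce identical multisets $\ldblbrace \mathbf{x}^{(K)}_v : v\in V\rdblbrace$ of final node embeddings, hence — applying the same $\mathrm{READOUT}$ — identical graph embeddings. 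As this holds for \emph{every} choice of the MPNN-D components, no MPNN-D distinguishes the two graphs.

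Next I would show that some \oursacro{} model distinguishes them. By \cref{thm:dirgnn-as-expressive-as-d-wl-formal} there is a \oursacro{} instance that is as expressive as D-WL (injective aggregators and readout exist in the finite, bounded-degree, constant-feature regime at hand, as discussed after that theorem), so it suffices that D-WL separates $G_1$ and $G_2$. Because all vertices start with the same colour, after one refinement round a vertex's D-WL colour encodes exactly the pair $(|N_\out(v)|, |N_\inp(v)|)$; the multiset of such pairs is $\{(1,0),(1,0),(0,2),(0,0)\}$ for $G_1$ and $\{(1,0),(1,0),(0,1),(0,1)\}$ for $G_2$, and since $\mathrm{RELABEL}$ is injective these give different colour multisets, so D-WL halts declaring the graphs non-isomorphic. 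Hence there is a \oursacro{} model $M$ with $G_1\neq_M G_2$, proving the Lemma — and, combined with \cref{lemma:dirgnn-at-least-as-expressive-as-mpnn-d}, also \cref{thm:dirgnn-strictly-more-expressive-than-mpnn-d-d} (i.e.\ \cref{thm:dirgnn-strictly-more-expressive-than-mpnn-d}).

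The main obstacle is the universally-quantified half: showing that \emph{no} MPNN-D separates the pair cannot be done instantiation-by-instantiation and must be reduced to the fact that an MPNN-D node embedding depends only on the rooted out-unfolding of that node (equivalently, that MPNN-D is refined by the ``out-directed'' colour refinement). Once this reduction is in place, both the verification that the two hand-crafted graphs have matching out-unfoldings and the check that they differ already at the level of in-degrees are routine.
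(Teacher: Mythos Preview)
Your proposal is correct and essentially mirrors the paper's own proof: the witness pair you construct is (up to relabeling) exactly the pair in \cref{fig:mpnn-d_fails}, and both proofs establish MPNN-D indistinguishability by showing node colours depend only on out-neighbourhoods (you phrase this via rooted out-unfoldings, the paper via an explicit induction on the specific vertices), then separate the graphs with D-WL after one refinement round and invoke \cref{thm:dirgnn-as-expressive-as-d-wl-formal}. The only cosmetic difference is that your out-unfolding reduction is stated as a general lemma, whereas the paper carries out the induction directly on the eight named vertices.
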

\begin{proof}[Proof of Lemma~\ref{lemma:exists-graphs-distinguished-by-dirgnn-but-not-by-mpnn-d}]
    Let $G_1$ and $G_2$ be the non-isomorphic graphs illustrated in~\cref{fig:mpnn-d_fails}. To confirm that they are not isomorphic, simply note that node $1$ in $G_1$ has an in-degree of two, while no node in $G_2$ has an in-degree of two. 
    
    To prove that no MPNN-D model can distinguish between the two graphs, we will show that any MPNN-D induce the same coloring for the two graphs. In particular, we will show that, if $C^{(t)}(v)$ refers to the representation a MPNN-D computes for node $v$ at time step $t$, then $C^{(t)}(1) = C^{(t)}(2) = C^{(t)}(5) = C^{(t)}(6)$ and $C^{(t)}(3) = C^{(t)}(4) = C^{(t)}(7) = C^{(t)}(8)$ for any $t \geq 0$.

    We proceed by induction. The base step trivially holds for $t=0$ given that nodes are all initialised with the same color. As for the inductive step, let us assume that the statement holds for $t$ and prove that it also holds for $t+1$.
    Assume $C^{(t)}(1) = C^{(t)}(2) = C^{(t)}(5) = C^{(t)}(6)$ and $C^{(t)}(3) = C^{(t)}(4) = C^{(t)}(7) = C^{(t)}(8)$ (induction hypothesis).
    Then we have:
    \begin{align*}
        C^{(t+1)}(1) &= \mathrm{COM}^{(t)}\left(C^{(t)}(1), \mathrm{AGG}^{(t)}(\ldblbrace  \rdblbrace)\right) \\
        C^{(t+1)}(2) &= \mathrm{COM}^{(t)}\left(C^{(t)}(2), \mathrm{AGG}^{(t)}(\ldblbrace  \rdblbrace)\right) \\
        C^{(t+1)}(5) &= \mathrm{COM}^{(t)}\left(C^{(t)}(5), \mathrm{AGG}^{(t)}(\ldblbrace  \rdblbrace)\right) \\
        C^{(t+1)}(6) &= \mathrm{COM}^{(t)}\left(C^{(t)}(6), \mathrm{AGG}^{(t)}(\ldblbrace  \rdblbrace)\right) \\
    \end{align*}
    The induction hypothesis then gives us that $C^{(t+1)}(1) = C^{(t+1)}(2) = C^{(t+1)}(5) = C^{(t+1)}(6)$.
    As for the other nodes, we have:
    \begin{align*}
        C^{(t+1)}(3) &= \mathrm{COM}^{(t)}\left(C^{(t)}(3), \mathrm{AGG}^{(t)}(\ldblbrace (C^{(t)}(3), C^{(t)}(1)) \rdblbrace)\right) \\
        C^{(t+1)}(4) &= \mathrm{COM}^{(t)}\left(C^{(t)}(4), \mathrm{AGG}^{(t)}(\ldblbrace (C^{(t)}(4), C^{(t)}(1)) \rdblbrace)\right) \\
        C^{(t+1)}(7) &= \mathrm{COM}^{(t)}\left(C^{(t)}(7), \mathrm{AGG}^{(t)}(\ldblbrace (C^{(t)}(7), C^{(t)}(5)) \rdblbrace)\right) \\
        C^{(t+1)}(8) &= \mathrm{COM}^{(t)}\left(C^{(t)}(8), \mathrm{AGG}^{(t)}(\ldblbrace (C^{(t)}(8), C^{(t)}(6)) \rdblbrace)\right) \\
    \end{align*}
    The induction hypothesis then gives us that $C^{(t+1)}(3) = C^{(t+1)}(4) = C^{(t+1)}(7) = C^{(t+1)}(8)$. Importantly, the above holds for any parameters of the $\textrm{COM}^{(t)}$ and $\textrm{AGG}^{(t)}$ functions. As \emph{any} MPNN-D will always compute the same set of node representations for the two graphs, it follows that no MPNN-D can disambiguate between the two graphs, no matter the way they are aggregated. To conclude our proof, we show that there exists \oursacro{} models that can discriminate the two graphs. In view of~\cref{thm:dirgnn-as-expressive-as-d-wl}, it is enough to show that the two graphs are disambiguated by D-WL.
    Applying D-WL to the two graphs leads to different colorings after two iterations (see~\cref{tab:wl-d-1}), so the D-WL algorithm terminates deeming the two graphs non-isomorphic. Then, by~\cref{thm:dirgnn-as-expressive-as-d-wl}, there exist \oursacro{}s which distinguish them. In fact, it is easy to even construct simple 1-layer architecture that can assign the two graphs distinct representations, an exercise which we leave to the reader. Importantly, note how \oursacro{} can distinguish between the two graphs hinging on the discrimination of non-isomorphic nodes such as 1, 2, something no MPNN-D is capable of doing.
\end{proof}

\begin{table*}[t]
\vskip 0.15in
\begin{center}
\begin{small}
\begin{sc}
\begin{tabular}{l||cccc|cccc}
\toprule
Iteration     & Node 1  & Node 2 & Node 3 & Node 4 & Node 5  & Node 6 & Node 7 & Node 8 \\
\midrule
1             &  $A$      &  $A$     &  $A$     &  $A$     &  $A$      &  $A$     &  $A$     &  $A$     \\
2             &  $B$      &  $C$     &  $C$     &  $D$     &  $E$      &  $E$     &  $C$     &  $C$     \\
\bottomrule
\end{tabular}

\begin{tabular}{lcccr}
\toprule
$M(v)$                                               & $\mathrm{RELABEL(M(v))}$   \\
\midrule
initialize                                           &   $A$                        \\
$(A, \ldblbrace \rdblbrace, \ldblbrace (A, A), (A, A) \rdblbrace)$             &   $B$                        \\
$(A, \ldblbrace (A, A) \rdblbrace, \ldblbrace \rdblbrace)$                     &   $C$                        \\
$(A, \ldblbrace  \rdblbrace, \ldblbrace \rdblbrace)$                           &   $D$                        \\
$(A, \ldblbrace  \rdblbrace, \ldblbrace (A, A) \rdblbrace)$                    &   $E$                        \\
\bottomrule
\end{tabular}

\end{sc}
\end{small}
\end{center}
\caption{Node colorings at different iterations, as well as the $\mathrm{RELABEL}$ hash function, when applying D-WL to the two graphs in~\cref{fig:mpnn-d_fails}.}
\label{tab:wl-d-1}
\end{table*}

With the two results above prove \cref{thm:dirgnn-strictly-more-expressive-than-mpnn-d}.

\begin{proof}[Proof of Theorem~\ref{thm:dirgnn-strictly-more-expressive-than-mpnn-d}]
    The theorem follows directly from Lemmas~\ref{lemma:dirgnn-at-least-as-expressive-as-mpnn-d} and~\ref{lemma:exists-graphs-distinguished-by-dirgnn-but-not-by-mpnn-d}.
\end{proof}

\begin{figure*}
    \centering
    \includegraphics[width=.55\textwidth]{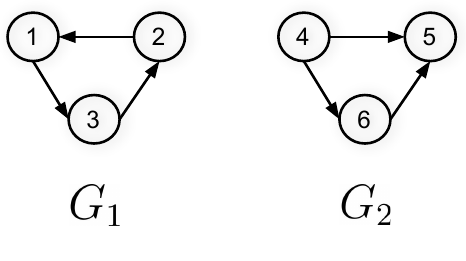}
    \caption{Two non-isomorphic directed graphs that cannot be distinguished by any MPNN-U model but can be distinguished by \oursacro{}.}
    \label{fig:mpnn-u_fails}
\end{figure*}

\begin{table*}[t]
\vskip 0.15in
\begin{center}
\begin{small}
\begin{sc}
\begin{tabular}{l||ccc|ccc}
\toprule
Iteration     & Node 1  & Node 2 & Node 3 & Node 4 & Node 5  & Node 6 \\
\midrule
1             &  $A$      &  $A$     &  $A$     &  $A$     &  $A$      &  $A$     \\
2             &  $B$      &  $B$     &  $B$     &  $C$     &  $B$      &  $D$     \\
\bottomrule
\end{tabular}

\begin{tabular}{lcccr}
\toprule
$M(v)$                                               & $\mathrm{RELABEL(M(v))}$   \\
\midrule
initialize                                           &   $A$                        \\
$(A, \ldblbrace (A, A) \rdblbrace, \ldblbrace (A, A) \rdblbrace)$              &   $B$                        \\
$(A, \ldblbrace (A, A), (A, A) \rdblbrace, \ldblbrace \rdblbrace)$             &   $C$                        \\
$(A, \ldblbrace  \rdblbrace, \ldblbrace (A, A), (A, A)\rdblbrace)$             &   $D$                        \\
\bottomrule
\end{tabular}

\end{sc}
\end{small}
\end{center}
\caption{Node colorings at different iterations, , as well as the $\mathrm{RELABEL}$ hash function, when applying D-WL to the two graphs in~\cref{fig:mpnn-u_fails}.}
\label{tab:wl-d-2}
\end{table*}

Next, we focus on the comparison with MPNN-U, i.e. an MPNN on the undirected form of the graph: 

\begin{theorem}\label{thm:dirgnn-strictly-more-expressive-than-mpnn-u}
    $\mathcal{M}_\mathrm{\oursacro{}}$ is strictly more expressive than $\mathcal{M}_\mathrm{MPNN-U}$.
\end{theorem}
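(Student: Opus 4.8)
The plan is to follow exactly the two-step template used for \cref{thm:dirgnn-strictly-more-expressive-than-mpnn-d}: first establish $\mathcal{M}_{\oursacro{}} \sqsubseteq \mathcal{M}_\mathrm{MPNN-U}$ (the analogue of \cref{lemma:dirgnn-at-least-as-expressive-as-mpnn-d}), then exhibit a pair of non-isomorphic directed graphs that some \oursacro{} separates but no MPNN-U can (the analogue of \cref{lemma:exists-graphs-distinguished-by-dirgnn-but-not-by-mpnn-d}), and finally combine the two via \cref{def:strictly_more_expressive}.

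For the ``at least as expressive'' direction, the idea is that a \oursacro{} layer can simulate an MPNN-U layer. The only subtlety is that MPNN-U aggregates the \emph{union} multiset $\ldblbrace(\mathbf{x}_{j}^{(\ilayer-1)},\mathbf{x}_{i}^{(\ilayer-1)}): j\in N_\out(i)\rdblbrace \cup \ldblbrace(\mathbf{x}_{j}^{(\ilayer-1)},\mathbf{x}_{i}^{(\ilayer-1)}): j\in N_\inp(i)\rdblbrace$, whereas \oursacro{} keeps the two multisets separate. This is not an obstacle: writing the MPNN-U aggregator in sum-decomposed form $\mathrm{AGG}^{(\ilayer)}(\cdot)=\rho\big(\sum_{(\mathbf{x}_j,\mathbf{x}_i)}\phi(\mathbf{x}_j,\mathbf{x}_i)\big)$, which is WLOG for multisets of bounded size over a countable domain (cf.\ the constructions recalled in \cref{app:d-wl-proof}), one sets both $\mathrm{AGG}^{(\ilayer)}_\out$ and $\mathrm{AGG}^{(\ilayer)}_\inp$ equal to the inner sum $\sum\phi(\cdot)$ and lets $\mathrm{COM}^{(\ilayer)}$ add the two partial sums, apply $\rho$, and feed the result into the original MPNN-U combination map, leaving the readout unchanged. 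Hence every MPNN-U is computed by some \oursacro{}, so whenever MPNN-U discriminates a pair of graphs so does that \oursacro{}. (Equivalently, and more in the spirit of the proof of \cref{thm:dirgnn-as-expressive-as-d-wl-formal}: the D-WL colouring refines the colouring of any MPNN-U, since $D^{(t)}(v)=D^{(t)}(w)$ forces the out- and in-multisets of $D$-colours to coincide, hence also their union-with-multiplicity, and the induction of that proof goes through; then $\mathcal{M}_{\oursacro{}}\equiv{}$D-WL $\sqsubseteq$ MPNN-U follows from \cref{thm:dirgnn-as-expressive-as-d-wl-formal}.)

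For the strict separation, the plan is to use the pair $G_1,G_2$ of \cref{fig:mpnn-u_fails}: $G_1$ the directed $3$-cycle $1\to2\to3\to1$ and $G_2$ the transitive triangle with edges $4\to5,\ 4\to6,\ 5\to6$. They are non-isomorphic, since their in/out-degree sequences differ ($G_2$ has a source of out-degree $2$, whereas every node of $G_1$ has out-degree $1$). Crucially, both have the \emph{same} underlying undirected graph (a triangle) and contain no bidirectional edges, so, exactly as observed after \cref{eq:mpnn-u}, any MPNN-U on either graph behaves like an undirected MPNN on $K_3$; a one-line induction on the layer index shows that every node of $G_1$ receives the same colour as every node of $G_2$ at every layer, hence no MPNN-U discriminates them regardless of the readout. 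On the other hand, D-WL does discriminate them: as tabulated in \cref{tab:wl-d-2}, after two rounds the colour multiset of $G_1$ is $\ldblbrace B,B,B\rdblbrace$ while that of $G_2$ is $\ldblbrace B,C,D\rdblbrace$, so D-WL halts declaring the graphs non-isomorphic. By \cref{thm:dirgnn-as-expressive-as-d-wl-formal} there is therefore a \oursacro{} (with injective aggregators, combination maps, and readout) that assigns the two graphs distinct representations; in fact a single layer already suffices. Together with the previous paragraph and \cref{def:strictly_more_expressive}, this proves \cref{thm:dirgnn-strictly-more-expressive-than-mpnn-u}.

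I expect the main obstacle to be the ``at least as expressive'' step rather than the counterexample: one must be careful that the union multiset in the definition of MPNN-U, which double-counts a neighbour joined by a bidirectional edge, is genuinely recoverable from the two separated \oursacro{} multisets, which is why the sum-decomposition argument (or the colour-refinement argument, which handles multiplicities automatically) is the cleanest route. Verifying the counterexample, by contrast, is routine: checking non-isomorphism, running two rounds of D-WL, and noting that the two directed graphs share the same undirected skeleton.
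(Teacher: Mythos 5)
Your proposal is correct and follows the same two-lemma skeleton as the paper: first establish $\mathcal{M}_\mathrm{\oursacro{}} \sqsubseteq \mathcal{M}_\mathrm{MPNN-U}$, then exhibit a separating pair, then combine via \cref{def:strictly_more_expressive}. The counterexample is exactly the one in the paper (\cref{fig:mpnn-u_fails}: directed $3$-cycle versus transitive triangle, same undirected skeleton $K_3$, no bidirectional edges, distinguished by D-WL after two rounds as in \cref{tab:wl-d-2}). The one place where you take a genuinely different route is \cref{lemma:dirgnn-at-least-as-expressive-as-mpnn-u}. Your primary argument is a direct simulation: express the MPNN-U aggregator in sum-decomposed form $\rho\bigl(\sum\phi\bigr)$, set both $\mathrm{AGG}_\out$ and $\mathrm{AGG}_\inp$ to the inner sum, and let $\mathrm{COM}$ add the two partial sums before applying $\rho$ and the original combination map. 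This mirrors the paper's own simulation argument for \cref{lemma:dirgnn-at-least-as-expressive-as-mpnn-d}, and you correctly flag the one subtlety it hides (the union multiset double-counts bidirectional neighbours, and the sum $\mathbf{m}_\inp+\mathbf{m}_\out$ reproduces exactly that multiplicity). The paper instead introduces an auxiliary test U-WL, shows D-WL refines U-WL which in turn refines the colouring of any MPNN-U, and invokes \cref{thm:dirgnn-as-expressive-as-d-wl-formal} plus transitivity. The U-WL route works entirely at the level of colourings and avoids appealing to sum-decomposability (which formally requires countable features and bounded degree, as you note), so it is the cleaner abstraction; your simulation is more constructive and more uniform with the MPNN-D case. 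You also sketch the colour-refinement alternative in a parenthetical, though compressed to ``the induction goes through''; the paper's explicit U-WL intermediate is where that induction's multiset-union bookkeeping is actually carried out, so if one wanted to make your parenthetical rigorous one would essentially end up rederiving U-WL. Either route suffices; no gap.
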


Instrumental to us is to consider a variant of the 1-WL test MPNN-U can be regarded as the neural counterpart of. In the following we will show that such a variant, which we call U-WL, generates colorings which refine the ones induced by \emph{any} MPNN-U and that, in turn, are refined by the D-WL test. In view of \Cref{thm:dirgnn-as-expressive-as-d-wl}, this will be enough to show that there exists Dir-GNNs refining any MPNN-U instantiation, so that, ultimately, $\mathcal{M}_{\textrm{Dir-GNN}} \sqsubseteq \mathcal{M}_{\textrm{MPNN-U}}$.

\begin{lemma}\label{lemma:dirgnn-at-least-as-expressive-as-mpnn-u}
    $\mathcal{M}_\mathrm{\oursacro{}}$ is at least as expressive as $\mathcal{M}_\mathrm{MPNN-U}$ ($\mathcal{M}_\mathrm{\oursacro{}} \sqsubseteq \mathcal{M}_\mathrm{MPNN-U}$). 
\end{lemma}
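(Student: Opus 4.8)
The plan is to route the argument through a colour-refinement procedure tailored to MPNN-U, which I will call \emph{U-WL}, defined by $U^{(0)}(i) = \bar{c}$ for all $i$ and, for $t \geq 0$,
\[
U^{(t+1)}(i) = \mathrm{RELABEL}\!\left(U^{(t)}(i),\ \ldblbrace U^{(t)}(j) : j \in N_{\out}(i)\rdblbrace \cup \ldblbrace U^{(t)}(j) : j \in N_{\inp}(i)\rdblbrace\right),
\]
i.e.\ ordinary $1$-WL colour refinement run on the multiset of all incident neighbours, ignoring orientation (and counting a bidirectional edge twice). I would then establish two refinement statements and combine them with Theorem~\ref{thm:dirgnn-as-expressive-as-d-wl-formal}.

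First I would show, by induction on $t$, that on every graph $U^{(t)}$ refines the node colouring produced by \emph{any} MPNN-U at layer $t$: if $U^{(t)}(v)=U^{(t)}(w)$ then the two nodes carry equal MPNN-U embeddings at layer $t$. The base case is immediate from the constant initialisation; for the inductive step, equality of $U^{(t+1)}$ forces — by injectivity of $\mathrm{RELABEL}$ — equality of $U^{(t)}(v), U^{(t)}(w)$ and of the two union multisets of neighbour colours, whence, by the induction hypothesis and \citep[Lemmas 2--3]{bevilacqua2022equivariant}, equality of the MPNN-U message multisets and thus of the layer-$(t+1)$ embeddings. This is exactly the standard $1$-WL/MPNN bound of \citet{morris2019weisfeiler,DBLP:conf/iclr/XuHLJ19}, transcribed to the undirected form of a directed graph.

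Second, and this is the only genuinely new piece, I would prove by induction that D-WL refines U-WL: on every graph, $D^{(t)}(v)=D^{(t)}(w) \implies U^{(t)}(v)=U^{(t)}(w)$. After the trivial base case, assume it at round $t$. Then $D^{(t+1)}(v)=D^{(t+1)}(w)$ gives, by injectivity of $\mathrm{RELABEL}$, that $D^{(t)}(v)=D^{(t)}(w)$, $\ldblbrace D^{(t)}(u):u\in N_{\out}(v)\rdblbrace = \ldblbrace D^{(t)}(u):u\in N_{\out}(w)\rdblbrace$, and the analogous equality over $N_{\inp}$. The induction hypothesis makes $D^{(t)}(\cdot)\mapsto U^{(t)}(\cdot)$ a well-defined map, so \citep[Lemma 2]{bevilacqua2022equivariant} upgrades these to $\ldblbrace U^{(t)}(u):u\in N_{\out}(v)\rdblbrace = \ldblbrace U^{(t)}(u):u\in N_{\out}(w)\rdblbrace$ and the $N_{\inp}$ counterpart; taking multiset unions, the single neighbour-colour multiset that U-WL feeds to $\mathrm{RELABEL}$ coincides for $v$ and $w$, and since also $U^{(t)}(v)=U^{(t)}(w)$, we conclude $U^{(t+1)}(v)=U^{(t+1)}(w)$. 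The one point to be careful about is the multiplicity bookkeeping for bidirectional edges, which the multiset-union phrasing handles automatically.

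Finally I would chain the two: on every graph and at every round, the D-WL colouring refines the U-WL colouring, which refines the colouring of every MPNN-U. By Theorem~\ref{thm:dirgnn-as-expressive-as-d-wl-formal} there is a fixed \oursacro{} (injective $\mathrm{AGG}^{(k)}_{\out}, \mathrm{AGG}^{(k)}_{\inp}, \mathrm{COM}^{(k)}$ and injective $\mathrm{READOUT}$) whose colouring is equivalent to D-WL's, hence refines the colouring of every MPNN-U; passing from node colourings to graph outputs through the injective $\mathrm{READOUT}$ (as in the discussion after Definition~\ref{def:model_families}), whenever some MPNN-U separates two graphs this \oursacro{} does too, i.e.\ $\mathcal{M}_\mathrm{\oursacro{}} \sqsubseteq \mathcal{M}_\mathrm{MPNN-U}$. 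I expect the refinement claim D-WL $\sqsubseteq$ U-WL (the second step) to be the only delicate part — reducing equality of the two separate directional neighbour multisets to equality of their union while tracking edge multiplicities — whereas the first and last steps are the textbook WL bookkeeping already used to prove Theorem~\ref{thm:dirgnn-as-expressive-as-d-wl-formal}. A more hands-on alternative would skip U-WL and directly simulate a given MPNN-U with a \oursacro{} whose injective $\mathrm{AGG}_{\out},\mathrm{AGG}_{\inp}$ let $\mathrm{COM}$ reconstruct the union multiset and then apply the original aggregator; this works under the same countability hypotheses but is less clean about localising the comparison at the node level.
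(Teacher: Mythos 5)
Your proposal is correct and follows essentially the same route as the paper: it introduces the same intermediate U-WL colouring, proves the same two refinements (D-WL refines U-WL, and U-WL refines any MPNN-U's colouring) by the same inductive arguments invoking the multiset lemmas of \citet{bevilacqua2022equivariant}, and then chains them through \Cref{thm:dirgnn-as-expressive-as-d-wl-formal}. The only differences are cosmetic — you present the two refinement steps in the opposite order, and you assert the ``equal parts imply equal multiset union'' step directly where the paper spells it out via a short contradiction on multiplicities — but the mathematical content is identical.
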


\begin{proof}[Proof of Lemma~\ref{lemma:dirgnn-at-least-as-expressive-as-mpnn-u}]
    Let us start by introducing the U-WL test, which, on an undirected graph, refines node colors as:
    $$  
        A^{(t+1)}(v) = \textrm{RELABEL} \big ( A^{(t)}(v), \ldblbrace  A^{(t)}(u) : u \in N_{\out}(v) \rdblbrace \cup \ldblbrace  A^{(t)}(u) : u \in N_{\inp}(v) \rdblbrace \big ),
    $$
    that is, by the gathering neighbouring colors from each incident edge, independent of its direction. It is easy to show that U-WL generates a coloring that, at any round $t \geq 0$ is refined by the coloring generated by D-WL, i.e., for any graph $G = (V, E)$ it holds that $\forall v, w \in V, D^{(t)}(v) = D^{(t)}(w) \implies A^{(t)}(v) = A^{(t)}(w)$, where $D$ refers to the coloring of D-WL. Again, proceeding by induction, we have the following. First, the base step hold trivially for $t = 0$. We assume the thesis holds true for $t$ and seek to show it also holds for $t+1$. If $D^{(t+1)}(v) = D^{(t+1)}(w)$ then, by the injectivity of $\textrm{RELABEL}$ we must have $D^{(t)}(v) = D^{(t)}(w)$, which implies $A^{(t)}(v) = A^{(t)}(w)$ via the induction hypothesis. Additionally, we have $\ldblbrace D^{(t)}(u) : u \in N_\out(v) \rdblbrace = \ldblbrace D^{(t)}(u) : u \in N_\out(w) \rdblbrace$, and $\ldblbrace D^{(t)}(u) : u \in N_\inp(v) \rdblbrace = \ldblbrace D^{(t)}(u) : u \in N_\inp(w) \rdblbrace$ which, by the induction hypothesis and~\citep[Lemma 2]{bevilacqua2022equivariant}, gives $\mathcal{A}^{(t)}_{\out, v} = \ldblbrace A^{(t)}(u) : u \in N_\out(v) \rdblbrace = \ldblbrace A^{(t)}(u) : u \in N_\out(w) \rdblbrace = \mathcal{A}^{(t)}_{\out,w}$, and $\mathcal{A}^{(t)}_{\inp, v} = \ldblbrace A^{(t)}(u) : u \in N_\inp(v) \rdblbrace = \ldblbrace A^{(t)}(u) : u \in N_\inp(w) \rdblbrace = \mathcal{A}^{(t)}_{\inp,w}$. From these equalities we then derive $\mathcal{A}^{(t)}_{\out,v} \cup \mathcal{A}^{(t)}_{\inp,v} = \mathcal{A}^{(t)}_{\out,w} \cup \mathcal{A}^{(t)}_{\inp,w}$. Indeed, let us suppose that, instead, $\mathcal{A}^{(t)}_{\out,v} \cup \mathcal{A}^{(t)}_{\inp,v} \neq \mathcal{A}^{(t)}_{\out,w} \cup \mathcal{A}^{(t)}_{\inp,w}$ and that, w.l.o.g., this is due by the existence of a color $\bar{a}$ such that its number of appearances in $\mathcal{A}^{(t)}_{\out,w} \cup \mathcal{A}^{(t)}_{\inp,w}$ is larger than that in $\mathcal{A}^{(t)}_{\out,v} \cup \mathcal{A}^{(t)}_{\inp,v}$. We write $\#^{\mathcal{A}^{(t)}_{\out,w} \cup \mathcal{A}^{(t)}_{\inp,w}}(\bar{a}) > \#^{\mathcal{A}^{(t)}_{\out,v} \cup \mathcal{A}^{(t)}_{\inp,v}}(\bar{a})$. Then, as these are all multisets, we can rewrite $\#^{\mathcal{A}^{(t)}_{\out,w}}(\bar{a}) + \#^{\mathcal{A}^{(t)}_{\inp,w}}(\bar{a}) > \#^{\mathcal{A}^{(t)}_{\out,v}}(\bar{a}) + \#^{\mathcal{A}^{(t)}_{\inp,v}}(\bar{a})$. Since $\mathcal{A}^{(t)}_{\out,w} = \mathcal{A}^{(t)}_{\out,v}$, we must have that $\#^{\mathcal{A}^{(t)}_{\out,w}}(\bar{a}) = \#^{\mathcal{A}^{(t)}_{\out,v}}(\bar{a})$, which leads to necessarily having $\#^{\mathcal{A}^{(t)}_{\inp,w}}(\bar{a}) \neq \#^{\mathcal{A}^{(t)}_{\inp,v}}(\bar{a})$. However, this entails a contradiction, because by hypothesis we had that $\mathcal{A}^{(t)}_{\inp,w} = \mathcal{A}^{(t)}_{\inp,v}$. Last, given that $A^{(t)}(v) = A^{(t)}(w)$, and $\mathcal{A}^{(t)}_{\out,v} \cup \mathcal{A}^{(t)}_{\inp,v} = \mathcal{A}^{(t)}_{\out,w} \cup \mathcal{A}^{(t)}_{\inp,w}$, being the only inputs to the $\textrm{RELABEL}$ function in U-WL, then we also have that $A^{(t+1)}(v) = A^{(t+1)}(w)$, concluding the proof of the refinement.

    Now, in view of \Cref{thm:dirgnn-as-expressive-as-d-wl}, it is sufficient to show that U-WL refines the coloring induced by any MPNN-U; the lemma will then follow by transitivity. We want to show that $t \geq 0$, the U-WL coloring $A^{(t)}$ refines the coloring induced by the representations of any MPNN-U, that is, on any graph $G = (V, E)$, $\forall v, w \in V, \quad A^{(t)}(v) = A^{(t)}(w) \implies h^{(t)}_v = h^{(t)}_w$, with $h^{(t)}_v$ referring to the representation an MPNN-U assigns to node $v$ at time step $t$. The thesis is easily proved. It clearly holds for $t=0$ if initial node representations are produced by a well-defined function $\textrm{enc}: \bar{c} \mapsto \textrm{enc}(\bar{c})$. Then, if we assume the thesis holds for $t > 0$, we can show it also holds for $t+1$. Indeed, if $A^{(t+1)}(v) = A^{(t+1)}(w)$, from the injectivity of $\textrm{RELABEL}$, it follows that $A^{(t)}(v) = A^{(t)}(w)$ and $\mathcal{A}^{(t)}_{\out,v} \cup \mathcal{A}^{(t)}_{\inp,v} = \mathcal{A}^{(t)}_{\out,w} \cup \mathcal{A}^{(t)}_{\inp,w}$. By the induction hypothesis, we have $h^{(t)}_v = h^{(t)}_w$ and, jointly due to~\citep[Lemma 2]{bevilacqua2022equivariant}, $\ldblbrace h^{(t)}_u : u \in N_\out(v)\rdblbrace \cup \ldblbrace h^{(t)}_u : u \in N_\inp(v)\rdblbrace = \ldblbrace h^{(t)}_u : u \in N_\out(w)\rdblbrace \cup \ldblbrace h^{(t)}_u : u \in N_\inp(w)\rdblbrace$. Given that $h^{(t)}_v = h^{(t)}_w = \bar{h}$, it also clearly holds that $\ldblbrace (h^{(t)}_u, h^{(t)}_v) : u \in N_\out(v)\rdblbrace \cup \ldblbrace (h^{(t)}_u, h^{(t)}_v) : u \in N_\inp(v)\rdblbrace = \ldblbrace (h^{(t)}_u, h^{(t)}_w) : u \in N_\out(w)\rdblbrace \cup \ldblbrace (h^{(t)}_u, h^{(t)}_w) : u \in N_\inp(w)\rdblbrace$ -- it is sufficient to construct the well-defined function $\varphi: h \mapsto (h, \bar{h})$ and invoke~\citep[Lemma 3]{bevilacqua2022equivariant}. These are the inputs to the well-defined functions constituting the update equations of an MPNN-U architecture, eventually entailing $h^{(t+1)}_v = h^{(t+1)}_w$.
\end{proof}

Now, with the following lemma we show that, not only $\mathcal{M}_\mathrm{\oursacro{}}$ is at least as expressive as $\mathcal{M}_\mathrm{MPNN-U}$, there actually exist pairs of graphs distinguished by former family but not by the latter.

\begin{lemma}\label{lemma:exists-graphs-distinguished-by-dirgnn-but-not-by-mpnn-u}
    There exist graph pairs distinguished by a \oursacro{} model which are not distinguished by any MPNN-U model.
\end{lemma}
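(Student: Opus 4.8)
The plan is to mirror the argument used in the proof of Lemma~\ref{lemma:exists-graphs-distinguished-by-dirgnn-but-not-by-mpnn-d}: exhibit one concrete pair of small non-isomorphic directed graphs, show that \emph{no} MPNN-U can separate them, and then invoke Theorem~\ref{thm:dirgnn-as-expressive-as-d-wl-formal} to produce a \oursacro{} that does. Concretely, I would take $G_1$ and $G_2$ to be the two directed graphs depicted in~\cref{fig:mpnn-u_fails}, chosen so that (i) their underlying undirected graphs coincide up to isomorphism, yet (ii) they are not isomorphic as directed graphs. The minimal such example is the directed $3$-cycle versus the transitive tournament on three vertices. Non-isomorphism at the directed level is certified by a trivial invariant, e.g. the in-degree sequence: $G_2$ has a vertex of in-degree $2$, whereas every vertex of $G_1$ has in-degree and out-degree exactly one.

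For the MPNN-U side, the key observation is that this example has \emph{no bidirectional edges}, so by the remark following~\cref{eq:mpnn-u}, running an MPNN-U on either $G_i$ is exactly equivalent to running a standard undirected MPNN (\cref{eq:mpnn}) on its undirected skeleton $(G_i)_u$. Since $(G_1)_u$ and $(G_2)_u$ are isomorphic, any MPNN-U assigns them the same multiset of node representations, and hence — after the permutation-invariant $\mathrm{READOUT}$ — the same graph-level embedding; therefore $G_1 =_{\mathcal{M}_\mathrm{MPNN-U}} G_2$. (Equivalently, one can run the induction of Lemma~\ref{lemma:dirgnn-at-least-as-expressive-as-mpnn-u}: since the U-WL coloring refines that of any MPNN-U, it suffices to check that U-WL stabilises to identical color histograms on the two graphs, which is immediate from their shared undirected structure — indeed, forgetting direction makes both graphs a triangle.)

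For the \oursacro{} side, by Theorem~\ref{thm:dirgnn-as-expressive-as-d-wl-formal} it is enough to show D-WL separates $G_1$ and $G_2$. Applying the D-WL refinement rule, which treats in- and out-neighbourhoods separately, the two graphs receive distinct color histograms already after two rounds, as tabulated in~\cref{tab:wl-d-2}; D-WL then halts declaring them non-isomorphic, so there exists a \oursacro{} (in fact a simple $1$-layer instantiation that reads off the in/out-degree pattern) that discriminates them. Combining the two halves proves the lemma, and together with Lemma~\ref{lemma:dirgnn-at-least-as-expressive-as-mpnn-u} it yields Theorem~\ref{thm:dirgnn-strictly-more-expressive-than-mpnn-u}.

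The step requiring the most care is not a computation but the \emph{design} of the witness pair: it must be engineered so that the two graphs become isomorphic precisely when directionality is discarded (guaranteeing the MPNN-U lower bound) while remaining distinguishable by the direction-aware D-WL (guaranteeing the \oursacro{} upper bound). Once the pair is fixed, all three verifications are routine — a degree-sequence check for directed non-isomorphism, the no-bidirectional-edges reduction for the MPNN-U bound, and two rounds of D-WL for the separation — so I do not anticipate any substantive obstacle beyond getting the example right.
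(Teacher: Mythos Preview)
Your proposal is correct and follows essentially the same approach as the paper: the witness pair in \cref{fig:mpnn-u_fails} is indeed the directed $3$-cycle versus the transitive tournament on three vertices, and the paper likewise verifies non-isomorphism, shows U-WL (hence any MPNN-U) fails to separate them, and then invokes D-WL and \cref{thm:dirgnn-as-expressive-as-d-wl-formal}. The only cosmetic difference is that the paper argues the MPNN-U bound directly via U-WL convergence rather than via your no-bidirectional-edges reduction to the common undirected skeleton, but you explicitly note this as an equivalent alternative.
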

\begin{proof}[Proof of Lemma~\ref{lemma:exists-graphs-distinguished-by-dirgnn-but-not-by-mpnn-u}]
    Let $G_1$ and $G_2$ be the non-isomorphic graphs illustrated in \cref{fig:mpnn-u_fails}. From~\cref{tab:wl-u} we observe that U-WL is not able to distinguish between the two graphs, as after the first iteration all nodes still have the same color: the U-WL is at convergence and terminates concluding that the two graphs are possibly isomorphic. From \cref{lemma:dirgnn-at-least-as-expressive-as-mpnn-u}, we conclude that no MPNN-U can distinguish between the two graphs. On the other hand, applying D-WL to the two graphs leads to different colorings after two iterations (see~\cref{tab:wl-d-2}, so the D-WL algorithm terminates deeming the two graphs non-isomorphic. Then, by~\cref{thm:dirgnn-as-expressive-as-d-wl}, there exists \oursacro{}s which distinguish them. In fact, simple \oursacro{} architectures which distinguish the two graphs are easy to construct. Importantly, we note, again, how these architectures distinguish between the two graphs by disambiguating non-isomorphic nodes such as 4, 6, something no MPNN-U is capable of doing.
\end{proof}

Last, the two results above are sufficient to prove~\cref{thm:dirgnn-strictly-more-expressive-than-mpnn-u}.

\begin{proof}[Proof of Theorem~\ref{thm:dirgnn-strictly-more-expressive-than-mpnn-u}]
    The theorem follows directly from Lemmas~\ref{lemma:dirgnn-at-least-as-expressive-as-mpnn-u} and~\ref{lemma:exists-graphs-distinguished-by-dirgnn-but-not-by-mpnn-u}.
\end{proof}

\begin{table*}[t]
\begin{center}
\begin{small}
\begin{sc}
\begin{tabular}{l||cccc|cccc}
\toprule
Iteration     & Node 1  & Node 2 & Node 3 & Node 4 & Node 5  & Node 6 & Node 7 & Node 8 \\
\midrule
1             &  $A$      &  $A$     &  $A$     &  $A$     &  $A$      &  $A$     &  $A$     &  $A$     \\
2             &  $E$      &  $E$     &  $E$     &  $E$     &  $E$      &  $E$     &  $E$     &  $E$     \\
\bottomrule
\end{tabular}

\begin{tabular}{lcccr}
\toprule
$M(v)$                                                      & $\mathrm{RELABEL(M(v))}$   \\
\midrule
initialize                                                  &   $A$                        \\
$(A, \ldblbrace (A, A) \rdblbrace, \ldblbrace (A, A), (A, A) \rdblbrace)$             &   $E$                        \\
\bottomrule
\end{tabular}

\end{sc}
\end{small}
\end{center}
\caption{Node colorings at different iterations, as well as the $\mathrm{RELABEL}$ hash function, when applying U-WL to the two graphs in~\cref{fig:mpnn-u_fails}.}
\label{tab:wl-u}
\end{table*}

\begin{figure}
    \centering
    \includegraphics[width=0.65\textwidth]{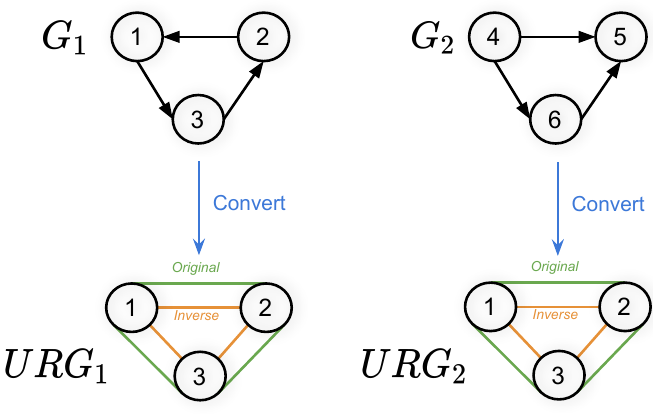}
    \caption{The two non-isomorphic directed graphs $G_1$ and $G_2$ become isomorphic when converted to an undirected relational graph. This shows that is not possible to represent directed graphs with undirected relational graphs without losing information.}
    \label{fig:relational_graph_fail}
\end{figure}

\section{Exploring Alternative Representations for Directed Graphs} \label{sec:alternative_representations}
One might intuitively consider the possibility of representing a directed graph equivalently through an undirected relational graph, having two relations for \textit{original} and \textit{inverse} edges. However, this assumption proves to be erroneous, as illustrated by the following counterexample. Take the two non-isomorphic directed graphs illustrated in~\cref{fig:relational_graph_fail}. Surprisingly, these two directed graphs share an identical representation as an undirected relational graph, as depicted in~\cref{fig:relational_graph_fail}. 

\citet{Kollias2022DirectedGA} showed that it is however possible to equivalently represent a directed graph with an undirected graph having two nodes for each node in the original graph, representing the source and destination role of the node respectively.

\section{MPNN with Binary Edge Features} \label{sec:mpnn_binary_features}
A possible alternative approach to address directed graphs involves using an MPNN~\cite{gilmer2017neural} combined with binary edge features. In this case, the original directed graph is augmented with inverse edges, which are assigned a distinct binary feature compared to the original ones. Given the original directed graph $G=(V, E)$, we define the augmented graph as $G_a=(V, E_a)$ with $E_a= \{ ((i, j), \begin{bmatrix} 0 \ 1 \end{bmatrix} ) : (i, j) \in E \} \cup \{ ((j, i), \begin{bmatrix} 1 \ 0 \end{bmatrix} ) : (i, j) \in E \} $.

An MPNN with edge features can be defined as:

\begin{align*}
    \mathbf{\m}^{(\ilayer)}_{i} &=  \mathrm{AGG}^{(\ilayer)}\left(\ldblbrace (\mathbf{x}_{i}^{(\ilayer)}, \mathbf{x}_{j}^{(\ilayer)}, \mathbf{e}_{ij}):\, (i,j)\in E \rdblbrace \right) \notag \\
    \mathbf{x}_{i}^{(\ilayer)}   &=  \mathrm{COM}^{(\ilayer)}\left(\mathbf{x}_i^{(\ilayer-1)}, \mathbf{\m}^{(\ilayer)}_i\right) 
\end{align*}

However, since messages do not depend only on the source nodes but also on the destination node (through the edge features), this approach necessitates the materialization of explicit edge messages~\cite{tailor2022adaptive}. Given that there are $m$ such messages (one per edge) with dimension $d$, the memory complexity amounts to $\mathcal{O}(m \times d)$. This is significantly higher than the $\mathcal{O}(n \times d)$ memory complexity achieved by instantiations of \oursacro{} such as Dir-GCN or Dir-SAGE, and could lead to out-of-memory issues for even moderately sized datasets. Therefore, while having the same expressivity, our model has better memory complexity. 

\section{Relation with Existing Methods for Directed Graphs} \label{sec:comparison_with_directed_graph_methods}
DGCN~\cite{dgcn} directly employs $\mathbf{A}^\top\mathbf{A}$ and $\mathbf{A}\mathbf{A}^\top$ for spectral convolution, alongside $\mathbf{A}_u$. They observe that these 2-hop matrices enhance feature and label smoothness, a finding that aligns with our observations in~\cref{sec:directed_heterophily}. However, DGCN limits its experiments to homophilic datasets, thereby limiting the applicability of their insights. Furthermore, the model encounters several limitations: it does not provide access to directed 1-hop edges ($\mathbf{A}$ and $\mathbf{A}^\top$); it is constrained to the 2-hop, without the capability to access higher hops—contrastingly, our model consistently benefits from utilizing 5 or 6 layers (see~\cref{tab:best_hyperparameters}); it is limited to a specific, GCN-like architecture, rather than a more general framework; and it lacks scalability due to the explicit computation of $\mathbf{A}^\top\mathbf{A}$ and $\mathbf{A}\mathbf{A}^\top$.

\section{Experimental Details} \label{sec:appendix_experimental_details}
\begin{table*}[t]
\begin{center}
\begin{small}
\begin{sc}
    \resizebox{1.\textwidth}{!}{%
    \begin{tabular}{lcccccr}
    \toprule
    Dataset       & \# Nodes   & \# Edges    & \# Feat. & \# C & Unidirectional Edges & Edge hom. \\
    \midrule
    citeseer-full &     4,230  &      5,358  &     602  &   6  & 99.61\%     & 0.949 \\
    cora-ml       &     2,995  &      8,416  &    2,879 &   7  & 96.84\%     & 0.792 \\
    ogbn-arxiv    &   169,343  &   1,166,243 &     128  &  40  & 99.27\%     & 0.655 \\
    chameleon     &     2,277  &     36,101  &    2,325 &   5  & 85.01\%     & 0.235 \\
    squirrel      &     5,201  &    217,073  &    2,089 &   5  & 90.60\%     & 0.223 \\
    arxiv-year    &   169,343  &   1,166,243 &     128  &  40  & 99.27\%     & 0.221 \\
    snap-patents  &  2,923,922 &  13,975,791 &     269  &   5  & 99.98\%     & 0.218 \\
    roman-empire  &     22,662 &      44,363 &     300  &  18  & 65.24\%     & 0.050 \\
    \bottomrule
    \end{tabular}
    }
\end{sc}
\end{small}
\end{center}
\caption{Statistics of the datasets used in this paper.}
\label{tab:datasets-statistics}
\end{table*}
\begin{table*}[t]
\begin{center}
\begin{small}
\begin{sc}
    \resizebox{1.\textwidth}{!}{
    \begin{tabular}{lcccccccr}
        \toprule
        Dataset       & model\_type &   lr   & \# hidden\_dim & \# num\_layers & jk     & norm  & dropout & $\alpha$ \\
        \midrule
        chameleon     &  Dir-GCN    & 0.005  &     128         &    5           &  max  & True  &  0      & 1. \\
        squirrel      &  Dir-GCN    & 0.01   &     128         &    4           &  max  & True  &  0      & 1. \\
        arxiv-year    &  Dir-GCN    & 0.005  &     256         &     6          &  cat  & False &  0      & 0.5 \\
        snap-patents  &  Dir-GCN    & 0.01   &     32          &     5          &   max & True  &  0      & 0.5 \\
        roman-empire  &  Dir-SAGE   & 0.01   &     256         &     5          &   cat & False &  0.2    & 0.5 \\
        \bottomrule
    \end{tabular}
    }
\end{sc}
\end{small}
\end{center}
\caption{Best hyperparameters for each dataset, determined through grid search, for our model.}
\label{tab:best_hyperparameters}
\end{table*}

\subsection{Effective Homophily of Synthetic Graphs}  \label{sec:effective_homophily_synthetic}
For the results in~\cref{fig:synthetic_effective_homophily}, we generate directed synthetic graphs with various homophily levels using a modified preferential attachment process~\cite{Barabasi99emergenceScaling}, inspired by~\citet{zhu2020beyond}. New nodes are incrementally added to the graphs until the desired number of nodes is achieved. Each node is assigned a class label, chosen uniformly at random among $C$ classes, and forms out-edges with exactly $m$ pre-existing nodes, where $m$ is a parameter of the process. The $m$ out-neighbors are sampled without replacement from a distribution that is proportional to both their in-degree and the class compatibility of the two nodes. Consequently, nodes with higher in-degree are more likely to receive new edges, leading to a "rich get richer" effect where a small number of highly connected "hub" nodes emerge. This results in the in-degree distribution of the generated graphs following a power-law, with heterophily controlled by the class compatibility matrix $\mathbf{H}$. In our experiments, we generate graphs comprising 1000 nodes and set $C=5$, $m=2$. Note that by construction, the generated graphs will not have any bidirectional edge.

\subsection{Synthetic Experiment}  \label{sec:appendix_synthetic_experiment}
For the results in~\cref{fig:synthetic_task}, we construct an Erdos-Renyi graph with $5000$ nodes and edge probability of $0.001$, where each node has a scalar feature sampled uniformly at random from $[-1, 1]$. The label of a node is set to 1 if the mean of the features of their in-neighbors is greater than the mean of the features of their out-neighbors, or zero otherwise. 

\subsection{Experimental Setup} \label{app:experimental_setup}
Real-World datasets statistics are reported in table \ref{tab:datasets-statistics}.
All experiments are conducted on a GCP machine with 1 NVIDIA V100 GPU with 16GB of memory, apart from experiments on snap-patents which have been performed on a machine with 1 NVIDIA A100 GPU with 40GB of memory. The total GPU time required to conduct all the experiments presented in this paper is approximately two weeks.
In all experiments, we use the Adam optimizer and train the model for 10000 epochs, using early stopping on the validation accuracy with a patience of 200 for all datasets apart from Chameleon and Squirrel, for which we use a patience of 400. We do not use regularization as it did not help on heterophilic datasets. For Citeseer-Full and Cora-ML we use random 50/25/25 splits, for OGBN-Arxiv we use the fixed split provided by OGB~\cite{hu2020ogb}, for Chameleon and Squirrel we use the fixed GEOM-GCN splits~\cite{pei2020geom}, for Arxiv-Year and Snap-Patents we use the splits provided in \citeauthor{lim2021large}, while for Roman-Empire we use the splits from~\citet{platonov2023a}. We report the mean and standard deviation of the test accuracy, computed over 10 runs in all experiments. 

\subsection{Directionality Ablation Hyperparameters} \label{sec:appendix_ablation_hyperparams}
For the ablation study in~\cref{sec:experiments_extending_gnn}, we use the same hyperparameters for all models and datasets: $\mathrm{learning\_rate} = 0.001$, $\mathrm{hidden\_dimension} = 64$, $\mathrm{num\_layers} = 3$, $\mathrm{norm} = True$, $\mathrm{jk}=max$. $\mathrm{norm}$ refers to applying L2 normalization after each convolutional layer, which we found to be generally useful, while $\mathrm{jk}$ refers to the type of jumping knowledge~\cite{pmlr-v80-xu18c} used.

\subsection{Comparison with State-of-the-Art Results} \label{sec:appendix_grid_search}

To obtain the results for Dir-GNN in~\cref{tab:heterophilic_results}, we perform a grid search over the following hyperparameters: $\mathrm{model\_type} \in\{$Dir-GCN, Dir-SAGE$\}$, $\mathrm{learning\_rate} \in \{0.01, 0.005, 0.001, 0.0005\}$, $\mathrm{hidden\_dimensio}n \in \{32, 64, 128, 256, 512\}$, $\mathrm{num\_layers} \in \{2, 3, 4, 5, 6\}$, $\mathrm{jk} \in \{max, cat, none\}$, $\mathrm{norm} \in \{True, False \}$, $\mathrm{dropout} \in \{0, 0.2, 0.4, 0.6, 0.8, 1\}$ and $\alpha \in \{0, 0.5, 1\}$. The best hyperparameters for each dataset are reported in table \ref{tab:best_hyperparameters}.

\subsection{Baseline Results} \label{sec:appendix_baseline_results}
\paragraph{GNNs for Heterophily}
Results for H$_2$GCN, GPR-GNN and LINKX were taken from \citeauthor{lim2021large}. Results for Gradient Gating are taken from their paper~\cite{rusch2022gradient}. Results for FSGNN are taken from their paper~\cite{maurya2021improving} for Actor, Squirrel and Chameleon, whereas we re-implement it to generate results on Arxiv-year and Snap-Patents, performing the same gridsearch outlined in~\cref{sec:appendix_grid_search}. Results for GloGNN as well as MLP and GCN are taken from~\citeauthor{li2022finding}. Results on Roman-Empire are taken from~\citet{platonov2023a} for GCN, H$_2$GCN, GPR-GNN, FSGNN and GloGNN whereas we re-implement and generate results for MLP, LINKX, ACM-GCN and Gradient Gating performing the same gridsearch outlined in~\cref{sec:appendix_grid_search}.

\paragraph{Directed GNNs}
For DiGCN and MagNet, we used the classes provided by PyTorch Geometric Signed Directed library~\cite{he2022pytorch}. For MagNet, we tuned the $\mathrm{learning\_rate} \in \{0.01, 0.005, 0.001, 0.0005\}$, the $\mathrm{hidden\_dim} \in \{32, 64, 128, 256, 512\}$, the $\mathrm{num\_layers} \in \{2, 3, 4, 5, 6\}$, the $K$ parameter for its chebyshev convolution to $\in \{1, 2\}$, and its $q$ hyperparameter $\in \{0, 0.05, 0.10, 0.15, 0.20\}$. For DiGCN, we tune the $\mathrm{learning\_rate} \in \{0.01, 0.005, 0.001, 0.0005\}$, the $\mathrm{hidden\_dim} \in \{32, 64, 128, 256, 512\}$, the $\mathrm{num\_layers} \in \{2, 3, 4, 5, 6\}$, and the $\alpha$ $\in \{0.05, 0.10, 0.15, 0.20\}$.

\begin{table*}[t]
\begin{center}
\begin{small}
\begin{sc}
\resizebox{0.6\textwidth}{!}{%
\begin{tabular}{lrrr}
\toprule
{} &  in\_degree &  out\_degree &  total\_degree \\
\midrule
cora\_ml               &      41.70\% &       11.65\% &          0.00\% \\
citeseer\_full         &      63.45\% &       21.35\% &          0.00\% \\
ogbn-arxiv            &      36.62\% &       10.30\% &          0.00\% \\
chameleon             &      62.06\% &        0.00\% &          0.00\% \\
squirrel              &      57.60\% &        0.00\% &          0.00\% \\
arxiv-year            &      36.62\% &       10.30\% &          0.00\% \\
snap-patents          &      23.38\% &       30.16\% &          6.09\% \\
directed-roman-empire &       0.00\% &        0.00\% &          0.00\% \\
\bottomrule
\end{tabular}}
\end{sc}
\end{small}
\end{center}
\caption{Percentage of nodes with either in-, out- or total-degree equal to zero.}
\label{tab:zero_degrees}
\end{table*}

\begin{table*}[t]
\begin{center}
\begin{small}
\begin{sc}
\resizebox{1.\textwidth}{!}{%
\begin{tabular}{lccc:ccccr}
\toprule
& citeseer\_full & cora\_ml & ogbn-arxiv & chameleon & squirrel & arxiv-year & snap-patents & roman-empire \\ 
hom. & 0.949 &  0.792 &  0.655 &  0.235 &  0.223 &  0.221 & 0.218 & 0.050 \\ 
\midrule
gcn             &   93.37$\pm$0.22 &  84.37$\pm$1.52 &  68.39$\pm$0.01 &  71.12$\pm$2.28 &  62.71$\pm$2.27 &  46.28$\pm$0.39 &  51.02$\pm$0.07 &           56.23$\pm$0.37 \\
dir-gcn($\alpha$=0.0)  &   93.21$\pm$0.41 &  84.45$\pm$1.69 &  23.70$\pm$0.20 &  29.78$\pm$1.27 &  33.03$\pm$0.78 &  50.51$\pm$0.45 &  51.71$\pm$0.06 &           42.69$\pm$0.41 \\
dir-gcn($\alpha$=1.0)  &   93.44$\pm$0.59 &  83.81$\pm$1.44 &  62.93$\pm$0.21 &  78.77$\pm$1.72 &  74.43$\pm$0.74 &  50.52$\pm$0.09 &  62.24$\pm$0.04 &           45.52$\pm$0.14 \\
dir-gcn($\alpha$=0.5)  &   92.97$\pm$0.31 &  84.21$\pm$2.48 &  66.66$\pm$0.02 &  72.37$\pm$1.50 &  67.82$\pm$1.73 &  59.56$\pm$0.16 &  71.32$\pm$0.06 &           74.54$\pm$0.71 \\
\hdashline
sage            &   94.15$\pm$0.61 &  86.01$\pm$1.56 &  67.78$\pm$0.07 &  61.14$\pm$2.00 &  42.64$\pm$1.72 &  44.05$\pm$0.02 &  52.55$\pm$0.10 &           72.05$\pm$0.41 \\
dir-sage($\alpha$=0.0) &   94.05$\pm$0.25 &  85.84$\pm$2.09 &  52.08$\pm$0.17 &  48.33$\pm$2.40 &  35.31$\pm$0.52 &  47.45$\pm$0.32 &  52.53$\pm$0.03 &           76.47$\pm$0.14 \\
dir-sage($\alpha$=1.0) &   93.97$\pm$0.67 &  85.73$\pm$0.35 &  65.14$\pm$0.03 &  64.47$\pm$2.27 &  46.05$\pm$1.16 &  50.37$\pm$0.09 &  61.59$\pm$0.05 &           68.81$\pm$0.48 \\
dir-sage($\alpha$=0.5) &   94.14$\pm$0.65 &  85.81$\pm$1.18 &  65.06$\pm$0.28 &  60.22$\pm$1.16 &  43.29$\pm$1.04 &  55.76$\pm$0.10 &  70.26$\pm$0.14 &           79.10$\pm$0.19 \\
\hdashline
gat             &   94.53$\pm$0.48 &  86.44$\pm$1.45 &  69.60$\pm$0.01 &  66.82$\pm$2.56 &  56.49$\pm$1.73 &  45.30$\pm$0.23 &          OOM &           49.18$\pm$1.35 \\
dir-gat($\alpha$=0.0)  &   94.48$\pm$0.52 &  86.13$\pm$1.58 &  52.57$\pm$0.05 &  40.44$\pm$3.11 &  28.28$\pm$1.02 &  46.01$\pm$0.06 &          OOM &           53.58$\pm$2.51 \\
dir-gat($\alpha$=1.0)  &   94.08$\pm$0.69 &  86.21$\pm$1.40 &  66.50$\pm$0.16 &  71.40$\pm$1.63 &  67.53$\pm$1.04 &  51.58$\pm$0.19 &          OOM &           56.24$\pm$0.41 \\
dir-gat($\alpha$=0.5)  &   94.12$\pm$0.49 &  86.05$\pm$1.71 &  66.44$\pm$0.41 &  55.57$\pm$1.02 &  37.75$\pm$1.24 &  54.47$\pm$0.14 &          OOM &           72.25$\pm$0.04 \\

\bottomrule
\end{tabular}}
\end{sc}
\end{small}
\end{center}
\caption{Ablation study comparing base MPNNs on the undirected graph versus their \oursacro{} extensions on the directed graph. We conducted experiments with $\alpha=0$ (only in-edges), $\alpha=1$ (only out-edges), and $\alpha=0.5$ (both in- and out-edges, but with different weight matrices). For homophilic datasets (to the left of the dashed line), incorporating directionality does not significantly enhance or may slightly impair performance. However, for heterophilic datasets (to the right of the dashed line), the inclusion of directionality substantially improves accuracy.}
\label{tab:full_direction_ablation}
\end{table*}

\section{Additional Results}

\begin{figure}[t!]
\centering
\label{fig:synthetic_gcn_gat}
\begin{minipage}[t]{0.5\linewidth}
  \centering
  \includegraphics[width=1\linewidth]{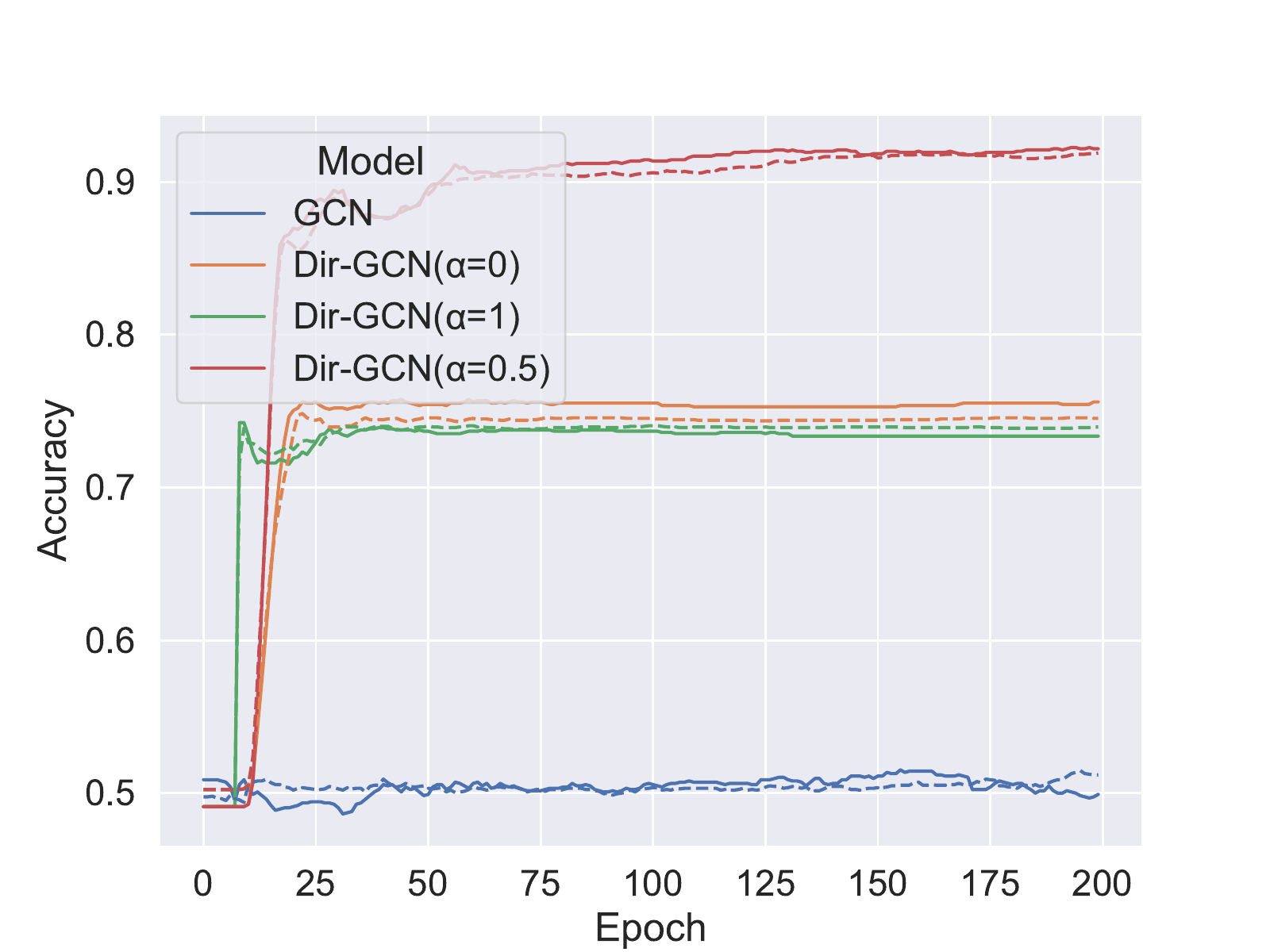}
\end{minipage}%
\begin{minipage}[t]{.5\linewidth}
  \centering
  \includegraphics[width=1\linewidth]{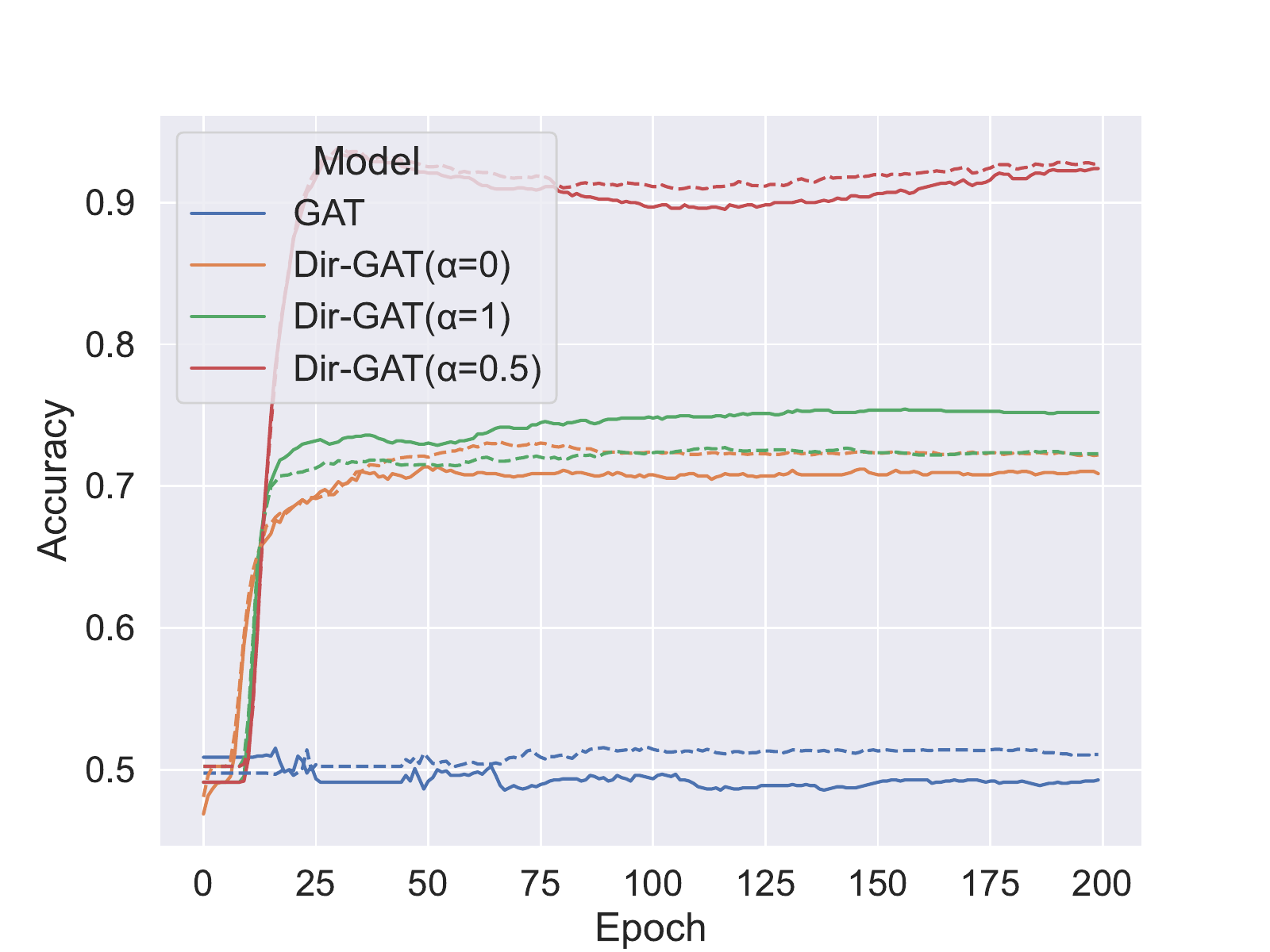}
\end{minipage}
\caption{Validation accuracy (solid lines) and training accuracy (dashed lines) of GCN (left) and GAT (right), as well as their respective extensions using our \oursacro{} framework, on a synthetic task which requires directionality information in order to be solved.}
  \label{fig:synthetic-gcn-gat}
\end{figure}

\subsection{Synthetic Experiment}
We also evaluate GCN and GAT (and their \oursacro{} extensions) on the synthetic task outlined in~\cref{sec:appendix_synthetic_experiment}. Similarly to what observed for GraphSage (\cref{fig:synthetic_task}), the \oursacro{} variant using both directions ($\alpha=0.5$) significantly outperforms the other configurations, despite not reaching 100\% accuracy. The undirected models are akin to a random classifier, whereas the models using only one directions obtain between 70\% and 75\% of accuracy.

\subsection{Ablation Study on Using Directionality}
Table \ref{tab:full_direction_ablation} compares using the undirected graph vs using the directed graph with our framework with different $\alpha$. We observe that only on Chameleon and Squirrel, using only one direction of the edges, in particular the out direction, performs better than using both direction. Moreover, for these two datasets, the gap between the two directions ($\alpha=0$ vs $\alpha=1$) is extremely large (more than 40\% absolute accuracy). We find that this is likely due to the high number of nodes with zero in neighbors, as reported in Table \ref{tab:zero_degrees}. Chameleon and Squirrel have respectively about 62\% and 57\% of nodes with no in-edges: when propagating only over in edges, these nodes would get zero features. 
We observe a similar trend for other datasets, where $\alpha=1$ performs generally better than $\alpha=0$, in line with the fact that all these datasets have more nodes with zero in edges than out edges (Table \ref{tab:zero_degrees}). In general, using both in- and out- edges is the preferred solution.

\begin{figure*}
    \centering
    \includegraphics[width=.5\textwidth]{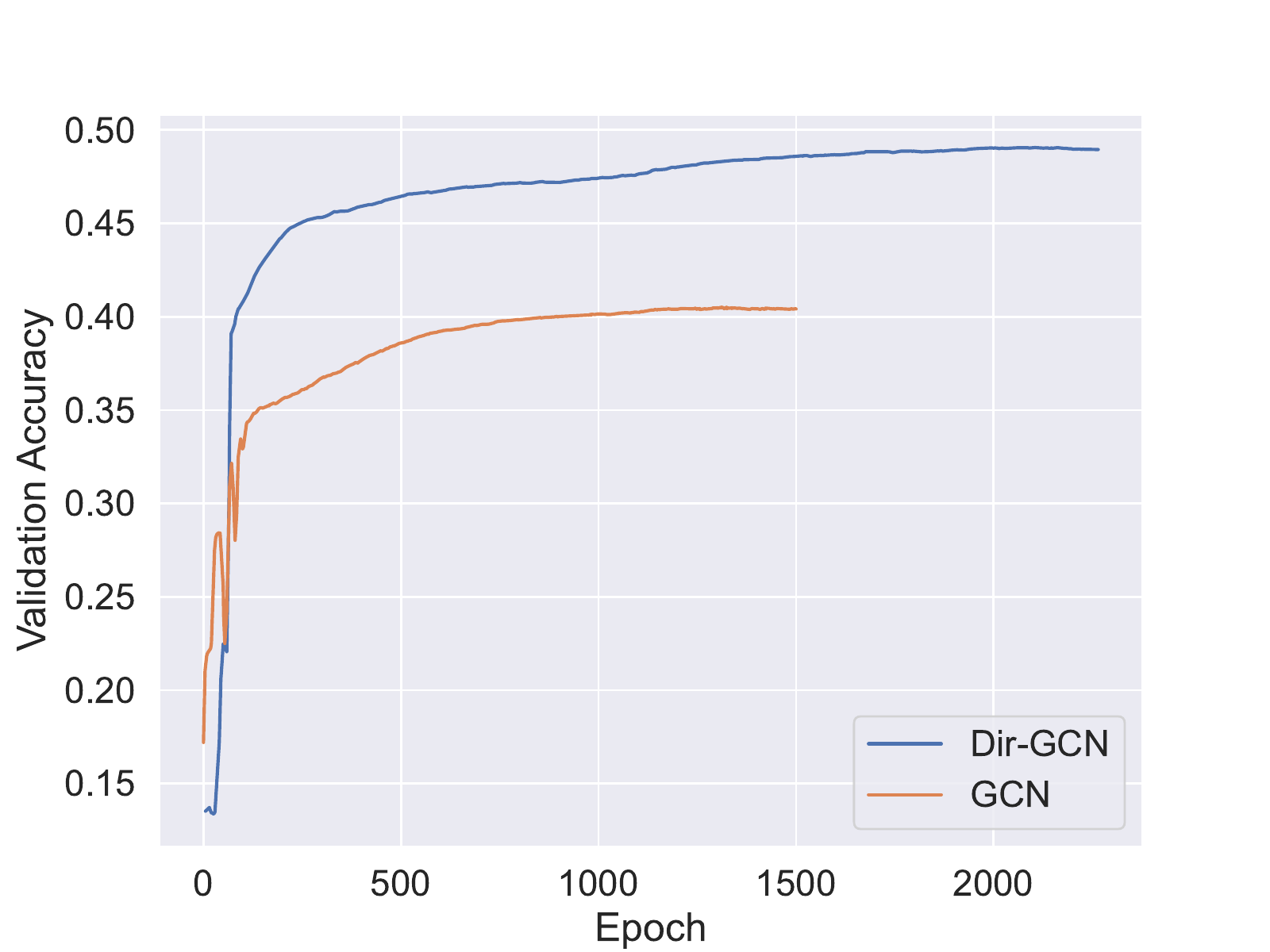}
    \caption{Performance of GCN (on the undirected version of the graph) and Dir-GCN on Arxiv-Year when using only one layer. Remarkably, directionality yields significant benefits, even in the absence of access to the homophilic directed 2-hop. This is largely attributable to the harmless heterophily exhibited by the directed graph.}
    \label{fig:arxiv1-layer}
\end{figure*}

\subsection{Ablation Study on Using a Single Layer}
In~\cref{sec:directed_heterophily} we discuss how Arxiv-Year and Snap-Patents exhibit harmless heterophily when treating as directed. This suggest that even a 1-layer Dir-GNN model should be able to perform much better of its undirected counterpart, despite not being able to access the much more homophilic 2-hop. We verify this empirically by comparing a 1-layer GCN (on the undirected version of the graph) with a 1-layer Dir-GCN on Arxiv-Year. \cref{fig:arxiv1-layer} presents the results, showing that Dir-GCN does indeed significantly outperform GCN. 

\begin{figure*}
    \centering
    \includegraphics[width=0.5\textwidth]{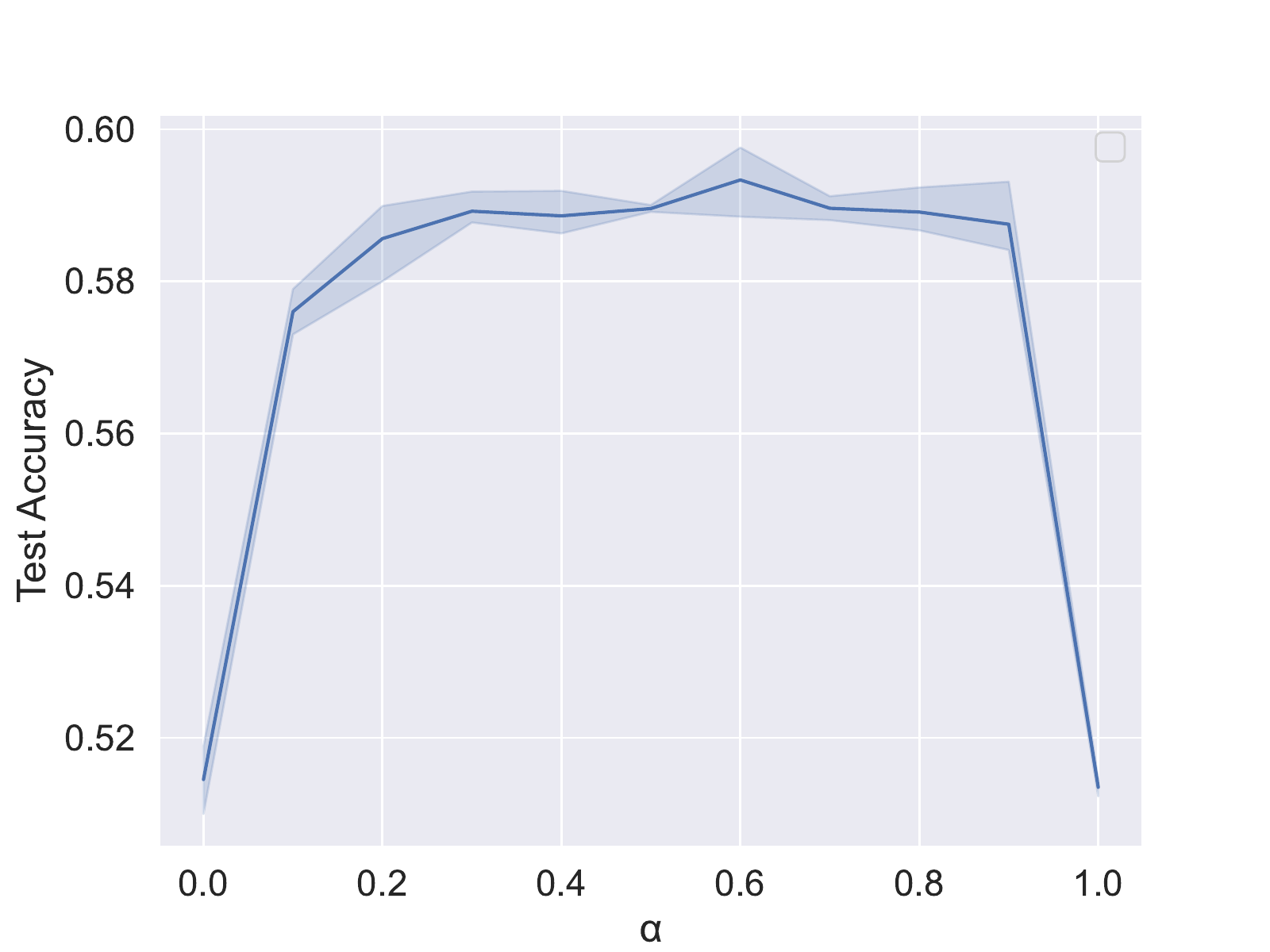}
    \caption{\oursacro{} test accuracy on Arxiv-Year for different values of the hyperparameter $\alpha$.}
    \label{fig:alpha_ablation}
\end{figure*}

\subsection{Ablation Study on Different Values of \texorpdfstring{$\alpha$}{alpha}}
We train \oursacro{} models on Arxiv-Year with varying values of $\alpha$, using the hyperparameters outlined in~\cref{sec:appendix_ablation_hyperparams}. \cref{fig:alpha_ablation} presents the results: while a large drop is observed for $\alpha=0$ and $\alpha=1$, i.e. propagating messages only along one direction, the results for other values of $\alpha$ are largely similar. 

\subsection{Runtime Analysis}
We assess the runtime of Dir-GNN by comparing it with its undirected counterpart. Specifically, in \cref{fig:wall_time}, we illustrate a comparison between the validation accuracies of Dir-Sage and Sage over time, measured in seconds. Both models were executed on a single NVIDIA V100 GPU  with 16GB of memory. Notably, despite employing two separate weight matrices—which enhances its performance—Dir-Sage exhibits only a marginal increase in runtime. The respective runtimes per epoch are $0.71 \pm 0.59$ seconds for Dir-Sage and $0.57 \pm 0.48$ seconds for Sage.

\begin{figure*}
    \centering
    \includegraphics[width=0.5\textwidth]{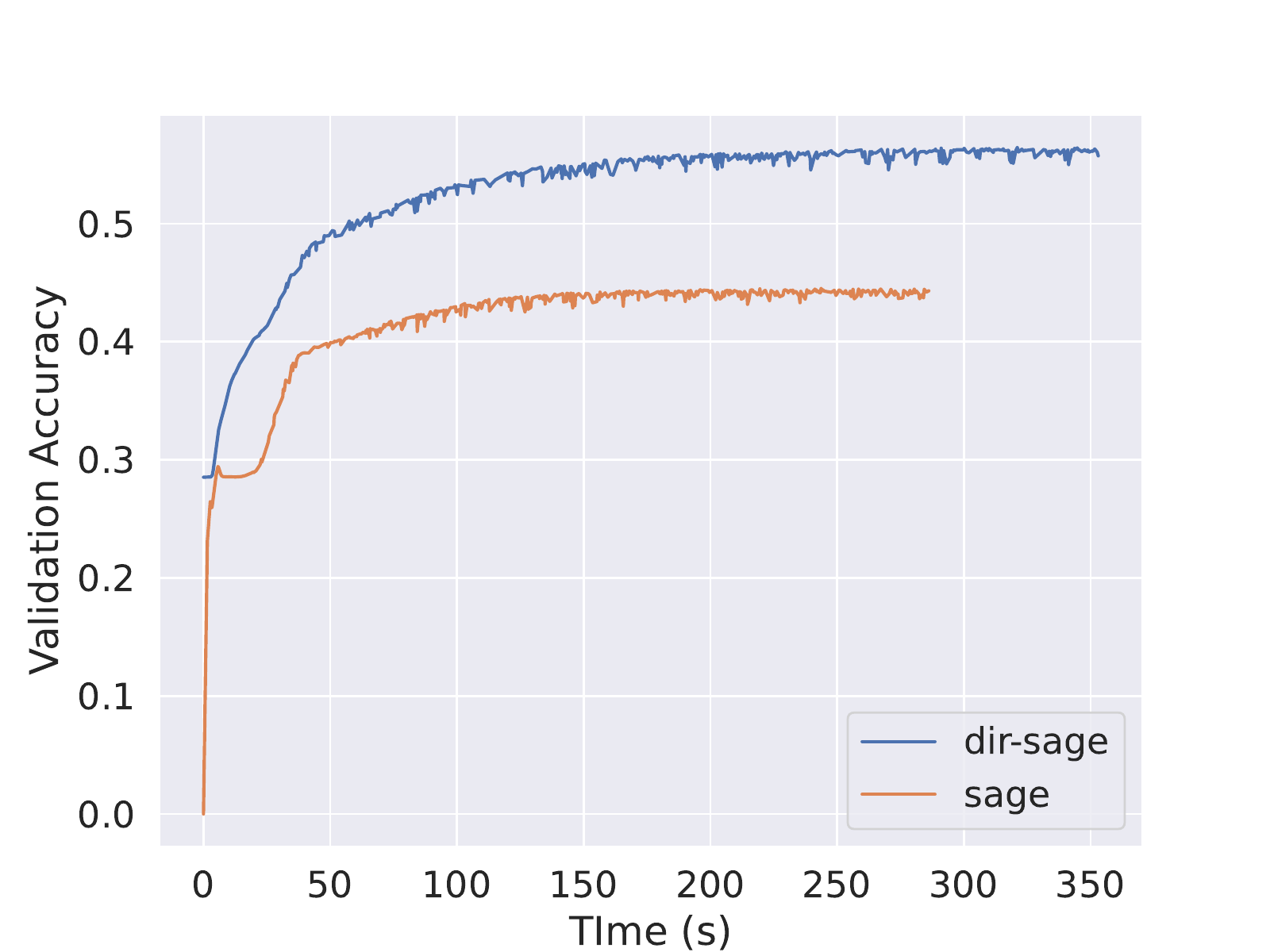}
    \caption{Validation Accuracy of Dir-Sage and Sage, displayed as a function of time (seconds).}
    \label{fig:wall_time}
\end{figure*}








\end{document}